\def\eqref#1{equation~\ref{#1}}
\def\1{\bm{1}}
\def\va{{\bm{a}}}
\def\vb{{\bm{b}}}
\def\ve{{\bm{e}}}
\def\vf{{\bm{f}}}
\def\vg{{\bm{g}}}
\def\vh{{\bm{h}}}
\def\vs{{\bm{s}}}
\def\vu{{\bm{u}}}
\def\vv{{\bm{v}}}
\def\vw{{\bm{w}}}
\def\vx{{\bm{x}}}
\def\vz{{\bm{z}}}
\def\mA{{\bm{A}}}
\def\mB{{\bm{B}}}
\def\mD{{\bm{D}}}
\def\mG{{\bm{G}}}
\def\mH{{\bm{H}}}
\def\mM{{\bm{M}}}
\def\mS{{\bm{S}}}
\def\mU{{\bm{U}}}
\def\mV{{\bm{V}}}
\def\mW{{\bm{W}}}
\def\mX{{\bm{X}}}
\DeclareMathAlphabet{\mathsfit}{\encodingdefault}{\sfdefault}{m}{sl}
\SetMathAlphabet{\mathsfit}{bold}{\encodingdefault}{\sfdefault}{bx}{n}
\newcommand{\tens}[1]{\bm{\mathsfit{#1}}}
\def\tA{{\tens{A}}}
\def\tM{{\tens{M}}}
\newcommand{\R}{\mathbb{R}}
\theoremstyle{plain}
\newtheorem{theorem}{Theorem}[section]
\newtheorem{proposition}[theorem]{Proposition}
\newtheorem{lemma}[theorem]{Lemma}
\newtheorem{corollary}[theorem]{Corollary}
\newtheorem{informal}{Theorem}
\theoremstyle{definition}
\newtheorem{definition}[theorem]{Definition}
\newtheorem{assumption}[theorem]{Assumption}
\theoremstyle{remark}
\newtheorem{remark}[theorem]{Remark}
\newcommand{\C}{\mathbb{C}}
\newcommand{\mbeta}{{{}\widehat{\boldsymbol{\beta}}^\infty}}
\newcommand{\vbeta}{{\boldsymbol{\beta}}}
\newcommand{\mbetdag}{{{}\widehat{\vbeta}^{\infty}}^\dagger}
\newcommand{\vrho}{{\boldsymbol{\rho}}}
\newcommand{\mbet}{\boldsymbol{{\beta}}}
\newcommand{\mwl}[1]{{{}\boldsymbol{\widehat{w}}_{#1}^\infty}}
\newcommand{\mwld}[1]{{{{}\boldsymbol{\widehat{w}}_{#1}^\infty}^\dagger}}
\newcommand{\myz}{{{}\widehat{\boldsymbol{z}}}}
\newcommand{\myW}{\boldsymbol{W}}
\newcommand{\myx}{{{}\widehat{\boldsymbol{x}}}}
\newcommand{\cF}{\mathcal{F}}
\newcommand{\cP}{\mathcal{P}}
\newcommand{\on}{\operatorname}
\DeclareMathOperator{\tr}{tr}
\newcommand{\mathify}[1]{\ifmmode{#1}\else\mbox{$#1$}\fi}
\newcommand{\abs}[1]{\mathify{\left| #1 \right|}}
\newcommand\aaa{\cellcolor{cyan!90}}
\newcommand\bbb{\cellcolor{orange!90}}
\newcommand\ccc{\cellcolor{green!90}}
\icmltitlerunning{Implicit Bias of Linear Equivariant Networks}
\begin{document}

\twocolumn[
\icmltitle{Implicit Bias of Linear Equivariant Networks}




\begin{icmlauthorlist}
\icmlauthor{Hannah Lawrence}{MITEECS}
\icmlauthor{Kristian Georgiev}{MITEECS}
\icmlauthor{Andrew Dienes}{MITEECS}
\icmlauthor{Bobak T. Kiani}{MITEECS}
\end{icmlauthorlist}

\icmlaffiliation{MITEECS}{Department of Electrical Engineering and Computer Science, Massachusetts Institute of Technology, Cambridge, MA 02139, USA}

\icmlcorrespondingauthor{Bobak T. Kiani}{bkiani@mit.edu}

\icmlkeywords{Machine Learning, ICML}

\vskip 0.3in
]



\printAffiliationsAndNotice{}  

\begin{abstract}
Group equivariant convolutional neural networks (G-CNNs) are generalizations of convolutional neural networks (CNNs) which excel in a wide range of technical applications by explicitly encoding symmetries, such as rotations and permutations, in their architectures. Although the success of G-CNNs is driven by their \emph{explicit} symmetry bias, a recent line of work has proposed that the \emph{implicit} bias of training algorithms on particular architectures is key to understanding generalization for overparameterized neural nets. In this context, we show that $L$-layer full-width linear G-CNNs trained via gradient descent for binary classification converge to solutions with low-rank Fourier matrix coefficients, regularized by the $2/L$-Schatten matrix norm. Our work strictly generalizes previous analysis on the implicit bias of linear CNNs to linear G-CNNs over all finite groups, including the challenging setting of non-commutative groups (such as permutations), as well as band-limited G-CNNs over infinite groups. We validate our theorems via experiments on a variety of groups, and empirically explore more realistic nonlinear networks, which locally capture similar regularization patterns. Finally, we provide intuitive interpretations of our Fourier space implicit regularization results in real space via uncertainty principles. 
\end{abstract}


\section{Introduction}\label{sec:introduction}

Modern deep learning algorithms typically have many more parameters than data points, and their ability to achieve good generalization in this overparameterized setting is largely unexplained by current theory. Classic generalization bounds, which bound the generalization error when models are not overly ``complex," are vacuous for neural networks that can perfectly fit random training labels \citep{zhang2017generalization}. More recent work analyzes the complexity of deep learning algorithms by instead characterizing the properties of the functions 
they output. Notably, prior work has shown that training via gradient descent implicitly regularizes towards certain hypothesis classes with low complexity, which may generalize better as a result. For example, in underdetermined least squares regression, gradient descent converges to the $\ell_2$-norm minimizer, while a pointwise-square reparametrization converges to the $\ell_1$-norm minimizer \citep{gunasekar2018geometry}. For separable linear regression, \citet{soudry2018separable} proved that the learned predictor under gradient descent converges in direction to the max-margin solution. Such phenomena are consistent with certain linear neural networks, \textit{e.g.,} \citet{lyu2020gradient} extended this max-margin result to gradient descent on any homogeneous neural network, and \citet{gunasekar2018linearconv} showed that learned linear fully-connected and convolutional networks implicitly regularize the $\ell_2$ norm and a depth-dependent norm in Fourier space, respectively.


From a more applied perspective, a large body of work imposes structured inductive biases on deep learning algorithms to exploit symmetry patterns~\citep{kondor2007novel, reisert2008group, group-equivariant-taco}.
One prominent method parameterizes models over functions that are \textit{equivariant} with respect to a symmetry group (\textit{i.e.,} outputs transform predictably in response to input transformation). In fact, \citet{kondor2018generalization} showed that any group equivariant network can be expressed as a series of group convolutional layers interwoven with pointwise nonlinearities, demonstrating that group convolutional neural networks (G-CNNs) are the most general family of equivariant networks.

\begin{figure*}[t!]
    \captionsetup[subfigure]{aboveskip=-1pt,belowskip=-3pt}  
  \begin{subfigure}{0.49 \textwidth}
    \includegraphics[width=\linewidth]{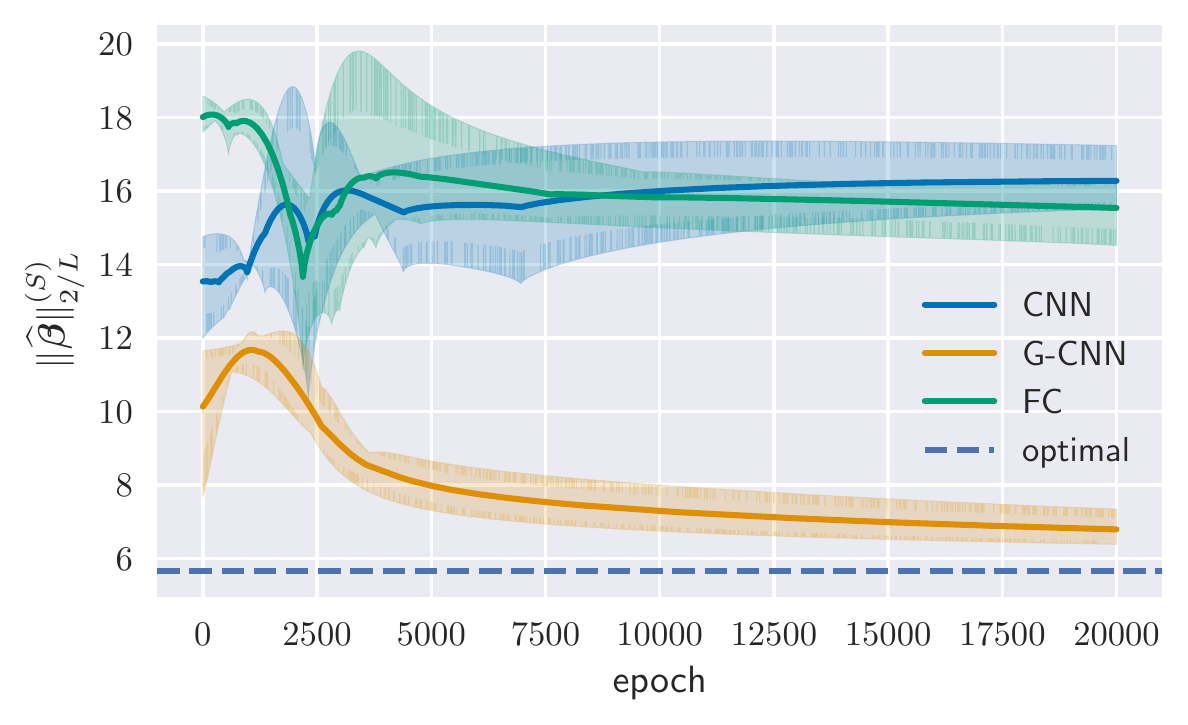}
    \caption{Fourier space norm of network linearization $\vbeta$} \label{fig:d8_gaussian_2_a}
  \end{subfigure}%
  \hspace*{\fill}   
  \begin{subfigure}{0.49\textwidth}
    \includegraphics[width=\linewidth]{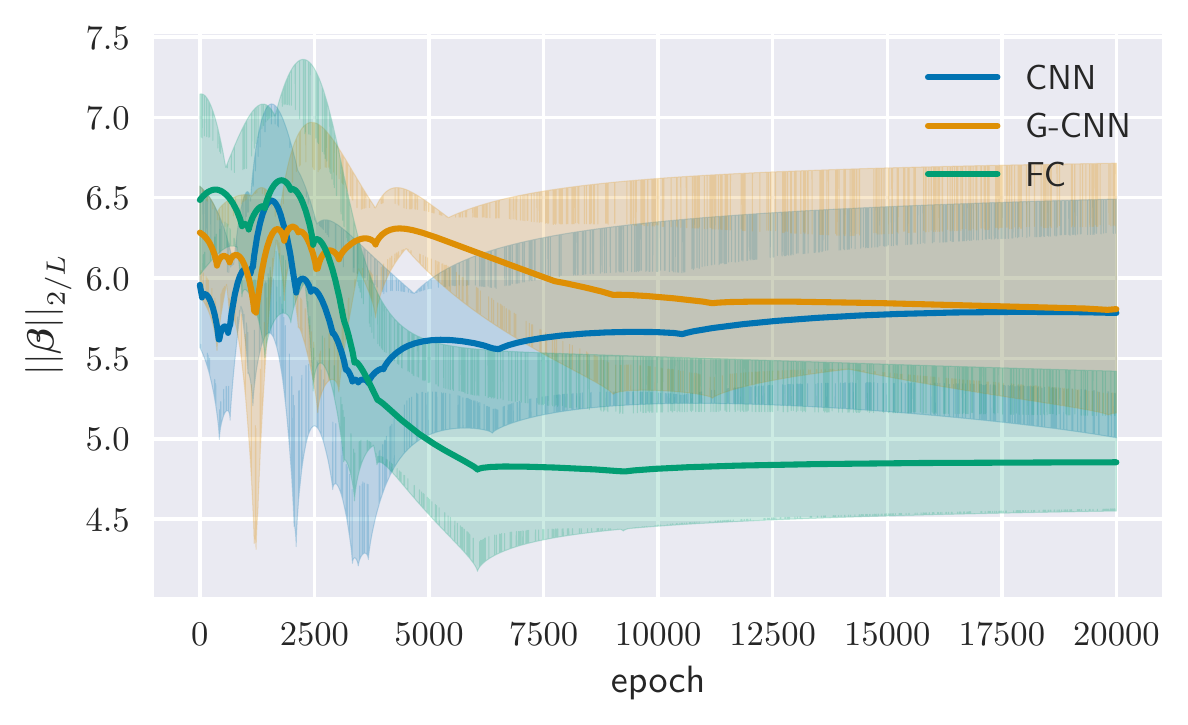}
    \caption{Real space norm of network linearization $\vbeta$} \label{fig:d8_gaussian_2_b}
  \end{subfigure}%
  \hspace*{\fill}
\caption{Training via gradient descent of linear G-CNNs (with linearization $\vbeta$) implicitly biases towards sparse singular values for Fourier matrix coefficients of $\boldsymbol{\beta}$ (see \autoref{thm:main_informal}). \autoref{fig:d8_gaussian_2_a} traces the $D_8$ Fourier space sparsity over the course of training three architectures in an overparameterized classification task, beginning from epoch 1. The G-CNN converges to a Fourier-sparse linearization, in contrast with fully-connected and convolutional networks. The horizontal line shows the minimum Fourier Schatten norm among all interpolating predictors, obtained as the solution of a convex relaxation; since the G-CNN norm converges toward this value, this verifies our main statement of convergence to a local minimum. \autoref{fig:d8_gaussian_2_b} illustrates the uncertainty principles, discussed in \autoref{sec:experiments} and illustrated more explicitly in \autoref{fig:d60_viz_linearization} that sparseness in (group) Fourier space necessitates being ``dense" in real space, and vice versa.} 

\label{fig:d8_gaussian_2}
\end{figure*}

\textbf{Our Contributions:} The explicit inductive bias of G-CNNs is the main reason for their usage. Yet, to the best of our knowledge, the implicit bias imposed by equivariant architectures has not been explored. Here, we greatly generalize the results of ~\citet{yun2020unifying} and ~\citet{gunasekar2018linearconv} to linear G-CNNs whose hidden layers perform group equivariant transformations. We show, surprisingly, that $L$-layer G-CNNs are implicitly regularized by the $2/L$-Schatten norm, which is the $2/L$ norm of a matrix's singular values, over the irreducible representations in the Fourier basis. As a result, convergence is biased towards sparse solutions in the Fourier regime, as summarized in \autoref{thm:main_informal} and illustrated in \autoref{fig:d8_gaussian_2_a}, as well as in further experiments in \autoref{sec:experiments}.

\textbf{New Technical Ingredients: } Our primary technical contribution is to generalize the proof technique of \citet{gunasekar2018linearconv} by realizing it in a more general setting, translating it to non-abelian groups using the language of representation theory. Since convolutions in real space correspond to (non-commutative) matrix multiplications, rather than scalar multiplications, in the appropriate group Fourier space, this substantially complicates analysis of KKT conditions. Nonetheless, we use the framework of non-commutative Fourier and convex analysis to prove that a particular function on the singular values of the Fourier-space linearization is being implicitly optimized. 



\begin{informal}[main result; informal]\label{thm:main_informal}
Let $\on{NN}_{L}(\cdot)$ denote an $L$ layer linear group convolutional neural network, in which each hidden layer performs group cross-correlation over the group $G$ with a full-width kernel, and the final layer is a fully connected layer. When learning linearly separable data $\{\vx_i, y_i\}_{i=1}^n$ using the exponential loss function, the network converges in direction to a linear function $\on{NN}_L(\vx) \propto \boldsymbol{\beta}^T \vx $, where $\boldsymbol{\beta}$ is given by a stationary point of the following minimization problem:
\begin{equation}
    \min_{\boldsymbol{\beta}} \left\| \widehat{\boldsymbol{\beta}} \right\|_{2/L}^{(S)} \; \; \; \text{s.t.} \; \; \; y_i \vx_i^T \boldsymbol{\beta} \geq 1 \; \; \forall i \in [n]
\end{equation}
Here, $\left\| \widehat{\boldsymbol{\beta}} \right\|_{2/L}^{(S)}$ denotes the $2/L$-Schatten norm of the matrix Fourier transform of $\boldsymbol{\beta}$ (see \autoref{def:group_fourier_transform}) equivalent to 
\begin{equation}
    \left\| \widehat{\boldsymbol{\beta}} \right\|_{2/L}^{(S)} = \left[ \sum_{\rho \in {}\widehat{G}} d_\rho \left( \left\| {}\widehat{\boldsymbol{\beta}}(\rho)  \right\|^{(S)}_{2/L} \right)^{2/L} \right]^{L/2},
\end{equation}
where ${}\widehat{G}$ is a complete set of unitary irreducible representations of $G$ and $d_\rho$ is the dimension of irreducible representation $\rho$.
\end{informal}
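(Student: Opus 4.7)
The plan is to adapt the framework of \citet{gunasekar2018linearconv} for linear CNNs to arbitrary finite groups via non-commutative harmonic analysis. First, I would invoke the Lyu--Li (2020) theorem on homogeneous networks: since an $L$-layer linear G-CNN is $L$-homogeneous in its weights, gradient descent on the exponential loss sends the normalized parameters to a KKT point of the parameter-space margin problem
$$\min \; \tfrac{1}{2}\sum_{\ell=1}^{L} \|\boldsymbol{W}_\ell\|_F^2 \quad \text{s.t.} \quad y_i \langle \boldsymbol{\beta}(\boldsymbol{W}), \vx_i \rangle \geq 1 \;\; \forall i,$$
where $\boldsymbol{\beta}(\boldsymbol{W})$ is the polynomial-of-weights linearization of the network. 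This reduction exists independently of whether the underlying group is abelian.

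Second, I would pass to the group Fourier domain. Because every hidden layer is a group cross-correlation, its action on any signal is pointwise (per irrep $\rho$) left-multiplication by a matrix-valued Fourier coefficient $\widehat{\boldsymbol{W}}_\ell(\rho) \in \mathbb{C}^{d_\rho \times d_\rho}$, so the end-to-end linearization factorizes irrep-by-irrep as
$$\widehat{\boldsymbol{\beta}}(\rho) \;=\; \widehat{\boldsymbol{W}}_L(\rho)\,\widehat{\boldsymbol{W}}_{L-1}(\rho)\,\cdots\,\widehat{\boldsymbol{W}}_1(\rho).$$
Plancherel yields $\sum_\ell \|\boldsymbol{W}_\ell\|_F^2 = \sum_\rho d_\rho \sum_\ell \|\widehat{\boldsymbol{W}}_\ell(\rho)\|_F^2$, decoupling the objective across irreps. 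I would then invoke the variational identity
$$\min_{\widehat{\boldsymbol{W}}_L(\rho) \cdots \widehat{\boldsymbol{W}}_1(\rho) = \widehat{\boldsymbol{\beta}}(\rho)} \; \sum_{\ell=1}^{L} \|\widehat{\boldsymbol{W}}_\ell(\rho)\|_F^2 \;=\; L \sum_i \sigma_i\bigl(\widehat{\boldsymbol{\beta}}(\rho)\bigr)^{2/L},$$
which follows from AM--GM applied singular value by singular value and is attained by giving all factors a common pair of left/right singular vectors and splitting each singular value equally. Summing over $\rho$ with weights $d_\rho$ identifies the parameter-space objective, up to a constant factor $L$, with the $2/L$-Schatten expression in the theorem statement.

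Finally, I would verify the KKT correspondence: stationarity in the lifted parameter-space problem, transported to Fourier space layer-by-layer, must imply stationarity for the induced Schatten-norm problem over $\widehat{\boldsymbol{\beta}}$. The main obstacle is precisely this step in the non-abelian case: unlike for standard CNNs, each $\widehat{\boldsymbol{W}}_\ell(\rho)$ is a genuinely non-commutative $d_\rho \times d_\rho$ matrix, the primal stationarity equations involve one-sided products of the remaining factors, and no a priori common eigenbasis is available. I plan to handle this by taking the polar decomposition of each factor and showing, directly from the stationarity equations, that at any KKT point successive layers must share aligned singular directions; this collapses the per-irrep problem to a commuting, essentially diagonal one in which the subdifferential inclusion for $\|\cdot\|_{2/L}^{(S)}$ can be read off from the dual multipliers. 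A secondary subtlety is that $\|\cdot\|_{2/L}^{(S)}$ is non-convex for $L>2$, so ``stationary point'' must be interpreted in the first-order Clarke/KKT sense, which matches the guarantee provided by Lyu--Li on the primal side.
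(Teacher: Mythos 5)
Your skeleton matches the paper's: reduce via homogeneity to the parameter-space max-margin problem (the paper uses Theorem 4 of \citet{gunasekar2018linearconv}, stated as \autoref{thm:homog}, rather than \citet{lyu2020gradient}, but this is immaterial), pass to the group Fourier domain where the end-to-end linearization factorizes per irrep, and then connect stationarity of the lifted problem to stationarity of the $2/L$-Schatten problem. Where you genuinely diverge is the middle step: you propose the value-function route, i.e.\ the variational identity $\min\{\sum_\ell \|\widehat{\boldsymbol{W}}_\ell(\rho)\|_F^2 : \prod_\ell \widehat{\boldsymbol{W}}_\ell(\rho) = \widehat{\vbeta}(\rho)\} = L\sum_i \sigma_i(\widehat{\vbeta}(\rho))^{2/L}$ followed by a singular-direction alignment argument, whereas the paper never invokes this identity. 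Instead it computes the Clarke subdifferential of the $2/L$-Schatten quasi-norm from scratch (\autoref{lem:subdifferential_p_lessthan_1}, characterizing $\partial^o\|\mA\|_p^{(S)}$ via $\mA^\dagger\mG = \|\mA\|_p^{(S)-p}\sqrt{\mA^\dagger\mA}^p$ and its transpose analogue) and then uses the KKT recurrences to show directly that the dual certificate $\widehat{\vz}=\sum_n\tilde\alpha_n y_n\widehat{\vx}_n$ satisfies $\widehat{\vbeta}\widehat{\vbeta}^\dagger = \gamma^L(\widehat{\vz}\widehat{\vbeta}^\dagger)^L$ and $\widehat{\vbeta}^\dagger\widehat{\vbeta} = \gamma^L(\widehat{\vbeta}^\dagger\widehat{\vz})^L$, which is exactly membership in that subdifferential. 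Your route buys a cleaner conceptual link to the known factorized form of Schatten quasi-norms; the paper's buys a self-contained verification that never needs the (nontrivial) lower bound of the variational identity.

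Two soft spots in your version deserve flagging. First, your justification of the variational identity by ``AM--GM applied singular value by singular value'' only delivers the upper bound (the explicit aligned construction); the lower bound requires that the singular values of a product are log-majorized by the products of singular values, plus a Schur-convexity argument --- naive per-singular-value AM--GM does not apply because $\sigma_i(\mA\mB)\neq\sigma_i(\mA)\sigma_i(\mB)$. Second, and more importantly, equality of the value function with the Schatten quasi-norm transports \emph{global minimizers}, not mere first-order stationary points, so the entire burden falls on your final alignment step. That step is salvageable: combining the paper's \autoref{eq:wl_wl} with \autoref{eq:wl_reversed} yields the balancedness relation $\widehat{\vw}_\ell\widehat{\vw}_\ell^\dagger = \widehat{\vw}_{\ell+1}^\dagger\widehat{\vw}_{\ell+1}$ at any KKT point, which does force consecutive layers to share singular directions and lets you diagonalize as you describe. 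But you should be explicit that even after diagonalizing you must still exhibit the dual multipliers inside the \emph{Clarke} subdifferential of the non-convex quasi-norm (the set is not the naive gradient formula on the low-rank part), and that the $L=2$ case needs the additional bound $\|\widehat{\vz}\|_\infty^{(S)}\leq 1$ (the paper's \autoref{assumption:L_2}), which your sketch omits.
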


We note that both sparsity (for vectors) and low-rankness (for matrices) are desirable properties for many applications, not least because such predictors are efficient to store and manipulate, thus potentially expanding the scope of application of G-CNNs to areas where sparsity and low-rankness are explicit desiderata. 
Connecting our findings to research on uncertainty theorems \citep{wigderson2021uncertainty}, we also show that the implicit regularization towards sparseness (or low rank irreducible representations) in the Fourier regime necessarily implies that solutions in the real regime are ``dense," as illustrated in \autoref{fig:d8_gaussian_2_b}. These results provide a more intuitive and practical perspective into the inductive bias of G-CNNs and the types of functions that they learn.

We proceed as follows. \Autoref{sec:related_work} discusses related works and their relation to our contributions. In \Autoref{sec:notation}, we define notation.  \autoref{sec:grouptheorybackground} provides a basic background in the group theory and Fourier analysis necessary to understand our results, with a more complete exposition in \autoref{app:gfourier}. Our main results are stated in \autoref{sec:main_results}, with the main proof ideas for the abelian (or commutative) and non-abelian (or non-commutative) cases given in \autoref{subsec:abelian} and \autoref{subsec:nonabelian}, respectively (complete proofs can be found in \autoref{sec:proofs}). \autoref{sec:experiments} validates our theoretical results with synthetic experiments on a variety of groups, and exploratory experiments validating our theory on non-linear networks. Finally, we discuss these results and future questions in \autoref{sec:discussion}.

\section{Related Work}\label{sec:related_work}
Enforcing equivariance and symmetries via parameter sharing schemes was introduced in the group theoretic setting in~\citet{cohen2016gcnn} and~\citet{gens2014deep}. Despite considerable interest in equivariant learning, no works to our knowledge have explored the implicit regularization of gradient descent on equivariant convolutional neural networks. 
We show that the tensor formulation of neural networks in~\citet{yun2020unifying} and the proofs in~\citet{gunasekar2018linearconv} encompass G-CNNs for which the underlying group is cyclic, and we naturally extend their results to G-CNNs over any commutative group (see \autoref{subsec:abelian}).
However, these works do not cover the case of convolutions with respect to non-commutative groups, such as three-dimensional rotations and permutations, which incidentally include some of the most compelling applications of group equivariance in practice~\citep{zaheer2017deep, anderson2019cormorant, Esteves_2018_ECCV}. 
As such, articulating the implicit bias in the more general non-abelian case is important for understanding many of the current group equivariant architectures~\citep{zaheer2017deep, kondor2018clebsch, Esteves_2018_ECCV, cesa2019e2cnn}. 
Non-abelian convolutions require more structure to theoretically analyze compared to abelian convolutions: the former are merely pointwise multiplications in Fourier space, whereas the latter are \emph{matrix} multiplications between irreducible representations, and therefore cannot be expressed in the tensor language of \citet{yun2020unifying}. Instead, we build on the optimization tools and comparable convergence assumptions of \citet{gunasekar2018linearconv} to explicitly characterize the stationary points of convergence for non-abelian G-CNNs.

We also note that our results are consistent with those of \citet{razinimplicit2020} showing that implicit generalization is often captured by measures of complexity which are quasi-norms, such as tensor rank. Our results prove that linear G-CNNs are biased towards low rank solutions in the Fourier regime, via regularization of the $2/L$-Schatten norms over Fourier matrix coefficients (also a quasi-norm). 
Lastly, there is a line of work focusing on understanding the expressivity~\citep{kondor2018generalization, cohen2019general_theory, yarotsky2021universal} and generalization~\citep{sannai2019improved, lyle2020benefits, bulusu2021generalization, elesedy2021provably} of equivariant  networks, but not specifically their implicit regularization.

To analyze bounded-width filters, which are more commonly used in practice, a recent work by \citet{jagadeesan2021bias} shows that the implicit regularization for an arbitrary filter width $K$ is unlikely to admit a closed-form solution. Separate from calculating the exact form of implicit regularization, there is a rich line of work that details the trade-offs between restricting 
a function in its real versus Fourier regimes via uncertainty principles \citep{meshulam1992uncertainty,wigderson2021uncertainty}. While the connection between uncertainty theorems and bounded-width convolutional neural networks has not been thoroughly explored,~\citet{caro2021local} and~\citet{nicola2021stability} highlight the importance of uncertainty principles for understanding the behaviour of modern CNNs.


\section{Notation}\label{sec:notation}
Throughout this text, we denote scalars in lowercase script ($a$), vectors in bold lowercase script ($\va$), matrices in either bold uppercase script ($\mA$) or lowercase script hat ($\widehat \va$) when vectors are transformed into the Fourier regime (see \autoref{def:group_fourier_transform}), and tensors in bold non-italic uppercase script ($\tA$). For $f$ a function with range in $\mathbb{C}$, we overload notation slightly and let $\overline f$ denote the function with an element-wise complex conjugate applied, i.e. $\overline f(x)=\overline{f(x)} $. If $f$ is defined on a group $G$, let $f^-(g)=f(g^{-1})$.
For a vector $\va \in \mathbb{C}^n$ or a matrix $\mA \in \mathbb{C}^{m \times n}$, we denote its conjugate transpose as $\va^\dagger$ and $\mA^\dagger$ respectively. 
We use $\langle\cdot, \cdot\rangle$ to denote the standard vector inner product between two vectors and $\langle\cdot,\cdot\rangle_M$ to denote the inner product between matrices defined as $\langle\mA,\mB\rangle_M=\tr(\mA \mB^\dagger)$. $\| \cdot \|_p$ denotes the vector $p$-norm ($p=2$ when subscript is hidden) and $\|\cdot\|^{(S)}_p$ denotes the $p$-Schatten norm\footnote{Despite our terminology, $p$-vector and $p$-Schatten norms are technically quasi-norms for $p<1$.} of a matrix (equivalent to the $p$-vector norm of the singular values of the matrix).

We denote groups by uppercase letters $G$, an irreducible representation (irrep) of a group $G$ by $\rho$ or $\rho_i$, and a complete set of irreps by $\widehat G$, so every unitary irrep $\rho$ is equivalent (up to isomorphism) to exactly one element of $\widehat{G}$. The dimension of a given irrep $\rho$ is $d_\rho$.

\section{Background in group theory and Group-Equivariant CNNs} \label{sec:grouptheorybackground}


In this study, we analyze linear G-CNNs in a binary classification setting, where hidden layers perform equivariant operations over a finite group $G$, and networks have no nonlinear activation function after their linear group operations. Inputs $\vx_i$ are vectors of dimension $|G|$ (\textit{i.e.,} vectorized group functions $\vx:G \to \mathbb{R}$), and targets $y_i$ are scalars taking the value of either $+1$ or $-1$. Hidden layers in our G-CNNs perform cross-correlation over a group $G$, defined as
\begin{equation}
    (\vg \star \vh)(u) = \sum_{v \in G} \vg(uv) \vh(v)
    \label{eq:cross-correlation}
\end{equation}
where $\vg, \vh:G \to \mathbb{R}$. Note that the above is equivariant to the left action of the group, \textit{i.e.,} if $w \in G$ and $\vg_w'(u) = \vg(wu)$, then $(\vg_w' \star \vh)(u) = (\vg \star \vh)(wu)$. The final layer of our G-CNN is a fully connected layer mapping vectors of length $|G|$ to scalars. We note that this final layer in general will not construct functions that are symmetric to the group operations, as strictly enforcing group invariance in this linear setting will result in trivial outputs (only scalings of the average value of the input). Nonetheless, this model still captures the composed convolutions of G-CNNs, and is similar in construction to many practical G-CNN models, whose earlier G-convolutions still capture useful high-level equivariant features. For instance, the spherical CNN of \citet{cohen2018spherical} also has a final fully connected layer.  


Analogous to the discrete Fourier transform, there exists a \emph{group} Fourier transform mapping a function into the Fourier basis over the irreps of $G$. 

\begin{figure}
    \centering
    \includegraphics[width=3.3in]{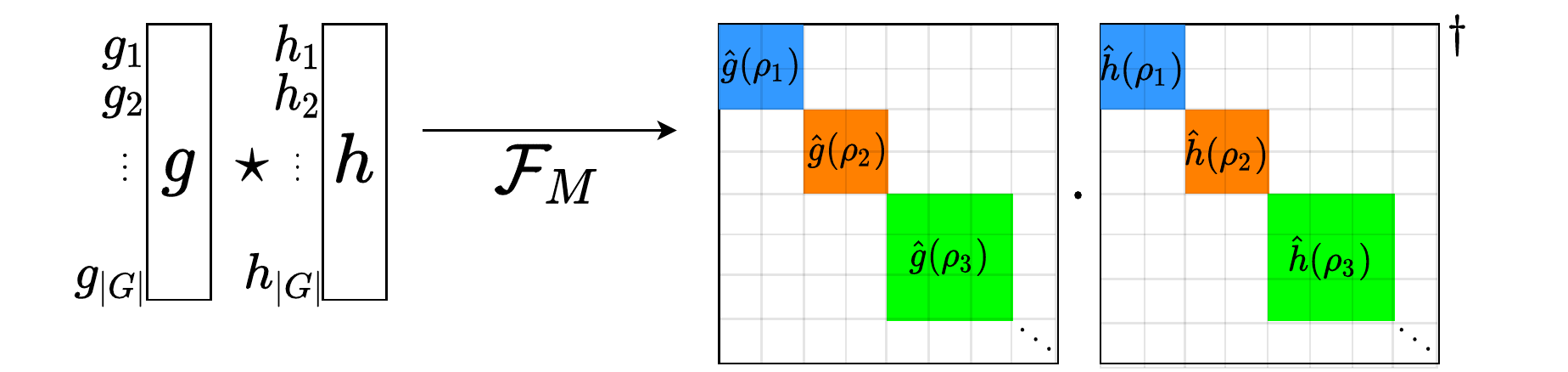}
    \caption{Cross-correlation of two functions over a group is equivalent to matrix multiplication over irreps (shown as blocks of a  larger matrix here) in Fourier space.}
    \label{fig:correlation}
\end{figure}

\begin{definition}[Group Fourier transform] \label{def:group_fourier_transform}
Let $f:G \to \mathbb{C}$. Given a fixed ordering of $G$, let $\ve_u$ be the standard basis vector in $\R^{|G|}$ that is $1$ at the location of $u$ and $0$ elsewhere. Then, $\vf = \sum_{u \in G} f(u) \ve_u$ is the vectorized function $\vf$. Given ${}\widehat{G}$ a complete set of unitary irreps of $G$, let $\rho \in {}\widehat{G}$ be a given irrep of dimension $d_\rho$, $\rho:G \longrightarrow \on{GL}\left(d_\rho,\,\mathbb{C}\right)$\footnote{Note that for an abelian group, $d_\rho = 1 \; \forall \rho$. For standard Fourier analysis over the cyclic group, each $\rho$ is a complex sinusoid at some frequency.}. The group Fourier transform of $f$, $\widehat{f}: \widehat{G} \rightarrow \mathbb{C}$ at a representation $\rho$ is defined as \citep{terras1999fourier} \begin{equation}
    \widehat{f}(\rho) = \sum_{u \in G} f(u) \rho(u).
\end{equation}
By choosing a fixed ordering of $\widehat{G}$, one can similarly construct $\widehat{\vf}$ as a block-diagonal matrix version of $\widehat{f}$ (as in \autoref{fig:correlation}).
We define $\cF_M$ to be the \emph{matrix} Fourier transform that takes $\vf$ to $\widehat{\vf}$: 
\begin{align}
\widehat \vf = \cF_M \vf  = \bigoplus_{\rho \in {}\widehat{G}} \widehat{f}(\rho)^{\oplus d_\rho} \;\in\on{GL}\left(|G|, \,\mathbb{C}\right).
\end{align}
$\widehat \vf$ or $\cF_M \vf$ 
are shortened notation for the complete Fourier transform. Furthermore, by vectorizing the matrix $\widehat{\vf}$, there is a \emph{unitary} matrix $\cF$ taking $\vf$ to $\widehat{\vf}$, 
analogous to the standard discrete Fourier matrix. We use the following explicit construction of $\cF$: denoting $\ve_{[\rho,i,j]}$ as the column-major vectorized basis for element $\rho_{ij}$ in the group Fourier transform, then we can form the matrix
\begin{equation}
    \cF = \sum_{u \in G} \sum_{\rho \in {}\widehat{G}} \frac{\sqrt{d_\rho}}{\sqrt{|G|}} \sum_{i,j=1}^{d_\rho} \rho(u)_{ij} \ve_{[\rho,i,j]} \ve_u^{T}.
\end{equation}
Intuitively, for each group element $g$, the matrix $\cF$ contains all the irrep images $\rho(g)$ `flattened' into a single column. See \autoref{app:gfourier} for further exposition.
\end{definition}

Convolution and cross-correlation are equivalent, up to scaling, to matrix operations after Fourier transformation. For example, for cross-correlation (\autoref{eq:cross-correlation}), $\widehat{(g \star h)} (\rho) = {}\widehat{g}(\rho) {}\widehat{h}(\rho)^\dagger$.
This simple fact, illustrated in \autoref{fig:correlation}, is behind the proofs of our implicit bias results.

\begin{figure*}[t!]
    \captionsetup[subfigure]{aboveskip=-1pt,belowskip=-3pt}  
  \begin{subfigure}{0.49 \textwidth}
    \includegraphics[width=\linewidth]{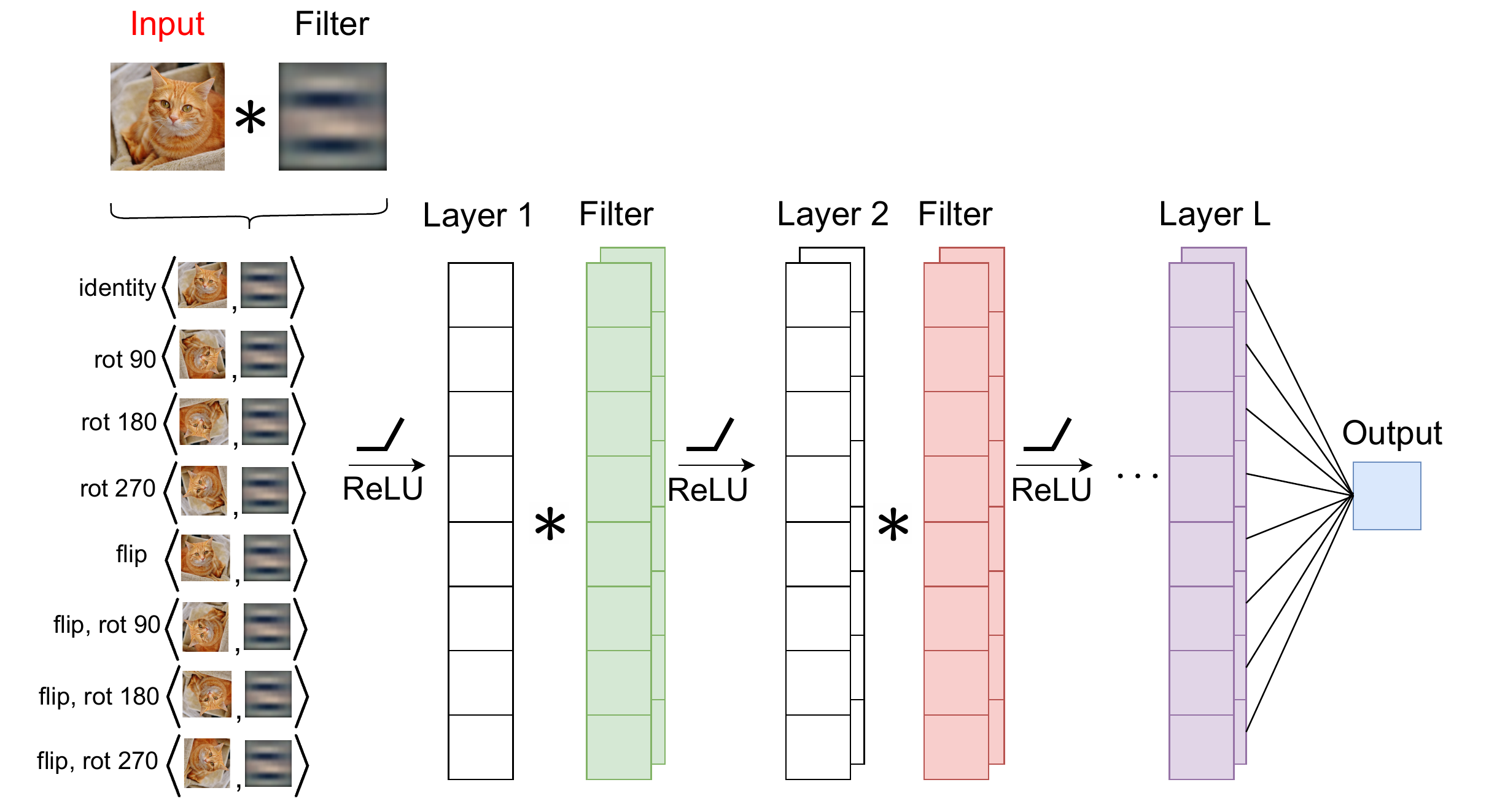}
    \caption{A practical G-CNN architecture for $D_8$.} \label{fig:arch_practice}
  \end{subfigure}%
  \vspace{1pt}
  \begin{subfigure}{0.49\textwidth}
    \includegraphics[width=\linewidth]{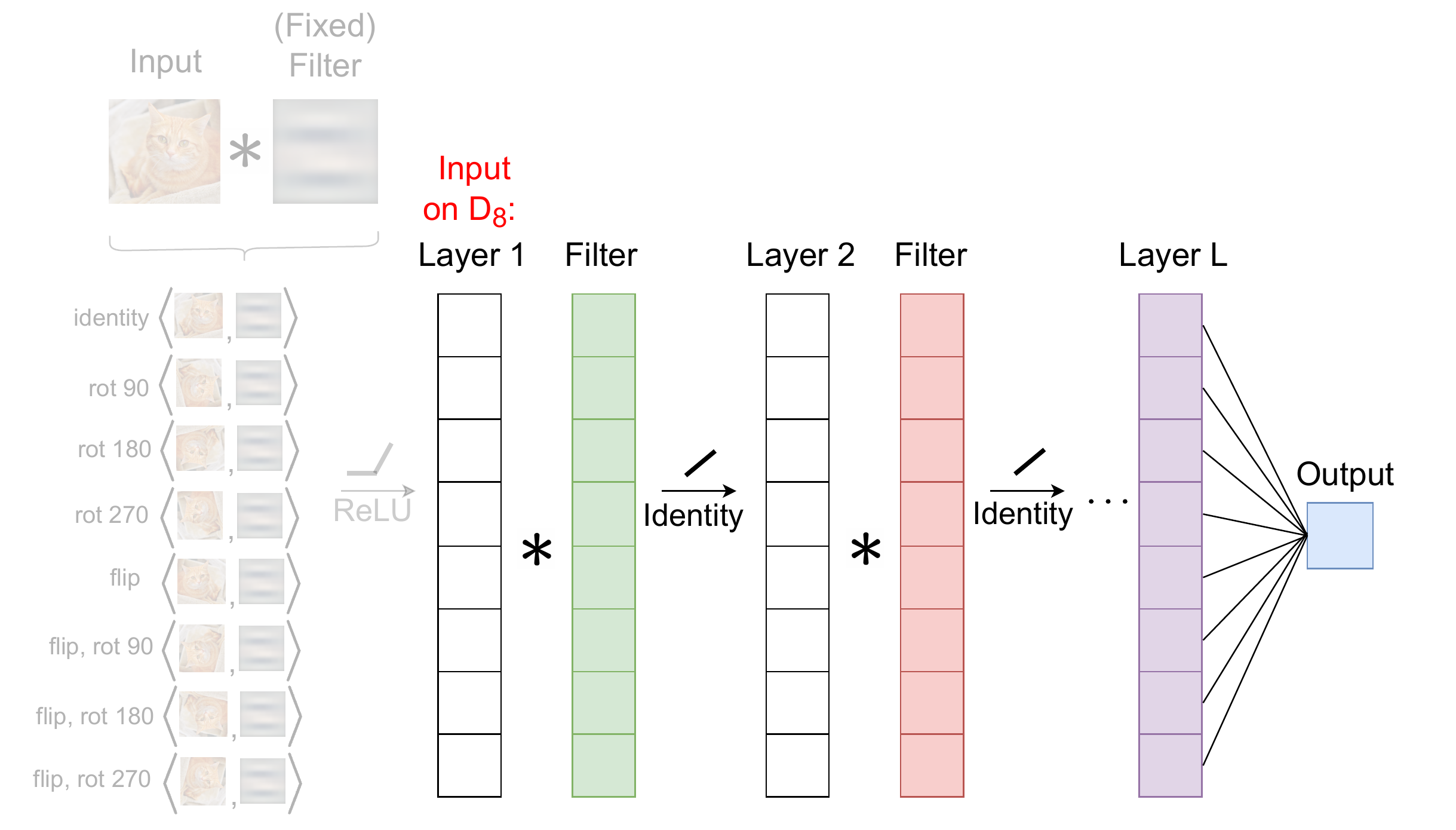}
    \caption{The linear G-CNN architecture we analyze for $D_8$.} \label{fig:arch_theory}
  \end{subfigure}%
  \hspace*{\fill}
\caption{A comparison between a practical G-CNN architecture for the group $D_8$, and its corresponding linear idealization that we will theoretically analyze. $D_8$ is a group of size 8, consisting of all rotations by 90 degrees and reflections. In a practical architecture, as shown in the first panel, the input may be an image, or anything upon which $D_8$ acts; it can be convolved over $D_8$ with respect to a learnable filter, yielding a function on $D_8$ (i.e. a function that takes $8$ values). The following layers intersperse $D_8$-convolutions, possibly with several channels, and nonlinearities, before a final fully connected layer yields a scalar output. In contrast, the second panel shows the simplified architecture we analyze here. In particular, there are no non-linearities and only a single channel is used. Furthermore, we assume the input is already a function on $D_8$, i.e. the input is an 8-dimensional vector. (One can think of this input as a fixed featurization of some image convolved over $D_8$ with a \emph{fixed} filter.)} 

\label{fig:architectures}
\end{figure*}

\section{Main Results} \label{sec:main_results}

We consider linear group-convolutional networks for classification analogous to those of \citet{yun2020unifying} and \citet{gunasekar2018linearconv}. A linear G-CNN is composed of several group cross-correlation layers followed by a fully connected layer. The input is formalized as a function on the group (according to some pre-defined ordering of elements), $\vx: G \rightarrow \R$, and the output is a scalar. Explicitly, let $G$ be a finite group with $\vx, \vw_1,\dots,\vw_L$ real-valued functions on $G$. The network output is $\on{NN}(\vx) = \langle \vx \star \vw_1 \star \cdots \star  \vw_{L-1} , \vw_L  \rangle \triangleq \langle \vx, \vbeta \rangle$. As an example, in Figure \ref{fig:architectures}, we illustrate both a ``practical" G-CNN architecture and this linearized version that we will study, for the group $D_8$. Let $\myW = [\vw_1\: \dots \: \vw_L]$ be the concatenation of all network parameters, and $\vbeta = \mathcal{P}(\myW) =  \vw_L \star \vw_{L-1}^- \star \cdots \star  \vw_1^-$ be the ``end-to-end" linear predictor consisting of composed cross-correlation operations. One can check (as we do in \autoref{lemma:net_fourier}) that $\widehat{\vbeta}\triangleq \cF_M \vbeta=\widehat{\vw_L}\dots \widehat{\vw_1}$. 
Networks are trained via gradient descent over the exponential loss function on linearly separable data $\{\vx_i,y_i\}_{i=1}^N$. Iterates take the form
\begin{align}
    \myW_\ell^{(t+1)} &= \myW_\ell^{(t)} - \eta_t \nabla_{\myW_\ell} \mathcal{L}(\mathcal{P}(\myW))
\end{align}
where $\ell( \langle \vx, \vbeta\rangle, y) = \exp(- \langle \vx, \vbeta \rangle \cdot y)$ and $ \mathcal{L}(\vbeta) = \sum_{i=1}^N \ell( \langle \vx_i, \vbeta\rangle, y_i)$. 

\subsection{Abelian}\label{subsec:abelian}
Similar to ordinary (cyclic) convolution\footnote{We include a canonical result in the appendix, \autoref{thm:fund_abelian_groups}, demonstrating that \emph{all} finite abelian groups are direct products of cyclic groups, i.e. \emph{multidimensional} translational symmetries.}, the commutative property of abelian groups implies that convolutions in real space are equivalently pointwise multiplication of irreps in Fourier space, since all irreps are one-dimensional for commutative groups ($d_\rho=1 \: \forall \rho$).
To start, recall the key definition of \citet{yun2020unifying}, determining which network architectures fall within the purview of their results:


\begin{proposition}[paraphrased from \cite{yun2020unifying}]
\label{prop:data_tensor_ortho_main}
Let $\tM(\vx)$ be a map from data $\vx \in \R^d$ to a data tensor $\tM(\vx) \in \R^{k_1 \times k_2 \times \cdots \times k_L}$. The input into an $L$-layer tensorized neural network can be written as an \emph{orthogonally decomposable data tensor} if there exists a full column rank matrix $\mS \in \C^{m \times d}$ and semi-unitary matrices $\mU_1, \dots, \mU_L \in \C^{k_\ell \times m}$ where $d \leq m \leq \min_\ell k_\ell$ such that:
\begin{align*}
    \tM(\vx) = \sum_{j=1}^m \left[ \mS\vx \right]_j \left( [\mU_1]_{\cdot, j} \otimes [\mU_2]_{\cdot, j} \otimes \cdots \otimes [\mU_L]_{\cdot, j}  \right)
\end{align*}
and moreover the network output is the tensor multiplication between $\tM(x)$ and each layer's parameters: 
\begin{align*}
   \on{NN}(\vx;\Theta) &= \tM(\vx) \cdot (\vw_1,\dots,\vw_L) \\ &= \sum_{i_1=1}^d \dots \sum_{i_L=1}^d \tM(x)_{i_1 \dots i_L} (\vw_1)_{i_1}\cdot \dots \cdot (\vw_L)_{i_L}
\end{align*}
\end{proposition}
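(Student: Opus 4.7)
Since the statement is paraphrased from prior work, the verification is essentially a direct computation. It has two pieces: the existence of the orthogonal decomposition of $\tM(\vx)$, and the claim that given such a decomposition, the network output equals the multilinear contraction $\tM(\vx)\cdot(\vw_1,\dots,\vw_L)$. The latter I would prove by direct substitution:
\begin{align*}
\tM(\vx)\cdot(\vw_1,\dots,\vw_L) &= \sum_{i_1,\dots,i_L} \tM(\vx)_{i_1\cdots i_L} \prod_{\ell=1}^L (\vw_\ell)_{i_\ell} \\
&= \sum_{j=1}^m [\mS\vx]_j \prod_{\ell=1}^L \langle [\mU_\ell]_{\cdot,j}, \vw_\ell \rangle ,
\end{align*}
where the second equality comes from plugging in the rank-$m$ decomposition and exchanging the order of summation so that the outer-product structure factors across the $L$ modes. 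This is the only computational content of the conditional statement itself.

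The substantive content, and where I would spend the bulk of the effort, is instantiating this framework for linear G-CNNs over abelian groups. Here the group Fourier transform does all the work: for abelian $G$, every irrep $\rho$ is one-dimensional, so the matrix $\cF$ of \autoref{def:group_fourier_transform} is a unitary $|G|\times|G|$ change of basis, and cross-correlation becomes pointwise multiplication of Fourier coefficients, with $\widehat{g\star h}(\rho)=\widehat{g}(\rho)\,\overline{\widehat{h}(\rho)}$. Using this and unitarity of $\cF$, the end-to-end predictor expands as
\begin{align*}
\on{NN}(\vx) = \sum_{\rho\in\widehat{G}} \widehat{\vx}(\rho)\prod_{\ell=1}^L c_\ell(\rho) ,
\end{align*}
where each $c_\ell(\rho)$ equals $\widehat{\vw_\ell}(\rho)$ or its conjugate, with the pattern dictated by the placement of the $-$ operator and the final fully-connected layer. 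This is already a rank-$|G|$ sum of products indexed by irreps; setting $d=|G|$, $m=|G|$, $k_\ell=|G|$, and $\mS=\cF$, then choosing each column $[\mU_\ell]_{\cdot,j}$ as the (possibly conjugated) basis vector that extracts the $j$-th Fourier coefficient of $\vw_\ell$, recovers the claimed decomposition. Semi-unitarity of the $\mU_\ell$ is inherited from unitarity of $\cF$, and full column rank of $\mS$ is immediate.

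The main obstacle is bookkeeping of complex conjugations. Cross-correlation introduces conjugation in Fourier space, and real-valuedness of the filters forces Fourier coefficients into conjugate pairs that must be respected by the decomposition. I would handle this by absorbing conjugations into the choice of columns of each $\mU_\ell$, verifying explicitly that $\langle [\mU_\ell]_{\cdot,j}, \vw_\ell\rangle = c_\ell(\rho_j)$ for each $j$; equivalently, one can work in a real sine/cosine Fourier basis and avoid conjugations altogether at the cost of slightly less transparent notation. With the bookkeeping in place, the whole argument reduces to a change of basis plus the tensor-decomposition identity above.
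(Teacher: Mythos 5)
Your proposal is correct and follows essentially the same route as the paper: the paper's proof of this instantiation (attached to the appendix restatement, \autoref{prop:data_tensor_ortho}) likewise takes $\mU_\ell = \cF^\dagger$, reduces the tensor contraction to $\langle \cF\vx,\ \cF\vv_1 \odot \cdots \odot \cF\vv_L\rangle$, and then peels off iterated applications of the convolution theorem $\cF(\vx \star \vv) = (\cF\vx)\odot\overline{(\cF\vv)}$, using real-valuedness of the filters to manage the conjugations exactly as you describe. The only detail you elide is the normalization $\mS = d^{\frac{L-1}{2}}\cF$ rather than $\mS=\cF$, which compensates for the $1/\sqrt{|G|}$ factor in the unitary $\cF$ accumulating over the $L-1$ cross-correlations; this is harmless bookkeeping that you would pick up when verifying $\langle [\mU_\ell]_{\cdot,j},\vw_\ell\rangle = c_\ell(\rho_j)$.
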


Indeed, a linear G-CNN over an abelian group can be expressed in a way that satisfies \autoref{prop:data_tensor_ortho} for an appropriate choice of $\mS$ and $\mU_1,\dots,\mU_L$, as stated in the following proposition. 

\begin{proposition}\label{thm:tensconv}
Let $\tM(\vx)$ be an orthogonally decomposable data tensor with associated matrices $\mS,\mU_1,\dots,\mU_L$
as in \autoref{prop:data_tensor_ortho_main}. 
Given a finite abelian group $G$, let $d = m = k_\ell = \abs{G}$ and $\cF \in \C^{d \times d}$ be the group Fourier transform of $G$ (see \autoref{def:group_fourier_transform}).
With $\mS = d^{\frac{L-1}{2}} \cF$, unitary matrices $\mU_\ell = \cF^{-1} = \cF^\dagger$, and the data tensor $\tM(\vx)$ defined correspondingly, the output of a G-CNN with real-valued filters $\vw_1, \dots, \vw_L$ is a tensor operation: 
\begin{align*}
    \tM(\vx)\cdot (\vw_1,\dots \vw_L) = \langle \vx \star \vw_1 \star \cdots \star \vw_{L-1} , \vw_L\rangle
\end{align*}
\end{proposition}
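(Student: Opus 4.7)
The plan is to unfold both sides of the claimed identity into a common expression written in the group Fourier basis, and then observe that they coincide term-by-term. First, I would use Proposition~\ref{prop:data_tensor_ortho_main} to expand the left-hand side: for any vectors $\vw_1,\dots,\vw_L$, the tensor contraction against the orthogonally decomposable data tensor satisfies
\begin{equation*}
    \tM(\vx)\cdot (\vw_1,\dots,\vw_L) = \sum_{j=1}^{m} [\mS\vx]_j \prod_{\ell=1}^{L}[\mU_\ell^{T}\vw_\ell]_j.
\end{equation*}
Substituting $\mS = d^{(L-1)/2}\cF$ and $\mU_\ell = \cF^{\dagger}$ (so $\mU_\ell^{T}=\overline{\cF}$), together with the Definition~\ref{def:group_fourier_transform} normalization $[\cF\va]_\rho = \widehat{a}(\rho)/\sqrt{|G|}$, this becomes
\begin{equation*}
    \tM(\vx)\cdot (\vw_1,\dots,\vw_L) = \frac{d^{(L-1)/2}}{d^{(L+1)/2}}\sum_{\rho\in\widehat{G}} \widehat{x}(\rho)\prod_{\ell=1}^{L}\overline{\widehat{w_\ell}(\rho)} = \frac{1}{|G|}\sum_{\rho\in\widehat{G}} \widehat{x}(\rho)\prod_{\ell=1}^{L}\overline{\widehat{w_\ell}(\rho)},
\end{equation*}
where I have used that the filters $\vw_\ell$ are real, so $\overline{\cF}\,\vw_\ell$ is the entrywise conjugate of $\cF\vw_\ell$. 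The choice of the prefactor $d^{(L-1)/2}$ in $\mS$ is exactly what cancels the $L{+}1$ copies of $\sqrt{|G|}$ coming from the $\mU_\ell$'s and from $\cF$ itself, leaving a single factor of $1/|G|$.

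Next I would expand the right-hand side using the abelian convolution theorem. Because every irrep of an abelian $G$ is one-dimensional, the general Fourier correspondence $\widehat{(g\star h)}(\rho) = \widehat{g}(\rho)\widehat{h}(\rho)^{\dagger}$ (derived in the text preceding the proposition) collapses to a scalar pointwise product with a conjugate, $\widehat{(g\star h)}(\rho) = \widehat{g}(\rho)\,\overline{\widehat{h}(\rho)}$. Iterating this $L-1$ times on the chain $\vx\star\vw_1\star\cdots\star\vw_{L-1}$ (evaluated left-to-right) yields
\begin{equation*}
    \widehat{(\vx\star\vw_1\star\cdots\star\vw_{L-1})}(\rho) = \widehat{x}(\rho)\prod_{\ell=1}^{L-1}\overline{\widehat{w_\ell}(\rho)}.
\end{equation*}
Then I would apply Parseval's identity, which holds because $\cF$ is unitary: for real vectors $\va,\vb\in\R^{|G|}$, $\langle\va,\vb\rangle = \sum_\rho [\cF\va]_\rho\,\overline{[\cF\vb]_\rho} = \frac{1}{|G|}\sum_\rho \widehat{a}(\rho)\,\overline{\widehat{b}(\rho)}$. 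Applying this to $\va=\vx\star\vw_1\star\cdots\star\vw_{L-1}$ and $\vb=\vw_L$ gives precisely
\begin{equation*}
    \langle \vx\star\vw_1\star\cdots\star\vw_{L-1},\vw_L\rangle = \frac{1}{|G|}\sum_{\rho\in\widehat{G}} \widehat{x}(\rho)\prod_{\ell=1}^{L}\overline{\widehat{w_\ell}(\rho)},
\end{equation*}
which matches the tensor-contraction expression. Finally, I would check that the choice $\mU_\ell = \cF^{\dagger}$ is semi-unitary and $\mS = d^{(L-1)/2}\cF$ has full column rank, verifying the hypotheses of Proposition~\ref{prop:data_tensor_ortho_main}; both are immediate from unitarity of $\cF$.

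The main obstacle, and the only real content beyond bookkeeping, is keeping the normalization constants and the complex conjugates aligned: the factor $d^{(L-1)/2}$ in $\mS$, the transpose-vs-dagger subtlety distinguishing $\mU_\ell^{T}$ (which appears in the tensor contraction) from $\mU_\ell^{\dagger}$, and the fact that cross-correlation (as opposed to convolution) introduces a conjugate in the Fourier domain, must all combine so that the overall prefactor $1/|G|$ on the two sides agrees and the same $L$ conjugated Fourier coefficients $\overline{\widehat{w_\ell}(\rho)}$ appear in both expansions. Everything else is routine abelian Fourier analysis.
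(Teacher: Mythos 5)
Your proposal is correct and follows essentially the same route as the paper's proof: expand the tensor contraction with the specific $\mS$ and $\mU_\ell$ into a Fourier-space sum, then use the abelian cross-correlation theorem $\widehat{(g\star h)}(\rho)=\widehat{g}(\rho)\overline{\widehat{h}(\rho)}$ together with unitarity of $\cF$ (Parseval) to identify it with $\langle \vx\star\vw_1\star\cdots\star\vw_{L-1},\vw_L\rangle$. The only difference is cosmetic — you expand both sides to a common expression, while the paper writes a single chain of equalities that iteratively absorbs the $\cF\vv_\ell$ factors into cross-correlations — and your normalization bookkeeping ($d^{(L-1)/2}/d^{(L+1)/2}=1/|G|$) matches the paper's conventions.
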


The proof is deferred to \autoref{sec:proofs}. Fundamentally, the result requires not only that $\mathcal{F}$ is unitary, which holds for all finite groups, but also that cross-correlation is pointwise multiplication (up to a conjugate transpose) in Fourier space, i.e. $\mathcal{F}(\vx \star \vw) = (\mathcal{F}\vx) \odot (\mathcal{F}\vw)^\dagger $. This property only holds 
for commutative groups, as matrix multiplication is pointwise multiplication only for matrices of dimension $d_\rho = 1$. Given \autoref{thm:tensconv}, we  apply the implicit bias statement of \citet{yun2020unifying}. 

\begin{theorem}[Implicit regularization of linear G-CNNs for $G$ an abelian group]\label{thm:abelian}
Suppose there exists $\lambda > 0$ such that the initial directions $\bar{\vw}_1, \dots, \bar{\vw}_L$ of the network parameters satisfy $\abs{[\cF \bar{\vw}_\ell]_j }^2 - \abs{ [\cF \bar{\vw}_L]_j }^2 \geq \lambda $  for all $\ell \in [L-1]$ and $j \in [m]$, i.e. if the Fourier transform magnitudes of the initial directions look sufficiently different pointwise (which occurs with high probability for e.g. a Gaussian random initialization). Then, $\vbeta = \mathcal{P}([\vw_1,\dots,\vw_L])$ converges in a direction that aligns with a stationary point $\vz_\infty$ of the following optimization program:
\begin{equation}
    \min_{\vz \in \C^m}   \; \|\cF\vz\|_{2/L}  \; \; \text{ s.t. } \; \;  y_i \langle \vx_i, \vz \rangle \geq 1, \forall i \in [n]
\end{equation}
\end{theorem}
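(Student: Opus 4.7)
The strategy is to reduce \autoref{thm:abelian} to the implicit regularization theorem of \citet{yun2020unifying} for orthogonally decomposable tensorized networks, using \autoref{thm:tensconv} as the bridge. That proposition already shows that any $L$-layer linear G-CNN over a finite abelian group $G$ realizes the data-tensor decomposition of \autoref{prop:data_tensor_ortho_main} with the specific choice $\mS = d^{(L-1)/2}\cF$ and $\mU_1 = \cdots = \mU_L = \cF^\dagger$, where $\cF$ is the unitary group Fourier matrix and $d = |G|$. Once this identification is in place, Yun et al.'s theorem applies essentially verbatim, and the remaining work is to translate its conclusion back into group-Fourier language.

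First, I would rewrite $\on{NN}(\vx; \myW) = \tM(\vx)\cdot(\vw_1, \dots, \vw_L)$ as an orthogonally decomposable tensorized network via \autoref{thm:tensconv}. Because each $\mU_\ell = \cF^\dagger$, the layerwise imbalance hypothesis required by \citet{yun2020unifying}, namely $|[\mU_\ell^\dagger \bar{\vw}_\ell]_j|^2 - |[\mU_L^\dagger \bar{\vw}_L]_j|^2 \geq \lambda$, is precisely the initialization condition stated in \autoref{thm:abelian}. Invoking their theorem then yields directional convergence of the end-to-end predictor to a stationary point of $\min \|\vrho\|_{2/L}$ subject to $y_i \langle \mS\vx_i, \vrho\rangle \geq 1$, where $\vrho_j = \prod_\ell \langle [\mU_\ell]_{\cdot,j}, \vw_\ell\rangle = \prod_\ell [\cF\vw_\ell]_j$ encodes the effective tensor contraction.

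Next, I would unfold this conclusion. By \autoref{lemma:net_fourier}, $\prod_\ell [\cF\vw_\ell]_j$ is (up to the scalar $d^{(L-1)/2}$ absorbed by $\mS$) the $j$-th entry of $\cF\vbeta$, so $\vrho \propto \cF\vbeta$. Hence $\|\vrho\|_{2/L}$ and $\|\cF\vz\|_{2/L}$ (with $\vz = \vbeta$) differ only by a positive multiplicative constant, which does not alter the stationary set; likewise, unitarity of $\cF$ turns the margin constraint $y_i \langle \mS\vx_i, \vrho\rangle \geq 1$ into $y_i \langle \vx_i, \vz\rangle \geq 1$. Together, these identifications reproduce the optimization program of \autoref{thm:abelian}.

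The main obstacle I anticipate is carefully tracking the scalar $d^{(L-1)/2}$ together with the reality constraint on $\vw_\ell$ alongside the complex-valued $\cF$. Since $\vw_\ell \in \R^{|G|}$, the Fourier iterates $\cF\vw_\ell$ live on the conjugate-symmetric subspace of $\C^{|G|}$, and one must check that transporting Yun et al.'s stationarity conditions through the unitary change of variables induced by $\cF$ remains consistent with this subspace and with the real gradient descent dynamics. Because $\|\cF\vz\|_{2/L}$ and the margin constraints are both invariant under the unitary reparametrization $\vz \mapsto \cF\vz$, this amounts to routine linear algebra once the tensor reformulation of \autoref{thm:tensconv} is in hand.
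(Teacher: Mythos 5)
Your proposal matches the paper's own proof: the paper likewise establishes \autoref{thm:tensconv} (exhibiting the G-CNN as an orthogonally decomposable tensor network with $\mS \propto \cF$ and $\mU_\ell = \cF^\dagger$), invokes the theorem of \citet{yun2020unifying}, and disposes of the transpose/conjugation bookkeeping by noting $\|\cF^{-T}\vz\| = \|\overline{\cF}\vz\| = \|\cF\vz\|$ for real-valued $\vz$ --- exactly the ``routine linear algebra'' you flag at the end. No gaps; this is the same argument.
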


As noted in \autoref{thm:fund_abelian_groups} of the Appendix, all finite abelian groups can be expressed as a direct product of cyclic groups. In contrast, many groups (rotations, subgroups of permutations, etc.) with much richer structure are non-commutative, and we now turn our attention to the non-abelian case.

\subsection{Non-Abelian}
\label{subsec:nonabelian}
In Fourier space, non-abelian convolution consists of matrix multiplication over irreps, and does \emph{not} fit the pointwise multiplication structure of \autoref{thm:tensconv}. We instead build upon the results of \citet{gunasekar2018linearconv}, and directly analyze the stationary points of the proposed optimization program to prove the following: 
\begin{theorem}[Non-abelian; see also \autoref{thm:non-abelian}]\label{thm:non-abelian-informal}


Consider a classification task with ground-truth linear predictor $\vbeta$, trained via a linear G-CNN architecture with $L>2$ layers under the exponential loss. For almost all $\boldsymbol{\beta}$-separable datasets $\{\vx_i, y_i\}_{i=1}^n$, any bounded sequence of step sizes $\eta_t$, and almost all initializations: if the loss converges to 0, the gradients converge in direction, and the iterates themselves all converge in direction to a classifier with positive margin, then the resultant predictor is a scaling of a first order stationary point of the optimization problem:
\begin{equation} \label{eq:beta_opt_main}
    \min_{\vbeta} \| {}\widehat{\vbeta} \|^{(S)}_{2/L} \; \; \text{ s.t. } \; \; \forall n, y_n \left< \vx_n, \vbeta \right> \geq 1
\end{equation}
\end{theorem}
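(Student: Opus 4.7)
The plan is to lift the KKT-based argument of \citet{gunasekar2018linearconv} from scalar Fourier coefficients to the matrix-valued coefficients that appear when $G$ is non-abelian. Since $\cF_M$ is unitary up to scaling and Parseval gives $\langle \vx_n, \vbeta\rangle = \tfrac{1}{|G|}\sum_{\rho \in \widehat G} d_\rho \tr\bigl(\widehat x_n(\rho)^\dagger \widehat \beta(\rho)\bigr)$, I would first translate the gradient descent recursion into matrix Fourier space, where the end-to-end linearization factorizes irrep-by-irrep as $\widehat\vbeta(\rho) = \widehat w_L(\rho)\cdots \widehat w_1(\rho)$. Differentiating through this product yields the two-sided sandwich
\[
\nabla_{\widehat w_\ell(\rho)} \mathcal{L} \;\propto\; \widehat w_{\ell-1}(\rho)\cdots \widehat w_1(\rho)\, \widehat r(\rho)\, \widehat w_L(\rho)^\dagger \cdots \widehat w_{\ell+1}(\rho)^\dagger,
\]
where $\widehat r(\rho) = \sum_n (-\ell_n')\, y_n \widehat x_n(\rho)$ is the Fourier-transformed residual gradient. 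This non-commuting sandwich, entirely absent in the abelian setting of scalar Fourier coefficients, is what makes the rest of the argument delicate.

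Next I would extract KKT data from the convergence hypotheses. Because the exponential loss drives $-\ell_n' \to 0$ off the support vectors, a standard argument shows that, up to normalization, $\widehat r(\rho)$ converges to $\widehat g(\rho) = \sum_n \alpha_n y_n \widehat x_n(\rho)$ for nonnegative $\alpha_n$ supported on active constraints. Combined with convergence in direction of each $\widehat w_\ell$ to $\mwl{\ell}(\rho)$ and of the per-layer gradients, this yields, for each layer $\ell$ and each irrep $\rho$,
\[
\mwl{\ell-1}(\rho)\cdots \mwl{1}(\rho)\, \widehat g(\rho)\, \mwl{L}(\rho)^\dagger \cdots \mwl{\ell+1}(\rho)^\dagger \;\propto\; \mwl{\ell}(\rho),
\]
with a common positive scalar dictated by a balancedness condition preserved under gradient flow. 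Writing SVDs $\mwl{\ell}(\rho) = U_\ell(\rho)\Sigma_\ell(\rho) V_\ell(\rho)^\dagger$ and telescoping these $L$ identities, I would show that adjacent singular bases must agree, $V_{\ell+1}(\rho) = U_\ell(\rho)$, and all $\Sigma_\ell(\rho)$ are positive scalar multiples of a single diagonal, so $\mbeta(\rho) = U_L(\rho)\bigl(\prod_\ell \Sigma_\ell(\rho)\bigr) V_1(\rho)^\dagger$ with a shared singular basis across layers.

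Finally I would close the loop by matching this to a KKT point of \eqref{eq:beta_opt_main}. Using the Fr\'echet subdifferential identity $\partial \tr(\Sigma^{2/L})\big|_{M = U\Sigma V^\dagger} \ni (2/L)\, U \Sigma^{2/L - 1} V^\dagger$, the telescoped alignment above reads $\partial \|\mbeta(\rho)\|^{(S)}_{2/L} \propto \widehat g(\rho)$ at every $\rho$ with a scalar common across irreps; inverting $\cF_M$ yields the KKT stationarity $\partial \|\widehat\vbeta\|^{(S)}_{2/L} \ni c \sum_n \alpha_n y_n \vx_n$, with complementary slackness and primal feasibility coming from the margin hypothesis. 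The hard part is the non-commutativity: forcing the consistent $V_{\ell+1}(\rho) = U_\ell(\rho)$ matching and a uniform scalar across $\rho$ from the sandwiched dynamics is precisely what the ``almost all initializations,'' ``iterates converge in direction,'' and $L > 2$ hypotheses are buying us, by ruling out singular-value degeneracies and pinning down the per-irrep scale. A secondary subtlety is that $\|\cdot\|^{(S)}_{2/L}$ is a quasi-norm for $L > 2$, so stationarity must be phrased via the Clarke/Fr\'echet subdifferential, with the alignment argument ensuring the limit lies generically in the smooth locus.
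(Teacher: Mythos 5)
Your overall strategy matches the paper's: obtain the parameter-space KKT conditions from the homogeneous-parametrization result of \citet{gunasekar2018linearconv}, translate the layer-wise stationarity relations into matrix Fourier space (where the same non-commuting ``sandwich'' structure you identify appears), and then verify membership of $\widehat{\vz}=\sum_n \tilde\alpha_n y_n \widehat{\vx}_n$ in the Clarke subdifferential of $\|\widehat{\vbeta}\|^{(S)}_{2/L}$. However, your execution diverges in the two places that carry the real weight, and the second divergence is a genuine gap. First, the middle step: rather than writing per-layer SVDs and arguing that adjacent singular bases align ($V_{\ell+1}(\rho)=U_\ell(\rho)$, common diagonal up to scalars), the paper telescopes the layer-wise identities directly into the operator equations $\widehat{\vbeta}\widehat{\vbeta}^{\dagger}=\gamma^{L}(\widehat{\vz}\,\widehat{\vbeta}^{\dagger})^{L}$ and $\widehat{\vbeta}^{\dagger}\widehat{\vbeta}=\gamma^{L}(\widehat{\vbeta}^{\dagger}\widehat{\vz})^{L}$, and uses the Hermiticity of $\widehat{\vz}\widehat{\vbeta}^{\dagger}$ (obtained from the $\ell=L$ and $\ell=1$ relations) to take $L$-th roots. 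This avoids any claim about uniqueness or alignment of SVDs, which is exactly where your route is fragile: the limit $\widehat{\vbeta}$ is expected to have repeated and zero singular values, so the SVD is non-unique there and the alignment statement needs additional justification that ``almost all initializations'' does not obviously supply (genericity of the initialization does not imply genericity of the limit).

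Second, and more seriously, your closing step uses the smooth-locus formula $\partial\|\cdot\|\ni cU\Sigma^{2/L-1}V^{\dagger}$ and asserts that the alignment argument ensures the limit lies ``generically in the smooth locus.'' This is backwards: the entire content of the theorem (and the experiments) is that $\widehat{\vbeta}$ converges to a \emph{rank-deficient} matrix over the irreps, i.e.\ precisely a point where $\Sigma^{2/L-1}$ is undefined and the quasi-norm is non-smooth. The paper's Lemma~\ref{lem:subdifferential_p_lessthan_1} is the missing ingredient: because $p-1<0$, perturbations in the directions orthogonal to the row and column spaces of $\widehat{\vbeta}$ produce gradients whose singular values $\epsilon^{p-1}$ blow up, so the Clarke subdifferential at a rank-deficient point contains an entire affine family, and membership is characterized only by the two projected conditions $\widehat{\vbeta}^{\dagger}\mG=\|\widehat{\vbeta}\|_p^{-p}(\sqrt{\widehat{\vbeta}^{\dagger}\widehat{\vbeta}})^{p}$ and $\mG\widehat{\vbeta}^{\dagger}=\|\widehat{\vbeta}\|_p^{-p}(\sqrt{\widehat{\vbeta}\widehat{\vbeta}^{\dagger}})^{p}$. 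Without this characterization your final ``inverting $\cF_M$'' step does not go through at the stationary points the theorem is actually about. A smaller point: the per-irrep common scalar in the paper comes from rescaling the dual variables by $\gamma=\|\widehat{\vbeta}^{\infty}\|^{(S)}_{2/L}$ in the KKT system, not from a balancedness invariant of gradient flow, which is not available under the discrete-time, assumption-based convergence hypotheses of the theorem.
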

To prove the above statement, we 
show that linear G-CNNs converge to stationary points of \autoref{eq:beta_opt_main} via KKT conditions, which is also the high-level method of \citet{gunasekar2018linearconv}. However, our proof diverges in several key ways. 
First, we carefully redefine operations of the G-CNN as a series of inner products and cross-correlations with respect to the matrix Fourier transform of \autoref{def:group_fourier_transform}. Second, in this Fourier space, we analyze the subdifferential of the Schatten norms, to ultimately show that the KKT conditions of \autoref{eq:beta_opt_main} are satisfied. In contrast, \citet{gunasekar2018linearconv} analyze the subdifferential of a different objective, the ordinary $2/L$-vector norm. The fact that the irreps of a group are only unique up to isomorphism (\textit{e.g.,} conjugation by a unitary matrix) hints at the Schatten norm as the correct regularizer, since the Schatten norm is among the norms invariant to unitary matrix conjugation, but this must be confirmed by rigorous analysis. These features are specific to the non-abelian case. More specifically, the proof of this result follows the outline below:
\begin{enumerate}
    \item First, by applying a general result of \citet{gunasekar2018linearconv}, \autoref{thm:homog}, we can immediately characterize the implicit regularization in the full space of parameters, $\myW$ (in contrast to the end-to-end linear predictor $\vbeta$), as a (scaled) stationary point $\myW^\infty$ of the following optimization problem 
    in $\myW$:
    \begin{equation}\label{eq:w_opt_main}
        \min_{\myW \in \mathbb{R}^P} \|\myW\|_2^2 \; \; \; \text{s.t.} \;\;\; \forall n, y_n \langle \vx_n, \mathcal{P}(\myW) \rangle \geq 1
    \end{equation}
    \item Separately, we define a \emph{distinct} optimization problem, \autoref{eq:beta_opt_main}, in $\vbeta$, with the aim of showing that stationary points of \autoref{eq:w_opt_main} are a subset of those of \autoref{eq:beta_opt_main}, up to scaling.
    \item The \emph{necessary} KKT conditions for \autoref{eq:w_opt_main} characterize its stationary points: 
\begin{equation} \label{eq:kkt_w_main}
\begin{split}
    & \exists \{\alpha_n \geq 0\}
    : \alpha_n = 0 \; \text{if} \; y_n \langle \vx_n, \mathcal{P}(\myW^\infty)\rangle > 1  \\
    & \vw_i^\infty 
    = \nabla_{\vw_i} \Big\langle \mathcal{P}(\myW^\infty), \sum_n \alpha_n y_n \vx_n \Big\rangle
\end{split}
\end{equation}
    
    From here, we show that the \emph{sufficient} KKT conditions for \autoref{eq:beta_opt_main} are also satisfied by the corresponding end-to-end predictor. In particular, we calculate the set of subgradients\footnote{$\partial^o$ is the local subgradient of \cite{clarke1975generalized}: $\partial^o f(\vbeta) = \text{conv}\{ \lim_{i \to \infty} \nabla f(\vbeta+\vh_i): \vh_i \to 0\} $} $ \partial^o \| \widehat{\vbeta} \|_p^{(S)}$  for $p=\frac{2}{L}<1$, and then use \autoref{eq:kkt_w_main} to derive recurrences demonstrating that a positive scaling of $\sum_n \alpha_n y_n \widehat{\vx}_n$ is a member of this set. 
\end{enumerate}

\begin{remark}
For abelian groups where all irreps are one-dimensional, $\widehat{\vbeta}$ in \autoref{thm:non-abelian-informal} is a diagonal matrix. Thus, the $p$-Schatten norm coincides with the $p$-vector norm of the diagonal entries, recovering results in \autoref{subsec:nonabelian}. However, \autoref{thm:non-abelian-informal} requires stronger convergence assumptions. 
\end{remark}

\textbf{Infinite dimensional groups: }\autoref{thm:non-abelian-informal} applies to all \emph{finite} groups, but G-CNNs have extensive applications for \emph{infinite} groups, where outputs of convolutions are infinite-dimensional. Here, it is common to assume sparsity in the Fourier coefficients and ``band-limit" 
filters over a set of low-frequency irreps (under some natural group-specific ordering) that form a finite dimensional linear subspace 
(we denote the representation of a function in this band-limited Fourier space as $\underline{\widehat{\vw}}$). G-CNNs with band-limited filters take precisely the form of the finite G-CNNs from \autoref{thm:non-abelian-informal}. 
Thus, slight modifications yield the following for infinite groups (see Appendix \ref{subsubsec:infinite} for details). 

\begin{corollary}[Infinite-dimensional groups with band-limited functions; see also \autoref{thm:infinite-formal}]
Let $G$ be a compact Lie group with irreps $\widehat G$, and let $B \subset \widehat G$ with $|B| < \infty$.\footnote{For example, $G=SO(3)$ and $B$ indexes all Wigner d-matrices with $|\ell| \leq L$ \citep{kondor2018clebsch}.} Proceed fully in Fourier space, in the subspace corresponding to $B$: consider a linearly separable classification task with ground-truth linear predictor $\underline{\widehat{\vbeta}}$ and inputs $\underline{\widehat\vx}$, both real-valued and supported only on irreps in $B$, and proceed by gradient descent on the band-limited Fourier-space filters. Under near-identical conditions as \autoref{thm:non-abelian-informal}, the resultant predictor is a scaling of a first order stationary point of:
\begin{equation} 
    \min_{\underline{\widehat{\vbeta}}} \Big\| \underline{\widehat{\vbeta}} \Big\|^{(S)}_{2/L} \; \; \text{ s.t. } \; \; \forall n, y_n \left< \underline{\widehat{\vx}}_n, \underline{\widehat {\vbeta}} \right> \geq 1
\end{equation}
\end{corollary}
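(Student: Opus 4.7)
The plan is to reduce this corollary essentially to a corollary of \autoref{thm:non-abelian-informal} by observing that band-limiting projects the entire problem into a finite-dimensional subspace that is closed under the relevant operations. First I would verify that the band-limited Fourier space $V_B := \bigoplus_{\rho \in B} \C^{d_\rho \times d_\rho}$ is closed under group cross-correlation: since $\widehat{(\vg \star \vh)}(\rho) = \widehat\vg(\rho)\widehat\vh(\rho)^\dagger$ is a pointwise (per-irrep) operation that produces a matrix coefficient in the same irrep $\rho$, a convolution of two band-limited functions is again band-limited with support in $B$. Hence if inputs $\underline{\widehat\vx}_n$ and all filters $\underline{\widehat\vw}_\ell$ are supported on $B$, then the end-to-end linear predictor $\underline{\widehat\vbeta}=\underline{\widehat\vw}_L\cdots\underline{\widehat\vw}_1$ lives in $V_B$, and the inner products $\langle \underline{\widehat\vx}_n,\underline{\widehat\vbeta}\rangle$ coincide with those defining the classification task.

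Next, since $\dim V_B = \sum_{\rho \in B} d_\rho^2 < \infty$, I would reparametrize the network directly over $V_B$: gradient descent on the band-limited Fourier-space filters is a finite-dimensional optimization with the same homogeneous (degree-$L$) polynomial structure as in \autoref{thm:non-abelian-informal}. Crucially, the reality constraint on $\vw_\ell$ and $\vx_n$ translates into the standard Hermitian-symmetry constraints on the Fourier blocks, but because these constraints define a real linear subspace and are preserved by the bilinear cross-correlation (matrix product with adjoint), the optimization stays within this real subspace throughout training; thus, all the gradient-flow and KKT calculations are unaffected. I would then invoke \autoref{thm:homog} to conclude that, assuming loss convergence to zero, directional convergence of iterates, and directional convergence of gradients, the parameters $\underline{\widehat\vw}_\ell^\infty$ converge in direction to a scaling of a stationary point of the minimum-$\ell_2$-norm feasibility program in parameter space (the exact analog of \autoref{eq:w_opt_main}).

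The core of the proof then mirrors step 3 of \autoref{thm:non-abelian-informal}: translate the necessary KKT conditions for the parameter-space program into sufficient KKT conditions for the Schatten-norm program \autoref{eq:beta_opt_main} on $\underline{\widehat\vbeta}$. The computation of $\partial^o \lVert \underline{\widehat\vbeta}\rVert_{2/L}^{(S)}$ goes through unchanged: the Schatten $p$-(quasi-)norm on $V_B$ decomposes over the finitely many blocks $\rho \in B$, and its subdifferential at each block is described by the same SVD-based formula used in the non-abelian proof (this is a purely finite-dimensional matrix-analysis fact, insensitive to whether the ambient group is finite or merely has a finite band). Plugging the parameter-space KKT relations $\underline{\widehat\vw}_\ell^\infty = \nabla_{\underline{\widehat\vw}_\ell}\langle \underline{\widehat\vbeta}^\infty,\sum_n \alpha_n y_n \underline{\widehat\vx}_n\rangle$ into the layerwise recurrence yields that $\sum_n \alpha_n y_n \underline{\widehat\vx}_n$ (up to a positive scaling) lies in $\partial^o \lVert\underline{\widehat\vbeta}^\infty\rVert_{2/L}^{(S)}$, completing the KKT verification for \autoref{eq:beta_opt_main}.

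The main obstacle I anticipate is bookkeeping rather than conceptual: carefully verifying that the complex-conjugate/adjoint structure arising from real-valuedness of $\vw_\ell$ and $\vx_n$ is consistent between the ``projected'' formulation (where one parametrizes band-limited Fourier blocks directly, some of which may be complex-valued because the corresponding irreps $\rho$ are not real representations) and the ``ambient'' formulation (where the underlying functions on $G$ are real), so that the KKT multiplier computation and the subgradient formula for the Schatten norm both respect this structure. Once one fixes a basis in which Hermitian pairings of conjugate irreps correspond to a single real-parameter block, the remainder of the argument is a direct transcription of the proof of \autoref{thm:non-abelian-informal}.
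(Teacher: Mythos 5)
Your proposal matches the paper's proof essentially step for step: restrict to the finite-dimensional Fourier subspace indexed by $B$, observe the parametrization is still a homogeneous degree-$L$ polynomial map, invoke the general result of \citet{gunasekar2018linearconv} (\autoref{thm:homog}) for the parameter-space $\ell_2$ program, and transcribe the KKT/Schatten-subdifferential argument from the finite non-abelian case verbatim. The conjugate-symmetry bookkeeping you anticipate as the main obstacle does not arise in the paper, which (as the corollary statement specifies) assumes the inputs and filters are real-valued \emph{in Fourier space} and runs gradient descent directly on those band-limited Fourier coefficients, rather than descending from real-valued functions on $G$.
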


\section{Experiments}
\label{sec:experiments}

We first experimentally confirm our theory in a simple setting illustrating the effects of implicit regularization.\footnote{Our code is available here: \url{https://github.com/kristian-georgiev/implicit-bias-of-linear-equivariant-networks}} Then, we relax the crucial assumption of linearity in our setup, to empirically show that the results may hold locally even in nonlinear settings (including the practical case of spherical CNNs \citep{cohen2018spherical}). Note that the results for nonlinear networks in \autoref{subs:weregettinwild} are \textit{only} empirical in nature, and Theorems~\ref{thm:abelian} and~\ref{thm:non-abelian} do not necessarily hold in the more general nonlinear setting. For all binary classification tasks, we use three-layer networks with inputs and convolution weights in $\mathbb{R}^{|G|}$, and all plots begin at epoch 1. 
Since we are interested in the resulting implicit bias alone, we only analyze loss on data in the training set. 
A complete description of our experimental setup can be found in \autoref{appa:comp}. 

Throughout this section, we plot norms in the Fourier and real regimes side-by-side to highlight unavoidable trade-offs in implicit regularization between the two conjugate regimes. These trade-offs, which have a rich history of study in physics and group theory, are commonly termed uncertainty principles \citep{wigderson2021uncertainty}. Since non-abelian groups have matrix-valued irreducible representations, uncertainty theorems must account for norms and notions of support in the context of matrices. One especially relevant uncertainty theorem states that sparseness in the real or Fourier regime necessarily implies dense support in the conjugate regime:

\begin{theorem}[Meshulam uncertainty theorem \citep{meshulam1992uncertainty}]
Given a finite group $G$ and $f:G \to \mathbb{C}$, let ${}\widehat{G}$ be the set of irreps of $G$ and $\vf$ be the vectorized function (see \autoref{def:group_fourier_transform}). Then
\begin{equation}
     |\on{supp}(\vf)|\; \left[ \sum_{\rho \in {}\widehat{G}} d_{\rho} \on{rank}\left({}\widehat{\vf}(\rho) \right) \right] \geq |G|
\end{equation}
\end{theorem}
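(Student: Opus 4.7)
The plan is to analyze the left-convolution operator $L_{\vf} : \mathbb{C}^{|G|} \to \mathbb{C}^{|G|}$ defined by $L_{\vf}(\vg) = \vf * \vg$, realized as the $|G|\times|G|$ matrix $\mM$ with entries $M_{u,v} = f(uv^{-1})$. The first step is to compute $\on{rank}(\mM)$. Since convolution becomes block-matrix multiplication in the Fourier basis provided by $\cF_M$ (cf.\ \autoref{def:group_fourier_transform} and \autoref{fig:correlation}), namely $\widehat{\vf * \vg}(\rho) = \widehat{\vf}(\rho)\widehat{\vg}(\rho)$, the matrix $\mM$ is unitarily conjugate to the block-diagonal operator in which each $\widehat{\vf}(\rho)$ appears with multiplicity $d_\rho$ (this multiplicity comes from the Peter--Weyl decomposition of the regular representation $\mathbb{C}^{|G|} \cong \bigoplus_\rho V_\rho^{\oplus d_\rho}$). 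Since rank is invariant under unitary change of basis, we conclude $\on{rank}(\mM) = \sum_{\rho \in \widehat G} d_\rho \on{rank}(\widehat{\vf}(\rho))$, which is precisely the bracketed quantity in the theorem.

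Write $r$ for this rank and $S = \on{supp}(\vf)$. The second observation is that column $v$ of $\mM$ is the translate $u \mapsto f(uv^{-1})$, whose support is $Sv := \{sv : s \in S\}$, a set of size exactly $|S|$. Now choose $r$ columns of $\mM$ that form a basis of its column space. Every other column is a linear combination of these $r$ basis columns, and therefore its support is contained in the union of their supports. Consequently the union of the $r$ chosen column supports contains the union of \emph{all} column supports; but the latter already exhausts $G$, since for any $u \in G$ and any fixed $s \in S$ the column indexed by $v := s^{-1}u$ contains $u = sv \in Sv$ in its support. A crude union bound then yields $|G| \leq r \cdot |S|$, which is exactly the desired inequality.

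The main obstacle is really just confirming the rank formula, i.e.\ verifying that $\cF_M$ block-diagonalizes $L_{\vf}$ with block $\widehat{\vf}(\rho)$ appearing $d_\rho$ times. This is the same Fourier-space correspondence the paper uses to turn composed G-convolutions into block-matrix multiplication, and once it is in place the remainder of the argument is a clean pigeonhole that makes no use of commutativity; the result therefore applies uniformly to abelian and non-abelian $G$, and the bound is sharp (achieved, for instance, by indicator functions of subgroups).
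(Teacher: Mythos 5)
The paper does not prove this statement; it is quoted directly from \citet{meshulam1992uncertainty}, so there is no internal proof to compare against. Judged on its own, your argument is correct and is essentially the classical proof of this support--rank uncertainty principle (the one also presented in \citet{wigderson2021uncertainty}): the identity $\on{rank}(L_{\vf}) = \sum_{\rho} d_\rho \on{rank}(\widehat{f}(\rho))$ follows exactly as you say from the decomposition of the regular representation, since left-multiplication by $\widehat{f}(\rho)$ on the $d_\rho^2$-dimensional block acts independently on each of the $d_\rho$ columns of $\widehat{g}(\rho)$; and the pigeonhole step on the supports of a maximal independent set of columns of $M_{u,v}=f(uv^{-1})$ is airtight, since each column support is a right translate $Sv$ of size $|S|$ and these translates cover $G$. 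The only caveat worth recording is the implicit hypothesis $f \neq 0$ (otherwise both factors on the left vanish and the inequality fails vacuously), which is standard for uncertainty principles and is also implicit in the paper's statement. Your closing remarks are also accurate: no commutativity is used anywhere, and indicator functions of subgroups achieve equality, since $\mathbf{1}_H$ has support of size $|H|$ while convolution by $\mathbf{1}_H$ has rank $|G|/|H|$.
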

The theorem above shows that the rank of a function's Fourier matrix coefficients is the proper notion of support in the uncertainty theorem for a non-abelian group. In the context of our result, this implies that the learned linear function is likely 
to have large support in real space (at least locally, with respect to the network parameters).
Other uncertainty principles are detailed in \autoref{app:uncertainty}. 

\begin{remark}[Generalization]
The implication of our results on generalization are highly task and data-dependent. Indeed, one could create \emph{contrived} experiments where the inductive bias leads to \emph{worse} or \emph{better} results, which would depend entirely on whether the ground-truth linear predictor has small Fourier Schatten norm (i.e. based on the uncertainty principle above, whether the ground-truth function is sparse in real or Fourier space). Noting that band-limited functions have small Schatten norm, and that practical spherical CNNs band-limit (as natural spherical images can often be well-represented by their first few spherical harmonics), there is reason to believe this implicit bias could aid in generalization on natural data distributions, but the primary intention of our result is purely to highlight that such an implicit bias exists and characterize it mathematically. As such, our experiments demonstrate only effects on training error, rather than generalization. 

\end{remark}

\subsection{Empirical Confirmation of Theory}
\label{subsec:experiments_confirm}

\begin{figure*}[h!]
 \captionsetup[subfigure]{aboveskip=-1pt,belowskip=-3pt}  
 \begin{subfigure}{0.49\textwidth}
    \includegraphics[width=\linewidth]{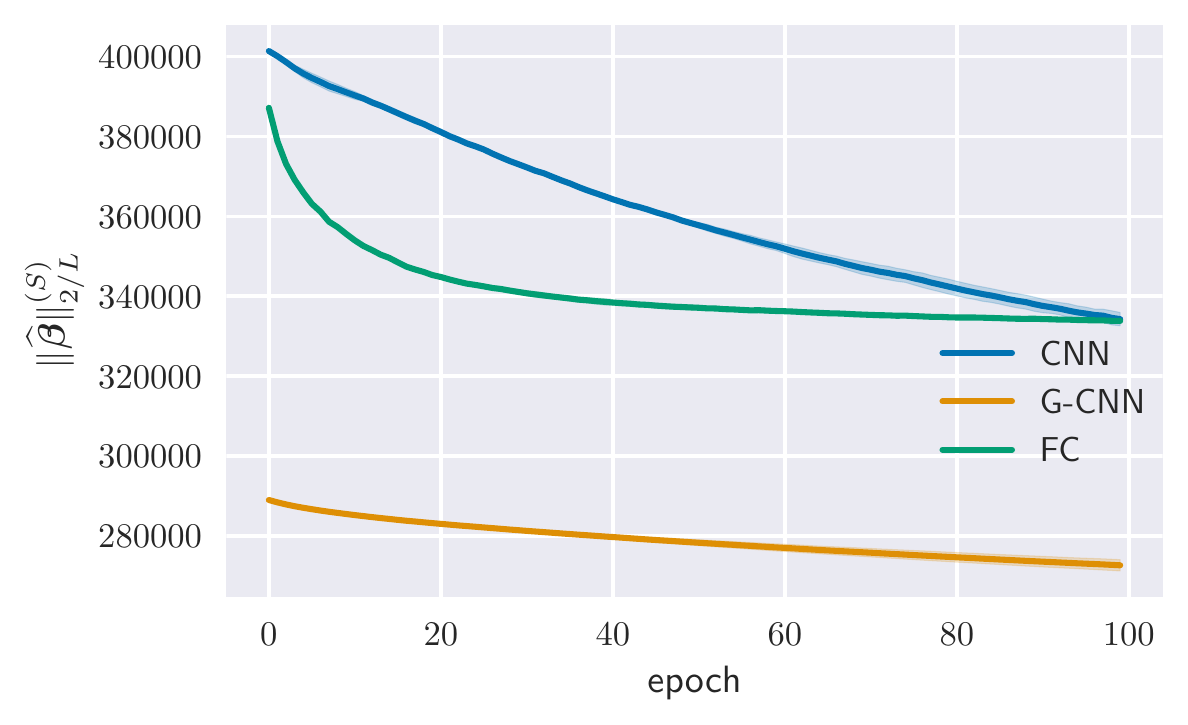}
    \caption{Fourier space norm of network linearization $\vbeta$} \label{fig:MNIST_linear_sub_a}
  \end{subfigure}%
  \hspace*{\fill}   
  \begin{subfigure}{0.49\textwidth}
    \includegraphics[width=\linewidth]{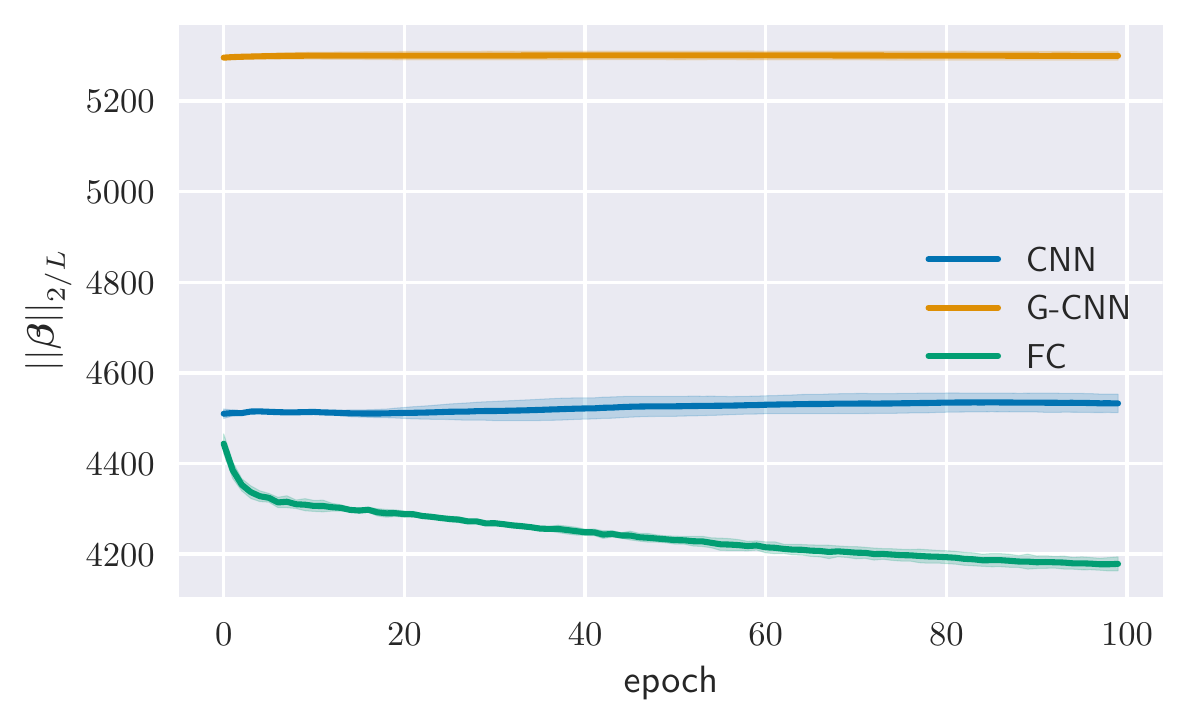}
    \caption{Real space norm of network linearization $\vbeta$} \label{fig:MNIST_linear_sub_b}
  \end{subfigure}%
  \hspace*{\fill}
\caption{Norms of the linearizations of three different linear architectures for the non-abelian group ${G=\left(C_{28}\times C_{28}\right) \times D_8}$ trained using the digits $1$ and $5$ from the MNIST dataset.} \label{fig:MNIST_linear}
\end{figure*}

We first  trace the regularization through (training) epochs for networks trained to classify data with $\pm 1$ labels. We consider three groups here, with more in \autoref{appa:comp}. \autoref{fig:d8_gaussian_2} shows the implicit bias for a G-CNN over the dihedral group $D_8$, a simple non-abelian group that captures the geometry of a square\footnote{$D_n$ denotes the dihedral group of order $n$.}. Inputs are vectors with elements drawn i.i.d. from the standard normal distribution. \autoref{fig:MNIST_linear} shows the implicit bias for a G-CNN over the non-abelian group $\left(C_{28}\times C_{28}\right) \times D_8$ which acts on images (the digits 1 and 5) from the MNIST dataset. 

In the three settings above, we compare the behaviors of G-CNN, traditional CNN\footnote{``CNN" generically refers to a G-CNN over the cyclic group of size equal to the size of the input.}, and fully-connected (FC) network architectures with similar instantiations. We plot the real space vector norm and Fourier space Schatten norm of the network linearization over training epochs. All models perfectly fit the data in this overparameterized setting, and convergence to a given regularized solution corresponds to convergence in the loss to zero.

\begin{figure*}[h!]
    \centering
    \includegraphics[]{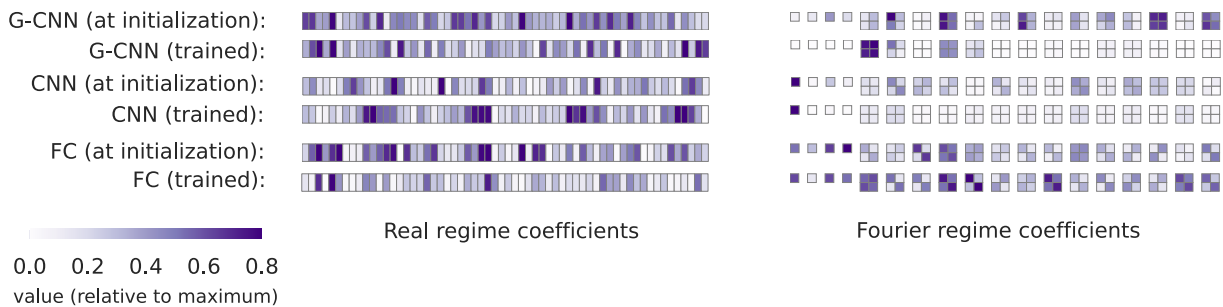} 
    \caption{Linearized functions of the G-CNN (over the dihedral group $D_{60}$) are sparse in the Fourier regime of the group. Furthermore, the sparsity pattern shows up in blocks of $4$, which are the blocks containing coefficients of individual irreps of the dihedral group $D_{60}$. Linearized functions in the real regime of the G-CNN, in contrast, are rather dense, highlighting the uncertainty principles inherent in the implicit bias of G-CNNs. Sparsity in the group Fourier regime is not evident in CNNs or fully connected (FC) networks. This is the same setting as that in \autoref{fig:d8_gaussian_2}, except with $D_{60}$ instead of $D_8$ (see \autoref{app:visualizing_bias} for more details). }
    \label{fig:d60_viz_linearization}
\end{figure*}

Consistent with theory, G-CNN architectures shown in Figures \ref{fig:d8_gaussian_2} and \ref{fig:MNIST_linear} have the smallest Fourier space Schatten norms among the architectures. Note that since our theory only implies a G-CNN will reach a stationary point of the Schatten norm minimization \emph{subject to fitting the training data}, Schatten norms greater than $0$ are expected. The form of the implicit bias is visualized in the example shown in \autoref{fig:d60_viz_linearization}, which shows the values of the linearization over irreps in Fourier space (see \autoref{app:visualizing_bias} for further details). As expected, the G-CNN outputs a linearization that is sparse over low-rank irreps in the Fourier regime. FC networks exhibit no group Fourier regime regularization, while standard CNNs exhibit some regularization since their irreducible representations are similar to those of the $D_{60}$-CNN (see e.g. \ref{eq:dihedral_irreps} in the Appendix). The differing behaviors of CNNs and FC networks show that implicit regularization is a consequence of the choice of architecture, and not inherent to the task itself.



\subsection{Assessment of Theory on Nonlinear Networks}
\label{subs:weregettinwild}

\begin{figure*}[t!]
    \captionsetup[subfigure]{aboveskip=-1pt,belowskip=-3pt}  
    \begin{subfigure}{0.49 \textwidth}
    \includegraphics[width=\linewidth]{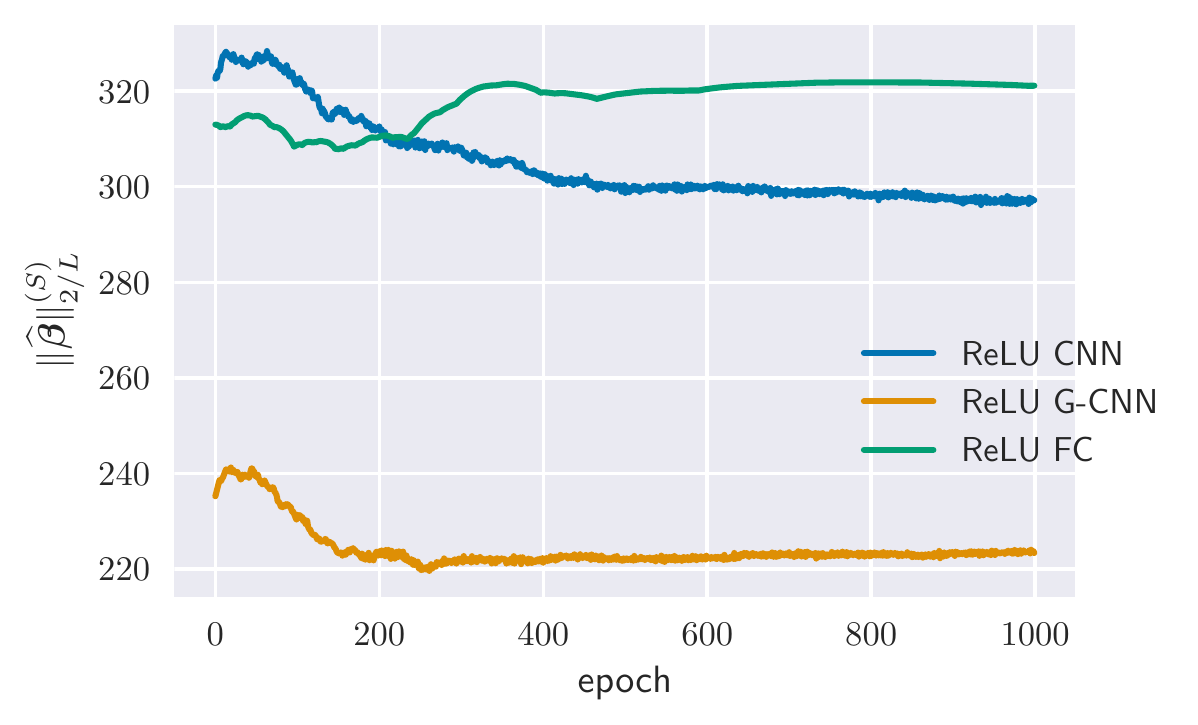}
    \caption{A $G$-CNN on non-abelian  $G=D_{60}$.} \label{fig:relu_figb_d60}
  \end{subfigure}%
  \hspace*{\fill}   
   \begin{subfigure}{0.49 \textwidth}
    \includegraphics[width=\linewidth]{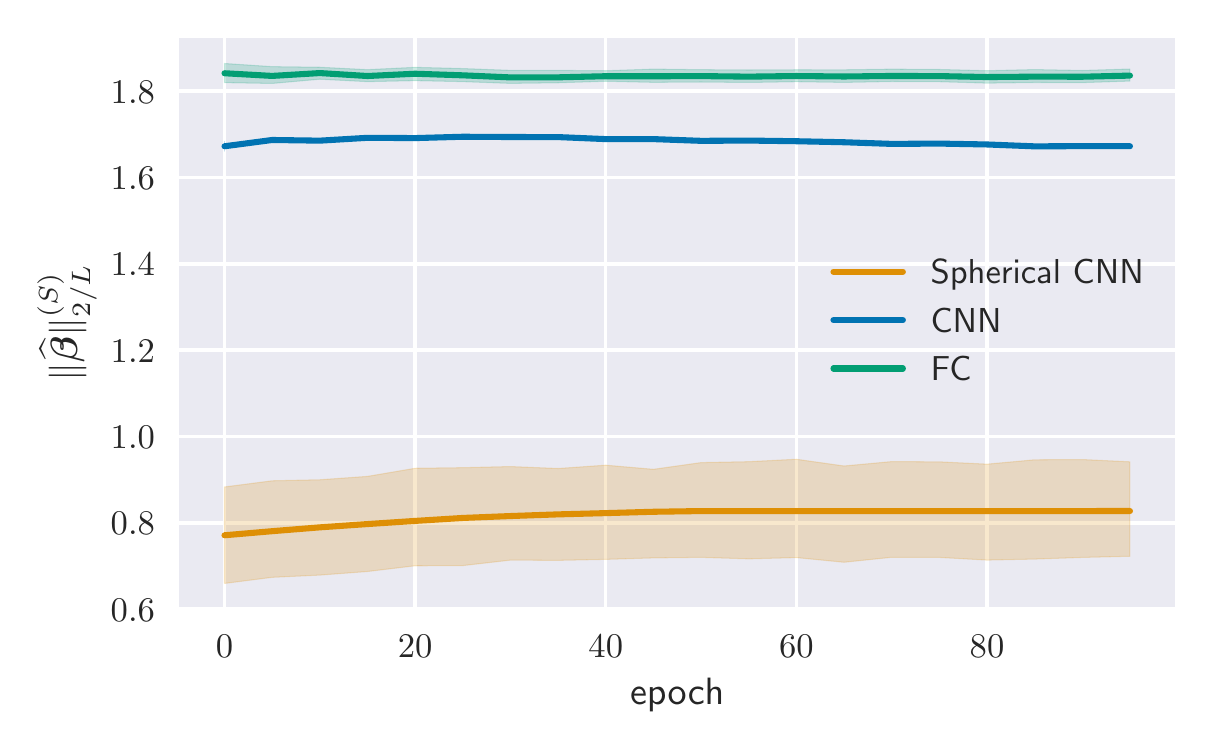}
    \caption{A Spherical CNN on bandlimited $G=SO(3)$. } \label{fig:relu_scnn}
  \end{subfigure}%
  \hspace*{\fill}
\caption{Group Fourier norms for \textit{nonlinear} architectures with ReLU activations show that nonlinear G-CNNs implicitly regularize locally. Both figures track the mean norm of the per-sample local linearizations of each network. 
Figure~\ref{fig:relu_figb_d60} is on a network with a final linear layer, and performs binary classification on 10 isotropic Gaussian data points. Figure~\ref{fig:relu_scnn} is on a spherical CNN, trained on rotated spherical MNIST  \citep{cohen2018spherical}. See \cref{app:spherical_details} for details. 
} \label{fig:relu} \end{figure*}

Here we introduce rectified linear unit (ReLU) nonlinearities between hidden layers to analyze implicit bias in the presence of nonlinearity. Our theoretical results do not necessarily hold in this case, so we are \textit{exploring, rather than confirming,} their validity for nonlinear networks.
Given a G-CNN with ReLU activations, we wish to calculate the Schatten norm of the Fourier matrix coefficients of the network linearization $\boldsymbol{\beta}$. 
However, networks can no longer be collapsed into linear functions due to the nonlinearities. Instead, we construct local linear approximations at each data point via a first-order Taylor expansion, calculate the norms of interest according to this linearization, and average the results across the dataset to get a single aggregate value. We evaluate the implicit bias of 
a nonlinear G-CNN (with linear final layer) on the dihedral group $D_{60}$ with synthetic data. We also evaluate a nonlinear spherical-CNN, where $G=SO(3)$ is an infinite group, with spherical MNIST data; see \citet{cohen2018spherical} for details. 
Linear approximations are used only to analyze implicit regularization, and not to further interpret the outputs of the locally linear neural network, as such analysis can give rise to misleading or fragile feature importance maps \citep{ghorbani2019interpretation}.


Remarkably, as shown in Figures~\ref{fig:relu_figb_d60} and~\ref{fig:relu_scnn}, our results remain valid in this nonlinear setting. While this does not guarantee that our implicit bias characterization will hold in more general settings, it is encouraging that our theoretical predictions seem to numerically hold, despite the violation of linearity and, in the case of Figure~\ref{fig:relu_scnn}, the cross-entropy loss function. Additional figures detailing the real-space behaviour are provided in \autoref{appa:comp}.

\section{Discussion}\label{sec:discussion}
In this work, we have shown that $L$-layer linear G-CNNs with full width kernels are biased towards sparse solutions in the Fourier regime regularized by the $2/L$-Schatten norm over Fourier matrix coefficients. Our analysis applies to linear G-CNNs, over either finite groups or infinite groups with band-limited inputs, which are trained to perform binary classification. In advancing our results on implicit regularization, we highlight some limitations of this work and important future directions:
\begin{itemize}
    \itemsep0em 
    \item \textbf{Nonlinearities: }Adding nonlinearities to the G-CNNs studied here expands the space of functions which the G-CNNs can express, but implicit regularization in this nonlinear setting may be challenging to characterize as G-CNNs are no longer linear predictors. Local analysis may be possible in special cases, \textit{e.g.,} in the infinite width limit \citep{lee2017deep}.
    \item \textbf{Bounded width kernels: }Our results apply to full-width kernels, supported on the entire group. Expanding results to bounded-width kernels, \textit{i.e.,} those with sparse support, is an obvious future direction, though prior work indicates that closed form solutions may not exist for these  special cases \citep{jagadeesan2021bias}.
    \item \textbf{Different loss function and learning settings: } We study the exponential loss function on binary classification. It is an open question how the implicit bias changes for classification over more than two classes, even for CNNs and fully-connected networks, as well as G-CNNs~\citep{gunasekar2018linearconv}. 
\end{itemize}


Although concise implicit regularization measures are challenging to analyze for realistic, nonlinear architectures, linear networks provide an instructive case study with precise analytic results. In this work, by proving that linear group equivariant CNNs trained with gradient descent are regularized towards low-rank matrices in Fourier space, we hope to advance the broader agenda of understanding generalization, and in particular how networks with diverse architectures --- particularly those with built-in symmetries --- learn. 

\subsubsection*{Acknowledgments}
We thank Ankur Moitra and Chulhee Yun for helpful discussions early in the development of this project. HL is supported by the Fannie and John Hertz Foundation and the National Science Foundation Graduate Research Fellowship under Grant No. 1745302. 

\bibliographystyle{abbrvnat}
\bibliography{main}

\newpage
\appendix
\onecolumn
\numberwithin{equation}{section}

\section{Proofs}\label{sec:proofs}
We begin with a simple lemma detailing the Fourier space end-to-end predictor for a G-CNN.

\begin{lemma}[G-CNN in Fourier space]\label{lemma:net_fourier}
A G-CNN given by $\langle \vx \star \vw_1 \star \cdots \star \vw_{L-1}, \vw_L \rangle$ is equivalent to $\langle \vx, \vw_L \star \vw_{L-1}^- \star \dots \star \vw_1^- \rangle$, or in Fourier space to $\frac{1}{|G|} \langle \widehat{\vx}, \widehat{\vw}_L \cdots \widehat{\vw}_1 \rangle_M$.
\end{lemma}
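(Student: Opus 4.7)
The plan is to prove the three claimed equalities in sequence: first the purely real-space identity (rewriting the network output as an inner product between $\vx$ and an end-to-end filter), and then translate to Fourier space via Plancherel. The whole thing boils down to three ingredients: an adjoint-style identity for cross-correlation with respect to the inner product, the fact that cross-correlation becomes matrix multiplication (with a conjugate transpose) after the group Fourier transform, and Plancherel's formula relating $\langle \cdot,\cdot\rangle$ to $\langle \cdot,\cdot\rangle_M$.

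First I would establish the key lemma that for any real-valued $f,g,h:G\to\mathbb R$,
\begin{equation*}
\langle f\star g,\,h\rangle \;=\;\langle f,\,h\star g^-\rangle.
\end{equation*}
This is a direct calculation: expand $(f\star g)(u)=\sum_v f(uv)g(v)$, substitute $w=uv$ so that $u=wv^{-1}$, then recognize the inner sum as $(h\star g^-)(w)$ using $g^-(v)=g(v^{-1})$. Iterating this identity, peeling off $\vw_{L-1},\vw_{L-2},\dots,\vw_1$ one at a time from the left argument of the inner product (the network is understood to be left-bracketed, so no associativity of $\star$ is needed), gives
\begin{equation*}
\langle \vx\star\vw_1\star\cdots\star\vw_{L-1},\,\vw_L\rangle \;=\;\langle \vx,\,\vw_L\star\vw_{L-1}^-\star\cdots\star\vw_1^-\rangle,
\end{equation*}
which is the first equality.

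Next I would pass to Fourier space. Two small computations are needed: (i) $\widehat{(g\star h)}(\rho)=\widehat g(\rho)\,\widehat h(\rho)^\dagger$, obtained by the same substitution $w=uv$ together with unitarity of $\rho$; and (ii) $\widehat{f^-}(\rho)=\widehat f(\rho)^\dagger$ for real-valued $f$, by substituting $v=u^{-1}$ and using $\rho(v^{-1})=\rho(v)^\dagger$. Combining (i) and (ii), each $\star\vw_i^-$ becomes right-multiplication by $\widehat{\vw}_i(\rho)$, so by induction
\begin{equation*}
\widehat{\vw_L\star\vw_{L-1}^-\star\cdots\star\vw_1^-}(\rho)\;=\;\widehat{\vw}_L(\rho)\,\widehat{\vw}_{L-1}(\rho)\cdots\widehat{\vw}_1(\rho).
\end{equation*}
Block-diagonalizing with multiplicity $d_\rho$ per irrep yields exactly $\widehat{\vw}_L\cdots\widehat{\vw}_1$ as a full block-diagonal matrix.

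Finally I would invoke Plancherel's formula for finite groups: for real-valued $f,g$, $\sum_u f(u)g(u)=\tfrac{1}{|G|}\sum_\rho d_\rho\,\tr\bigl(\widehat f(\rho)\widehat g(\rho)^\dagger\bigr)$. Since $\widehat{\vx}$ and $\widehat{\vy}$ (for $\vy=\vw_L\star\vw_{L-1}^-\star\cdots\star\vw_1^-$) are block-diagonal with $d_\rho$ copies of each irrep block, the right-hand side equals $\tfrac{1}{|G|}\tr(\widehat{\vx}\,\widehat{\vy}^\dagger)=\tfrac{1}{|G|}\langle\widehat{\vx},\widehat{\vy}\rangle_M$, which gives the claimed Fourier-space formula. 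No step is genuinely hard, but the main bookkeeping subtlety to watch is the conjugate transpose: it arises once from the Fourier transform of cross-correlation and once from the Fourier transform of $\vw_i^-$, and the two cancel to leave clean right-to-left matrix multiplication $\widehat{\vw}_L\cdots\widehat{\vw}_1$.
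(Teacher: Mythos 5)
Your proof is correct, and all three ingredients (the adjoint identity for cross-correlation, the convolution theorem $\widehat{g\star h}(\rho)=\widehat g(\rho)\widehat h(\rho)^\dagger$, the identity $\widehat{f^-}=\widehat f{}^\dagger$ for real $f$, and Plancherel) are exactly the ones the paper relies on; the difference is purely one of ordering. The paper jumps into Fourier space immediately: it applies Plancherel to the network output, then peels off the filters one at a time \emph{inside the trace} using $\cF_M(f\star g)=\widehat f\,\widehat g{}^\dagger$, arriving at $\frac{1}{|G|}\tr[\widehat\vx\,\widehat\vw_1^\dagger\cdots\widehat\vw_L^\dagger]$, and only afterwards reads off the real-space identity by noting $\cF_M(\vw_L\star\vw_{L-1}^-\star\cdots\star\vw_1^-)=\widehat\vw_L\cdots\widehat\vw_1$ and inverting $\cF_M$. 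You instead prove the real-space equality first, via the elementary change-of-variables identity $\langle f\star g,h\rangle=\langle f,h\star g^-\rangle$, and only then transform. Your route is slightly more self-contained for the first claimed equality, since it does not lean on injectivity of the Fourier transform to transfer a Fourier-space identity back to real space; the paper's route is more economical because the same peeling manipulation simultaneously produces both the Fourier-space formula and the end-to-end filter. Your remark about the two daggers cancelling is exactly the bookkeeping point that makes the paper's computation come out as $\widehat\vw_L\cdots\widehat\vw_1$ rather than a reversed product, so you have identified the one place where a sign-of-adjoint error could creep in.
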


\begin{proof}
\begin{equation}
    \begin{split}
        N(x) &= \langle \vx \star \vw_1 \star \cdots \star \vw_{L-1}, \vw_L \rangle \\
        & = \frac{1}{|G|} \langle \cF_M (\vx \star \vw_1 \star \cdots \star \vw_{L-1}), \cF_M \vw_L \rangle_M \\
        & = \frac{1}{|G|} \tr[ \cF_M (\vx \star \vw_1 \star \cdots \star \vw_{L-1}) \widehat{\vw}_L^\dagger] \\
        & = \frac{1}{|G|} \tr[ \cF_M (\vx \star \vw_1 \star \cdots \star \vw_{L-2}) \widehat{\vw}_{L-1}^\dagger \widehat{\vw}_L^\dagger] \\
        & = \frac{1}{|G|} \tr[ \widehat{\vx} \widehat{\vw}_{1}^\dagger \cdots \widehat{\vw}_{L-1}^\dagger \widehat{\vw}_L^\dagger] \\
        & = \frac{1}{|G|} \langle \widehat{\vx}, \widehat{\vw}_L \cdots \widehat{\vw}_1 \rangle_M
    \end{split}
\end{equation}
To see the stated equality in real space, observe that $\widehat{\vw_1^-} = \widehat{\vw_1}^\dagger$ by definition. Thus, $$\cF_M (\vw_L \star \vw_{L-1}^- \star \dots \star \vw_1^-) =  \widehat{\vw}_L \cdots \widehat{\vw}_1$$ 
\end{proof}

\subsection{Abelian}
The proof of \autoref{thm:tensconv} is included below.

First, recall the primary theorem of \citet{yun2020unifying}:
\begin{theorem}[paraphrased from \cite{yun2020unifying}] \label{thm:yun} 
If there exists $\lambda > 0$ such that the initial directions $\bar{\vv}_1, \dots, \bar{\vv}_L$ of the network parameters satisfy $\abs{[\cF \bar{\vv}_\ell]_j }^2 - \abs{ [\cF \bar{\vv}_L]_j }^2 \geq \lambda $  for all $\ell \in [L-1]$ and $j \in [m]$, i.e. of the Fourier transform magnitudes of the initial directions look sufficiently different pointwise (which is likely for e.g. a random initialization), then $\vbeta(\Theta(t))$ converges in a direction that aligns with $\mS^T \vrho_\infty$ where $\vrho_\infty \in \C^m$ denotes a stationary point of the following optimization program:
\begin{equation}
    \min_{\vrho \in \C^m} \; \; \|\vrho\|_{2/L} \; \; \text{ s.t. } \; \; y_i\vx_i^T\cF^Tv\rho \geq 1, \forall i \in [n]
\end{equation}
Since $\mS=\cF$ is invertible, then in fact $\beta(\Theta(t))$ converges in a direction that aligns with a stationary point $\vz_\infty$ of the following optimization program:
\begin{equation}
    \min_{\vz \in \C^m} \; \; \|\cF \vz\|_{2/L} \; \; \text{ s.t. } \; \; y_i \vx_i^T \vz \geq 1, \forall i \in [n]
\end{equation}
\end{theorem}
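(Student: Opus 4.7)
My plan is to derive this essentially directly from Yun et al.'s main implicit-bias theorem, invoked on top of the tensorization supplied by \autoref{thm:tensconv}. That proposition has already cast the linear G-CNN over a finite abelian group as an orthogonally decomposable tensor network in the sense of \autoref{prop:data_tensor_ortho_main}, with the explicit choice $\mS = d^{(L-1)/2}\cF$ (full column rank, since $\cF$ is unitary) and semi-unitary factor matrices $\mU_\ell = \cF^{\dagger}$. Because the Fourier-magnitude condition in the current theorem is stated in terms of exactly the quantities controlling the initialization hypothesis of Yun et al., their theorem applies off the shelf and immediately yields directional convergence of $\vbeta$ to $\mS^T \vrho_\infty$ for some stationary point $\vrho_\infty$ of the first optimization program.

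The second half of the theorem is then a clean change of variables leveraging invertibility of $\cF$. Setting $\vz := \mS^T \vrho$, the bijection $\vrho \mapsto \vz$ rewrites the constraint $y_i \vx_i^T \mS^T \vrho \geq 1$ as $y_i \vx_i^T \vz \geq 1$; the scalar $d^{(L-1)/2}$ only rescales the margin, which is irrelevant for directional convergence. Under the same substitution, $\|\vrho\|_{2/L} = \|(\cF^T)^{-1}\vz\|_{2/L}$, and I would conclude by rewriting this as $\|\cF\vz\|_{2/L}$ using two observations: first, unitarity of $\cF$ gives $(\cF^T)^{-1} = \overline{\cF}$; second, since $\vz$ is the end-to-end linear predictor of a network with real filters (and the margin constraint is meaningful only for real $\vz$), entrywise complex conjugation leaves the $2/L$ quasi-norm unchanged, so $\|\overline{\cF}\vz\|_{2/L} = \|\cF\vz\|_{2/L}$. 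An invertible linear reparameterization maps stationary points to stationary points, so this produces a $\vz_\infty$ satisfying the second program.

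The main potential pitfall is purely notational bookkeeping: the scalar prefactor $d^{(L-1)/2}$ in $\mS$ and the complex conjugate in $(\cF^T)^{-1}$ have to be absorbed carefully, and one must verify that ``stationary point up to positive scaling'' is preserved under the reparameterization. None of this is substantive, but it is easy to get wrong without being careful about complex-conjugation conventions or about the homogeneity of the $2/L$ quasi-norm. Beyond this bookkeeping, no new optimization-theoretic input is required; all the heavy lifting is done by Yun et al.\ together with the tensorization encoded in \autoref{thm:tensconv}.
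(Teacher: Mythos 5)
Your proposal is correct and follows essentially the same route as the paper: this statement is imported wholesale from \citet{yun2020unifying} once \autoref{thm:tensconv} supplies the orthogonally decomposable tensorization with $\mS = d^{(L-1)/2}\cF$ and $\mU_\ell = \cF^\dagger$, and the paper disposes of the change of variables with exactly the observation you make, namely that $\|\cF^{-T}\vz\| = \|\overline{\cF}\vz\| = \|\cF\vz\|$ for real-valued $\vz$ (with the scalar prefactor irrelevant to directional convergence). No substantive difference.
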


We proceed by showing that abelian G-CNNs can be written as a vector of parameters contracted (according to tensor operations) with an orthogonally decomposable data tensor, which is the primary condition for \autoref{thm:yun} to hold.


\begin{proposition}[paraphrased from \cite{yun2020unifying}]
\label{prop:data_tensor_ortho}
Let $\tM(\vx)$ be a function that maps data $\vx \in \R^d$ to a data tensor $\tM(\vx) \in \R^{k_1 \times k_2 \times \cdots \times k_L}$. The data input into an $L$-layer tensorized neural network can be written in the form of an \textbf{orthogonally decomposable data tensor} if there exists a full column rank matrix $\mS \in \C^{m \times d}$ and semi-unitary matrices $\mU_1, \dots, \mU_L \in \C^{k_\ell \times m}$ where $d \leq m \leq \min_\ell k_\ell$ such that
$\tM(\vx)$ can be written as:
\begin{equation}
    \tM(\vx) = \sum_{j=1}^m \left[ \mS\vx \right]_j \left( [\mU_1]_{\cdot, j} \otimes [\mU_2]_{\cdot, j} \otimes \cdots \otimes [\mU_L]_{\cdot, j}  \right)
\end{equation}
and such that the network output is the tensor multiplication between $\tM(x)$ and each layer's parameters: 
\begin{align*}
   \on{NN}(\vx;\Theta) &= \tM(\vx) \cdot (\myW_1,\dots,\myW_L) \\
    &= \sum_{i_1=1}^d \dots \sum_{i_L=1}^d \tM(\vx)_{i_1 \dots i_L} (\myW_1)_{i_1}\cdot \dots \cdot (\myW_L)_{i_L}
\end{align*}
\end{proposition}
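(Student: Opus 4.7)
The plan is to reduce the G-CNN output to the Fourier domain using Lemma A.1, exploit the abelian structure so that matrix multiplication over irreps collapses to scalar pointwise multiplication, and then match the resulting expression to the orthogonally decomposable tensor contraction. First, by Lemma A.1, the G-CNN output $\langle \vx \star \vw_1 \star \cdots \star \vw_{L-1}, \vw_L\rangle$ equals $\frac{1}{|G|}\tr\bigl(\widehat{\vx}\widehat{\vw}_1^\dagger\cdots\widehat{\vw}_L^\dagger\bigr)$. Since $G$ is abelian, every irrep has $d_\rho = 1$, so by Definition 4.1 each block-diagonal matrix $\widehat{\vx},\widehat{\vw}_\ell$ is simply a $|G|\times|G|$ diagonal matrix, and matrix products reduce to elementwise scalar products (with the conjugate transpose reducing to ordinary conjugation). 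The trace then equals $\frac{1}{|G|}\sum_\rho \widehat{x}(\rho)\,\overline{\widehat{w}_1(\rho)}\cdots\overline{\widehat{w}_L(\rho)}$.

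Next, I would translate from the scalar irrep form $\widehat{f}(\rho) = \sum_u f(u)\rho(u)$ to the normalized unitary Fourier matrix $\cF$. By Definition 4.1, in the abelian case $[\cF\vf]_\rho = \frac{1}{\sqrt{|G|}}\widehat{f}(\rho)$, so $\widehat{f}(\rho) = \sqrt{|G|}\,[\cF\vf]_\rho$. Plugging this in for $\vx$ and each of the $L$ filters yields an overall factor of $|G|^{(L+1)/2-1} = |G|^{(L-1)/2}$, producing
\[
\langle \vx\star\vw_1\star\cdots\star\vw_{L-1},\vw_L\rangle \;=\; |G|^{(L-1)/2}\sum_j [\cF\vx]_j\,\overline{[\cF\vw_1]_j}\cdots\overline{[\cF\vw_L]_j}.
\]

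Finally, I would compare this against the tensor contraction $\tM(\vx)\cdot(\vw_1,\dots,\vw_L) = \sum_j [\mS\vx]_j\prod_\ell [\mU_\ell^T\vw_\ell]_j$ obtained by expanding the orthogonally decomposable form. With the stated choices $\mS = d^{(L-1)/2}\cF$ and $\mU_\ell = \cF^\dagger$, we have $[\mS\vx]_j = d^{(L-1)/2}[\cF\vx]_j$ and $\mU_\ell^T = \overline{\cF}$, so because each $\vw_\ell$ is real, $[\mU_\ell^T\vw_\ell]_j = \overline{[\cF\vw_\ell]_j}$. The two expressions then agree termwise, giving the claimed equality. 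The main obstacle is not conceptual but careful bookkeeping: tracking the $\sqrt{|G|}$ normalization factors that distinguish $\widehat{f}(\rho)$ from $[\cF\vf]_j$, keeping complex conjugates consistent between the $^\dagger$ arising in Fourier-space inner products and the $\mU_\ell^T$ arising from transposing the unitary, and explicitly invoking real-valuedness of the filters to justify $\overline{\cF}\,\vw_\ell = \overline{\cF\vw_\ell}$ (without which the $\mU_\ell$ cannot be chosen as unitary matrices purely in the Fourier basis).
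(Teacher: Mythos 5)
Your proposal is correct and is essentially the paper's own argument run in reverse: the paper starts from the tensor contraction $\sum_j [\mS\vx]_j \prod_\ell [\mU_\ell^T\vw_\ell]_j$ and peels off one cross-correlation at a time via $\cF(\vx\star\vw) = \cF\vx \odot \overline{\cF\vw}$ to reach $\langle \vx\star\vw_1\star\cdots\star\vw_{L-1},\vw_L\rangle$, whereas you start from the G-CNN output, pass to Fourier space in one step via Lemma~\ref{lemma:net_fourier}, and match coefficients. The key ingredients (unitarity of $\cF$, collapse of the abelian convolution theorem to pointwise multiplication, real-valuedness of the filters to identify $\overline{\cF}\vw_\ell$ with $\overline{\cF\vw_\ell}$, and the $d^{(L-1)/2}$ normalization) are identical, and your explicit tracking of that normalization factor is if anything more transparent than the paper's chain of equalities.
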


\begin{proof}[For \autoref{prop:data_tensor_ortho}]\label{pf:abelian_tensor_form}
By direct manipulation:
\begin{align*}
    f(\vx;\Theta) &= \tM(\vx) \cdot (\vv_1,\dots,\vv_L) \\
    &= \sum_{i_1=1}^d \dots \sum_{i_L=1}^d \tM(\vx)_{i_1 \dots i_L} (\vv_1)_{i_1}\cdot \dots \cdot (\vv_L)_{i_L} \\
    &= \sum_{i_1=1}^d \dots \sum_{i_L=1}^d \Bigg( \sum_{j=1}^d \left[ \mS \vx \right]_j \left( [\mU_1]_{\cdot, j} \otimes [\mU_2]_{\cdot, j} \otimes \cdots \otimes [\mU_L]_{\cdot, j}  \right) \Bigg)_{i_1\dots i_L} (\vv_1)_{i_1}\cdot \dots \cdot (\vv_L)_{i_L}  \\
    &= \sum_{i_1=1}^d \dots \sum_{i_L=1}^d \Bigg( \sum_{j=1}^d \left[ \mS\vx \right]_j \left( [\mU_1]_{i_1, j}  [\mU_2]_{i_2, j}  \cdots  [\mU_L]_{i_L, j}  \right) \Bigg) (\vv_1)_{i_1}\cdot \dots \cdot (\vv_L)_{i_L} \\
    &=  \sum_{j=1}^d \left[ \mS\vx \right]_j \left( [\mU_1^T\vv_1]_{i_1}  [\mU_2^T\vv_2]_{i_2}  \cdots [\mU_L^T\vv_L]_{i_L}  \right)   \\
    &= d^{\frac{L-1}{2}} \sum_{j=1}^d \left[ \cF\vx \right]_j \left( [\overline{\cF}\vv_1]_{i_1}  [\overline{\cF}\vv_2]_{i_2}  \cdots [\overline{\cF}\vv_L]_{i_L}  \right)   \\
    &= d^{\frac{L-1}{2}} \sum_{j=1}^d \left[ \cF\vx \right]_j \left( [\overline{\cF \vv_1}]_{i_1}  [\overline{\cF \vv_2}]_{i_2}  \cdots [\overline{\cF \vv_L}]_{i_L}  \right)   \\
    &= d^{\frac{L-1}{2}} \sum_{j=1}^d \left[ \cF\vx \right]_j \overline{\left( [\cF \vv_1]_{i_1}  [\cF \vv_2]_{i_2}  \cdots [\cF \vv_L]_{i_L}  \right) }  \\
    &= \langle \cF \vx, \cF\vv_1 \odot \dots \odot \cF  \vv_L \rangle \\
    &= \langle \cF \vx \odot \overline{\cF\vv_1}, \cF\vv_2 \odot \dots \odot \cF  \vv_L \rangle \\
    &= \langle \cF (\vx \star \vv_1) \odot \overline{\cF\vv_2}, \cF\vv_3 \odot \dots \odot \cF  \vv_L \rangle \\
    &= \langle \cF (\vx \star \vv_1 \star \vv_2) \odot \overline{\cF\vv_3}, \cF\vv_4 \odot \dots \odot \cF  \vv_L \rangle \\
    &= \langle \cF (\vx \star \vv_1 \star \vv_2 \star \cdots \star \vv_{L-1}) ,  \cF  \vv_L \rangle \\
    &= \langle \vx \star \vv_1 \star \dots \star \vv_{L-1}, \vv_L \rangle \\
\end{align*}
Here, we have used that the filters are real-valued.
\end{proof}
Note that \autoref{thm:abelian} then merely requires that $||\cF^{-T} \vz || = || \overline{\cF}\vz|| = ||\cF \vz||$, for real-valued $\vz$.

\subsubsection{Fundamental theorem of finite abelian groups}
While the proof in the previous section is complete and correct, intuition (and/or alternate analysis) for abelian groups is aided by the important fact that all finite abelian groups are a direct product of cyclic groups.

\begin{theorem}[Fundamental theorem of finite abelian groups \citep{dummit2004abstract}]\label{thm:fund_abelian_groups}
Any finite abelian group is a direct product of a finite number of cyclic groups whose orders are prime powers uniquely determined by the group.
\end{theorem}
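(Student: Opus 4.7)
The plan is to proceed in two main stages and then argue uniqueness separately. The first stage reduces the statement to the case of a $p$-group via a primary decomposition; the second decomposes each finite abelian $p$-group into cyclic factors. Throughout I write $G$ additively.

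\textbf{Primary decomposition.} Write $|G| = p_1^{a_1} \cdots p_k^{a_k}$ and set $G_{p_i} = \{g \in G : p_i^{a_i} g = 0\}$. Each $G_{p_i}$ is a subgroup because $G$ is abelian. Since $\gcd(|G|/p_1^{a_1}, \ldots, |G|/p_k^{a_k}) = 1$, Bezout's identity yields integers $c_i$ with $\sum_i c_i (|G|/p_i^{a_i}) = 1$, and multiplying $g$ by this identity expresses every $g \in G$ as a sum $\sum_i g_i$ with $g_i \in G_{p_i}$. Distinct $G_{p_i}$ have coprime orders and hence intersect trivially, so $G = G_{p_1} \oplus \cdots \oplus G_{p_k}$ as an internal direct sum.

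\textbf{Decomposition of $p$-groups; the main obstacle.} Let $H$ be a finite abelian $p$-group. I would induct on $|H|$. Choose $x \in H$ of maximal order and set $C = \langle x \rangle$. The crux is to produce a direct complement: a subgroup $K \leq H$ with $H = C \oplus K$. I would take $K$ maximal among subgroups satisfying $K \cap C = \{0\}$ and prove by contradiction that $C + K = H$. Otherwise, pick $y \in H \setminus (C + K)$ of minimal order; then $py \in C + K$, so $py = m x + k$ for some integer $m$ and $k \in K$. Using that $\operatorname{ord}(x)$ is maximal in $H$ one shows $p \mid m$ (else $y$ would have order exceeding that of $x$). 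Writing $m = p m'$ and setting $y' = y - m' x$ yields $p y' = k \in K$ together with $y' \notin C + K$, so $K' = K + \langle y' \rangle$ properly contains $K$ while still satisfying $K' \cap C = \{0\}$, contradicting maximality of $K$. Hence $H = C \oplus K$, and applying the inductive hypothesis to $K$ decomposes it further into cyclic $p$-groups.

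\textbf{Uniqueness.} For each prime $p$ and each $j \geq 0$, the quantity $\dim_{\mathbb{F}_p}(p^{j} G / p^{j+1} G)$ depends only on the isomorphism type of $G$, and a direct computation on $\bigoplus_i C_{p^{a_i}}$ shows that it equals the number of cyclic summands of order at least $p^{j+1}$ in any decomposition. Consecutive differences of these ranks then give the multiplicity of each specific prime-power order, so the multiset of summand orders is determined by $G$ up to reordering. The chief subtlety of the whole argument is the existence of the complement $K$ in the $p$-group step; all other parts are bookkeeping and invariant calculations.
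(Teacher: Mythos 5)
Your proof is correct. The paper does not actually prove this statement---it is quoted as a classical result with a citation to Dummit and Foote---and your argument (primary decomposition via B\'ezout, then the induction for $p$-groups using an element of maximal order and a maximal subgroup $K$ with $K \cap C = \{0\}$, then uniqueness via the ranks $\dim_{\mathbb{F}_p}(p^{j}G/p^{j+1}G)$) is precisely the standard textbook proof found in such references. The only assertion you leave unverified is that $K' = K + \langle y'\rangle$ still meets $C$ trivially; this follows by the usual one-line check (if $k_1 + t y' \in C$ with $p \nmid t$ then $y' \in C + K$, a contradiction, while if $p \mid t$ then $t y' = (t/p)\,p y' = (t/p)k \in K$, forcing the element into $K \cap C = \{0\}$), so it is a routine omission rather than a gap.
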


Given a decomposition of an abelian group $G$ into $k$ cyclic groups $C_{d_1} \times \cdots \times C_{d_k}$, one can easily construct the group Fourier transform as a Kronecker product of discrete Fourier transform matrices which are the group Fourier transforms of the respective cyclic groups.
\begin{equation}
    G = C_{d_1} \times \cdots \times C_{d_k} \; \implies \; \cF = \bigotimes_{i=1}^k \cF_{d_i},
    \label{eq:abelian_fourier_transform}
\end{equation}
where $\bigotimes$ denotes the Kronecker product over matrices and $\cF_d$ is the standard (unitary) discrete Fourier transform matrix of dimension $d$ defined as 

\begin{equation} \mathcal{F}_d =
    \frac{1}{\sqrt{d}}\begin{bmatrix}
 \omega_d^{0 \cdot 0}     & \omega_d^{0 \cdot 1}     & \cdots & \omega_d^{0 \cdot (d-1)}     \\
 \omega_d^{1 \cdot 0}     & \omega_d^{1 \cdot 1}     & \cdots & \omega_d^{1 \cdot (d-1)}     \\
 \vdots                   & \vdots                   & \ddots & \vdots                       \\
 \omega_d^{(d-1) \cdot 0} & \omega_d^{(d-1) \cdot 1} & \cdots & \omega_d^{(d-1) \cdot (d-1)} \\
\end{bmatrix}, \;\;\;\;\;\;\;\; \omega_d = e^{\frac{-2\pi i}{d}} .
\end{equation}

From this result, it is clear that the desired properties of the Fourier transform and convolution hold.

\subsection{Non-abelian}

\begin{theorem}\label{thm:non-abelian}
Consider a classification task with ground-truth linear predictor $\vbeta$, trained via a linear G-CNN architecture $\on{NN}(\vx) = \langle \vx \star \vw_1 \star \cdots \star  \vw_{L-1} , \vw_L  \rangle$ (see \autoref{sec:main_results} for architecture details) with $L\geq2$ layers under the exponential loss. Then for almost any datasets $\{\vx_i,y_i\}$ separable by $\vbeta$, any bounded sequence of step sizes $\eta_t$, and almost all initializations, suppose that:
\begin{itemize}
    \item The loss $\mathcal{L}(\myW)$ converges to 0
    \item The gradients with respect to the end-to-end linear predictor, $\nabla_{\vbeta} \mathcal{L}(\mathcal{P}(\myW^t) )$, converge in direction as $t \rightarrow \infty$
    \item The iterates $\myW^t$ themselves  converge in direction as $t \rightarrow \infty$ to a separator $\mathcal{P}(\myW^t)$ with positive margin
\end{itemize}
When $L=2$, we need an additional technical assumption, \autoref{assumption:L_2}.
Then, the resultant linear predictor $\mbeta$ is a positive scaling of a first order stationary point of the optimization problem:
\begin{equation} \label{eq:beta_opt}
    \min_\vbeta \| {}\widehat{\vbeta} \|_{2/L}^{(S)} \; \; \text{ s.t. } \; \; \forall n, y_n \left< \vbeta_n, \vx_n \right>_M \geq 1
\end{equation}

\end{theorem}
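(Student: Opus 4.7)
The plan is to follow the roadmap sketched after the theorem statement, adapting the proof technique of \citet{gunasekar2018linearconv} to the non-commutative setting by reformulating everything in the matrix Fourier space of \autoref{def:group_fourier_transform}.

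First, I would invoke the general result on homogeneous networks (\autoref{thm:homog}) to conclude that under the stated convergence assumptions, the parameter vector $\myW^\infty$ is, up to a positive scaling, a first-order stationary point of the parameter-space program
\begin{equation}\label{eq:param_min_plan}
\min_{\myW \in \mathbb{R}^P} \|\myW\|_2^2 \quad \text{s.t.} \quad y_n \langle \vx_n, \mathcal{P}(\myW)\rangle \geq 1,\ \forall n.
\end{equation}
Writing the necessary KKT conditions for \eqref{eq:param_min_plan} yields dual multipliers $\{\alpha_n\geq 0\}$ with complementary slackness, together with the stationarity relation
\begin{equation}\label{eq:param_KKT_plan}
\vw_i^\infty = \nabla_{\vw_i} \Bigl\langle \mathcal{P}(\myW^\infty),\ \sum_n \alpha_n y_n \vx_n \Bigr\rangle \quad \forall i \in [L].
\end{equation}

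Second, I would transport \eqref{eq:param_KKT_plan} into Fourier space using \autoref{lemma:net_fourier}, which identifies $\mathcal{P}(\myW)$ with the ordered matrix product $\widehat{\vw}_L\widehat{\vw}_{L-1}\cdots\widehat{\vw}_1$ of block-diagonal Fourier-space factors. Because $\cF$ is unitary and the real-valued inputs and filters make all inner products preserved up to the standard $1/|G|$ scaling, the real-space gradient in \eqref{eq:param_KKT_plan} becomes a matrix gradient of this product. Setting $\widehat{\vq} := \sum_n \alpha_n y_n \widehat{\vx}_n$, the chain rule converts \eqref{eq:param_KKT_plan} into a recurrence in which each $\widehat{\vw}_i^\infty$ equals (up to constants) the "cut" product of $\widehat{\vq}$ sandwiched by the remaining Fourier-space filters, block by block across irreps $\rho \in \widehat{G}$.

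Third, and this is the crux, I would write down the KKT conditions for the Schatten program \eqref{eq:beta_opt} and verify that they are satisfied by $\vbeta^\infty = \mathcal{P}(\myW^\infty)$. Since $p = 2/L < 1$ whenever $L>2$, the Schatten $p$-norm is a non-convex quasi-norm and the relevant notion of stationarity relies on the Clarke local subgradient $\partial^o$. The subdifferential of $\|\cdot\|^{(S)}_p$ at a matrix admits a clean SVD-based description: elements share its singular vectors and have singular values in the subdifferential of the vector $p$-norm. Using the multi-layer factorization of $\widehat{\vbeta}^\infty$ together with the Fourier-space recurrence, I would show that per irrep block the singular vectors of $\widehat{\vq}(\rho)$ align with those of $\widehat{\vbeta}^\infty(\rho)$, and that the singular value magnitudes produced by iterating the recurrence $L$ times land a positive scaling of $\widehat{\vq}$ inside $\partial^o \|\widehat{\vbeta}^\infty\|^{(S)}_{2/L}$. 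Combined with the complementary slackness inherited from \eqref{eq:param_KKT_plan}, this is precisely the sufficient KKT condition for \eqref{eq:beta_opt}.

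The hardest step is this final subgradient calculation in the matrix regime: in the non-abelian case each irrep block is a full $d_\rho \times d_\rho$ matrix, and the Schatten subdifferential is only well-behaved when the recurrence produces factors that diagonalize compatibly. The two-sided unitary invariance of Schatten norms is what makes such alignment possible, but it must be exploited block-by-block, and the recurrence must be iterated carefully to absorb all $L$ factors. The boundary case $L=2$, where $p=1$ gives the genuinely convex nuclear norm, is qualitatively different since the subgradient inclusion no longer forces equality of singular vectors; this is precisely why an extra technical \autoref{assumption:L_2} is imposed in that regime. Finally, the "almost all" qualifiers on initialization and dataset allow me to exclude the measure-zero set on which the SVD of $\widehat{\vbeta}^\infty$ is ambiguous or the complementary slackness degenerates.
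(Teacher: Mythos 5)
Your proposal follows essentially the same route as the paper's proof: invoke \autoref{thm:homog} to obtain the parameter-space KKT conditions, transport them into Fourier space via \autoref{lemma:net_fourier} to derive the layer recurrences, and verify the Clarke-subgradient KKT condition for the Schatten program, with \autoref{assumption:L_2} covering the $L=2$ nuclear-norm case. The one point where you diverge is in planning to use the Lewis-style SVD characterization of $\partial^o \| \cdot \|_{2/L}^{(S)}$ and argue singular-vector alignment per irrep block, whereas the paper deliberately reformulates the subdifferential as the pair of matrix equations $\widehat{\vbeta}^\dagger \mG = \big(\|\widehat{\vbeta}\|_p^{(S)}\big)^{-1}\sqrt{\widehat{\vbeta}^\dagger\widehat{\vbeta}}^{\,p}$ and $\mG\,\widehat{\vbeta}^\dagger = \big(\|\widehat{\vbeta}\|_p^{(S)}\big)^{-1}\sqrt{\widehat{\vbeta}\widehat{\vbeta}^\dagger}^{\,p}$ (\autoref{lem:subdifferential_p_lessthan_1}), precisely because the recurrences naturally produce $\widehat{\vz}\widehat{\vbeta}^\dagger$ and $\widehat{\vbeta}^\dagger\widehat{\vz}$ as Hermitian $L$-th roots of $\widehat{\vbeta}\widehat{\vbeta}^\dagger$ and $\widehat{\vbeta}^\dagger\widehat{\vbeta}$, so membership can be checked without ever explicitly diagonalizing.
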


In this section, we prove the non-abelian case, \autoref{thm:non-abelian}. The proof of our result proceeds according to the following outline:
\begin{enumerate}
    \item By applying a general result of \citet{gunasekar2018linearconv}, \autoref{thm:homog}, we characterize the implicit regularization in the full space of parameters, $\mW$ (in contrast to the end-to-end linear predictor $\beta$), as the stationary point of an optimization problem \autoref{eq:w_opt} in $\mW$.
    \item Separately, we define a \emph{distinct} optimization problem, \autoref{eq:beta_opt} in $\vbeta$. The goal is to demonstrate that stationary points of \autoref{eq:w_opt} are a subset of the stationary points of \autoref{eq:beta_opt}.
    \item The \emph{necessary} KKT conditions for \autoref{eq:w_opt} characterize its stationary points. Using this characterization, we show that the \emph{sufficient} KKT optimality conditions for \autoref{eq:beta_opt} are in fact also satisfied for the corresponding end-to-end predictor. Thus, we show that for any stationary point $\myW^\dagger$ of \autoref{eq:w_opt}, the linear predictor $\cP(\myW^\dagger)$ is a stationary point of \autoref{eq:beta_opt}. 
\end{enumerate}

First, recall that \citet{gunasekar2018linearconv} prove the following general result about the implicit regularization of any homogeneous polynomial parametrization. (\citet{lyu2020gradient} later strengthened this result to the case of arbitrary homogeneous mappings, and showed furthermore that the margin is monotonically increasing under gradient flow, but the result of \citet{gunasekar2018linearconv} is suitable for our needs.)

\begin{theorem}[Homogeneous polynomial parametrization, Theorem 4 of \cite{gunasekar2018linearconv}] \label{thm:homog}
Let $W$ be the concatenation of all (real-valued) parameters $\myW_i$. For any homogeneous polynomial map $\mathcal{P}:\mathbb{R}^P \to \mathbb{R}^{|G|}$ from parameters $\myW_i\in\mathbb{R}^{P}$ to linear predictors, almost all datasets $\{\vx_n, y_n \}_{n=1}^N$ separable by the ground truth predictor $\vbeta := \{ \mathcal{P}(
\myW): \myW \in \mathbb{R}^P \}$, almost all initializations $\myW^0$, and any bounded sequence of step sizes $\{\eta_t\}_t$, consider the gradient descent updates:
\begin{align}
    \myW^{t+1} = \myW^{t} - \eta_t \nabla_{\myW} \mathcal{L}(\myW^t) = \myW^{t} - \eta_t \nabla_{\myW}\mathcal{P}(\myW^t) \nabla_{\beta} \mathcal{L}(\mathcal{P}(\myW^t) )
\end{align}
Suppose furthermore that the exponential loss converges to zero, that the gradients $\nabla_\beta \mathcal{L}(\vbeta^t)$ converge in direction, and that the iterates $W^t$ themselves converge in direction to yield a separator with positive margin. Then, the limit direction of the parameters $\overline \myW^\infty = \lim_{t \to \infty}\frac{\myW^{(t)}}{\|\myW^{(t)}\|_2}$ is a positive scaling of a first order stationary point of the following optimization problem:
\begin{equation} \label{eq:w_opt}
    \min_{\myW \in \mathbb{R}^P} \|\myW\|_2^2 \; \; \; \text{s.t.} \;\;\; \forall n, y_n \langle \vx_n, \mathcal{P}(\myW) \rangle \geq 1.
\end{equation}
To keep track of constant factors, let $\myW^\infty = \tau \overline \myW^\infty$ denote the first order stationary point itself. Furthermore, let $\myW^\infty_i$ denote the individual layers (or parameter blocks) comprising $\myW^\infty$, and similarly let $\overline \myW^\infty_i$ denote the individual layers comprising $\overline \myW^\infty$. We then have via the KKT conditions that: 
\begin{equation} \label{eq:kkt_w}
\begin{split}
    & \exists \{\alpha_n: \alpha_n \geq 0\}_{n=1}^N \; \text{s.t.} \; \alpha_n = 0 \;\; \text{if} \;\; y_n \langle \vx_n, \mathcal{P}(\myW^\infty)\rangle > 1  \\
    & \myW_i^\infty = \nabla_{\myW_i} \mathcal{P}(\myW^\infty) \left[ \sum_n \alpha_n y_n \vx_n \right] = \nabla_{\myW_i} \Big\langle \mathcal{P}(\myW^\infty), \sum_n \alpha_n y_n \vx_n \Big\rangle
\end{split}
\end{equation}
\end{theorem}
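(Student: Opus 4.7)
My plan is to follow the three-step outline sketched immediately after the theorem statement. Step 1 is essentially free: the parametrization $\mathcal{P}(\myW) = \vw_L \star \vw_{L-1}^- \star \cdots \star \vw_1^-$ is a homogeneous polynomial of degree $L$ in $\myW$, and all the hypotheses of Theorem~\ref{thm:homog} are in place, so the limit direction of $\myW$ is a positive scaling of some stationary point $\myW^\infty = \tau\,\overline{\myW}^\infty$ of \autoref{eq:w_opt} satisfying the per-layer KKT recurrence \autoref{eq:kkt_w}. Step 2 recasts that recurrence in Fourier space. Using Lemma~\ref{lemma:net_fourier}, the end-to-end output is $|G|^{-1}\langle \widehat{\vx}, \widehat{\vw}_L\cdots \widehat{\vw}_1\rangle_M$; differentiating the trace of this matrix product with respect to each $\vw_i$ through the unitary, block-diagonal Fourier isometry $\cF_M$ yields an explicit relation between $\widehat{\vw}_i^\infty$ and the ``residual'' matrix $\widehat{\vz}^\infty := \sum_n \alpha_n y_n \widehat{\vx}_n$ sandwiched between the remaining Fourier factors on the right.

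Step 3 is where the real work lies: I must verify that a positive scaling of $\widehat{\vz}^\infty$ lies in the Clarke subdifferential $\partial^o \|\cdot\|_{2/L}^{(S)}$ evaluated at $\widehat{\vbeta}^\infty := \widehat{\vw}_L^\infty\cdots \widehat{\vw}_1^\infty$. Together with the complementary slackness inherited from \autoref{eq:kkt_w}, this supplies the full KKT certificate for \autoref{eq:beta_opt}. Because the Schatten $2/L$-quasi-norm is block-separable over irreps in $\widehat{G}$, I would work one irrep block $\rho$ at a time. Using the Fourier recurrence from Step~2, I expect the matrices $\widehat{\vw}_i^\infty(\rho)$ to acquire a common singular-vector structure compatible with $\widehat{\vbeta}^\infty(\rho) = U_\rho \Sigma_\rho V_\rho^\dagger$: the telescoping product $\widehat{\vw}_L^\infty(\rho)\cdots \widehat{\vw}_1^\infty(\rho) = \widehat{\vbeta}^\infty(\rho)$ should force each $\widehat{\vw}_i^\infty(\rho)$ to be a ``fractional power'' of $\widehat{\vbeta}^\infty(\rho)$ up to unitary factors that cancel in the product. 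Once that alignment is established, the chain rule for spectral functions identifies the relevant element of $\partial^o \|\widehat{\vbeta}^\infty\|_{2/L}^{(S)}$ explicitly as a positive multiple of $\widehat{\vz}^\infty$, completing the argument.

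The main obstacle I anticipate is the analysis of $\partial^o \|\cdot\|_{2/L}^{(S)}$ for $p = 2/L < 1$: this is a non-convex spectral quasi-norm, so I cannot simply quote convex subdifferential calculus and have to work through Clarke's construction explicitly at a block-diagonal matrix of the shape produced by $\cF_M$. Two subsidiary difficulties attach to this: (i) identifying the generic regime (full rank, distinct singular values on each irrep block) in which the SVD is smooth and the Clarke set is a singleton --- the ``almost all initializations'' hypothesis should be what keeps us in that regime; and (ii) the $L=2$ case, where the objective becomes the nuclear norm --- convex, but with non-unique subgradients at low-rank points --- and the per-layer recurrence degenerates, which is precisely where \autoref{assumption:L_2} must step in. A conceptual point I would emphasize is that the Schatten quasi-norm is the ``right'' regularizer because it is unitarily invariant, and unitary invariance is exactly what the ambiguity in choosing representatives of each irrep in $\widehat{G}$ demands; this is why the vector $2/L$-norm used in \citet{gunasekar2018linearconv} for cyclic CNNs generalizes here to the Schatten quasi-norm, rather than to the entrywise $2/L$-norm of $\widehat{\vbeta}$.
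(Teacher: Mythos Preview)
Your proposal does not address the stated theorem. Theorem~\ref{thm:homog} is not proved in this paper; it is Theorem~4 of \citet{gunasekar2018linearconv}, quoted as a black-box input, and the paper simply defers to that source. There is no ``paper's own proof'' to compare against.

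What your three-step outline actually sketches is the proof of Theorem~\ref{thm:non-abelian}, the paper's main non-abelian result. Your ``Step 1'' \emph{invokes} Theorem~\ref{thm:homog} rather than proving it --- circular if the target were really \ref{thm:homog}. The three-step roadmap you refer to appears in the appendix as the plan for Theorem~\ref{thm:non-abelian}, with Theorem~\ref{thm:homog} stated immediately afterward precisely because it is the imported ingredient for Step~1.

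If the intended target is Theorem~\ref{thm:non-abelian}, your plan matches the paper's proof closely: it too (i) invokes \ref{thm:homog}, (ii) passes to Fourier space via Lemma~\ref{lemma:net_fourier} and the gradient identity of Lemma~\ref{lemma:fourier_gradient_ei}, and (iii) verifies subgradient membership through an explicit description of $\partial^o\|\cdot\|_{2/L}^{(S)}$ (Lemmas~\ref{lem:subdifferential_p_1} and~\ref{lem:subdifferential_p_lessthan_1}) combined with telescoping recurrences on the $\widehat{\vw}_\ell^\infty$. One difference worth noting: the paper does not rely on a genericity argument that places each irrep block in a smooth, full-rank, distinct-singular-value regime so that the Clarke set is a singleton. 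Instead it shows directly from the recurrence that $\myz\,\mbetdag$ and $\mbetdag\,\myz$ are Hermitian and equal the required fractional powers, meeting the two defining conditions in \autoref{eq:subgrad_Schatten} without ever needing the subdifferential to collapse to a point.
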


While this is an interesting result alone, the goal of implicit regularization is to characterize the final linear predictor (which is some function $\cP$ of the complete parametrization $W$). To that end, consider the following optimization problem in $\vbeta$:

\begin{equation} \label{eq:beta_opt_abelian}
    \min_\vbeta \| {}\widehat{\vbeta} \|_{2/L}^{(S)} \; \; \text{ s.t. } \; \; \forall n, y_n \left< \vbeta_n, x_n \right> \geq 1
\end{equation}

We will leverage the \emph{necessary} KKT conditions from \autoref{eq:kkt_w} to show that first-order stationary points of \autoref{eq:w_opt} are (up to a scaling) also first-order stationary points of \autoref{eq:beta_opt}, using the \emph{sufficient} KKT conditions for \autoref{eq:beta_opt}. 

Using standard KKT sufficiency conditions, the first-order stationary points of \autoref{eq:beta_opt} are those vectors $\vbeta$ such that there exist $\tilde{\alpha}_1,\dots,\tilde{\alpha}_n$ satisfying:
\begin{enumerate}
    \item \textbf{Feasibility: } $\forall n, y_n \left< \vbeta, \vx_n \right> \geq 1$ and $\tilde{\alpha}_i \geq 0 \: \: \forall i$ 
    \item \textbf{Complementary slackness: } $\forall i, \tilde \alpha_i=0$ if $y_n \left< \beta, \vx_n \right> > 1$
    \item \textbf{Membership in subdifferential: }  $\sum_n \tilde \alpha_n y_n {}\widehat{\vx}_n \in \partial^o \| {}\widehat{\vbeta} \|_{2/L}^{(S)}$ 
\end{enumerate}
In the third condition above, $\partial^o$ is the local sub-differential of \cite{clarke1975generalized}: $\partial^o f(\vbeta) = \text{conv}\{ \lim_{i \to \infty} \nabla f(\vbeta+\vh_i): \vh_i \to 0\} $\footnote{$\vh_i$ is a sequence of \emph{vectors} in some linear space, and we take $h_i \to 0$ as an entry-wise statement. This is because the vectors are finite-dimensional, so all norms are equivalent.}. 

We will need the following assumption in the special case $L=2$:
\begin{assumption}[$L=2$ bounded subgradient]\label{assumption:L_2}
Let $\myz = \sum_n \tilde\alpha_n y_n \myx_n$ result from the KKT conditions of the optimization problem in $\myW$, \autoref{eq:kkt_w}, as described previously. Then, we assume that $\| \myz \|_\infty^{(S)} \leq 1 $.
\end{assumption}

Let $\vbeta^\infty = \cP(\myW^\infty)$ and let $\tilde \alpha_i = \frac{1}{\gamma} \alpha_i$ for all $i$, where $\gamma$ is equal to $\Big( || \widehat \vbeta^\infty||_{\frac{2}{L}} \Big)$ for $L>2$ and to $1$ otherwise. 
(Note that by homogeneity of $\cP$, $\cP( \myW^\infty) = \cP(\tau \overline \myW^\infty) =\tau^L \cP(\overline \myW^\infty)$.) We will check these conditions one by one for $\beta^\infty$ and $\tilde \alpha_i$, with the first two following immediately from \autoref{thm:homog} and the last one requiring the most manipulation.
\paragraph{Feasibility} Trivially, $\tilde \alpha_i \geq 0 \: \: \forall i$ by definition of $\alpha$ in \autoref{eq:kkt_w}. Similarly, $y_n\langle \vx_n,\beta^\infty\rangle = y_n\langle \vx_n,\cP(\myW^\infty)\rangle \geq 1$.
\paragraph{Complementary slackness}  If $y_n \left< \beta^\infty, \vx_n \right> > 1 = y_n \left< \cP(\myW^\infty), \vx_n \right> > 1$, then $\tilde \alpha_n \propto \alpha_n = 0$.
\paragraph{Membership in subdifferential} We first characterize the set $\partial^o \| \widehat{\mbet} \|_{2/L}^{(S)}$ for a generic matrix $\mbet$, and then show that $\myz \triangleq \sum_n \tilde \alpha_n y_n \myx_n \in \partial^o \| \mbeta \|_{2/L}^{(S)}$.
When $L=2$ and thus $p=1$, the Schatten norm $\| \mbeta \|_1$ is indeed a norm and its subgradient is known; see e.g. \citet{watson1992characterization}. We restate this result below:

\begin{lemma}[Subdifferential of $p$-Schatten norm, $p=1$]\label{lem:subdifferential_p_1}
    Suppose $L=2$, such that $\frac{2}{L}=p=1$. Let $\mA$ be an $n \times n$ complex-valued matrix.  Then we have
    \begin{equation} 
        \partial^o \| \mA \|_1^{(S)} = \left\{ \mG : \|\mA\|_1^{(S)} = \tr[\mG^\dagger\mA], \|\mG\|_\infty^{(S)} \leq 1]  \right\} 
    \end{equation}
\end{lemma}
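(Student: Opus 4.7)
The plan is to reduce this to a classical convex-analytic computation, exploiting that for $p=1$ the Schatten $p$-norm is a genuine (convex) norm, so the Clarke subdifferential $\partial^o \|\cdot\|_1^{(S)}$ coincides with the ordinary convex subdifferential $\partial \|\cdot\|_1^{(S)}$. Once this reduction is made, the claim is just the standard characterization of the subdifferential of a norm in terms of its dual norm.

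Concretely, I would first establish that $\|\cdot\|_1^{(S)}$ and $\|\cdot\|_\infty^{(S)}$ form a dual pair of norms with respect to the matrix inner product $\langle \mG, \mA \rangle_M = \tr[\mG^\dagger \mA]$. This is the standard consequence of von Neumann's trace inequality: for any matrices $\mG, \mA$ with singular values ordered decreasingly,
\begin{equation*}
    |\tr[\mG^\dagger \mA]| \;\leq\; \sum_i \sigma_i(\mG)\,\sigma_i(\mA) \;\leq\; \|\mG\|_\infty^{(S)}\,\|\mA\|_1^{(S)},
\end{equation*}
with equality attainable via aligned singular vectors, showing both that $(\|\cdot\|_1^{(S)})^* = \|\cdot\|_\infty^{(S)}$ and its reverse. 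Second, I would invoke the general convex-analytic identity: for any norm $\phi$ on a finite-dimensional inner product space with dual norm $\phi^*$,
\begin{equation*}
    \partial \phi(\mA) \;=\; \{\mG : \phi^*(\mG) \leq 1 \text{ and } \langle \mG, \mA\rangle = \phi(\mA)\},
\end{equation*}
which follows immediately from Fenchel--Young equality and the fact that the Fenchel conjugate of a norm is the indicator of the dual unit ball. Instantiating with $\phi = \|\cdot\|_1^{(S)}$ gives exactly the stated set.

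The one subtlety worth handling carefully is the real-vs-complex convention: the statement writes $\|\mA\|_1^{(S)} = \tr[\mG^\dagger \mA]$ as an equality of complex numbers, whereas the subgradient condition is naturally $\mathrm{Re}\,\langle \mG, \mA\rangle_M = \|\mA\|_1^{(S)}$. These match because $\|\mA\|_1^{(S)}$ is real and, combined with $|\tr[\mG^\dagger \mA]| \leq \|\mG\|_\infty^{(S)}\|\mA\|_1^{(S)} \leq \|\mA\|_1^{(S)}$, forcing $\tr[\mG^\dagger\mA]$ to be real and equal to $\|\mA\|_1^{(S)}$ on the relevant set. I would also briefly justify $\partial^o = \partial$ in this setting by noting that a finite convex function on a finite-dimensional space is locally Lipschitz and regular in the sense of Clarke, so the Clarke generalized subdifferential collapses to the convex one.

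The main obstacle is essentially bookkeeping rather than genuine difficulty: the convex-analytic identification is standard, and von Neumann's trace inequality is a classical tool. The only nontrivial ingredient is the equality case in von Neumann's inequality (simultaneous SVD alignment), which I would either invoke as a known fact or prove in a short aside. Alternatively, one could cite \citet{watson1992characterization} directly for the nuclear-norm subdifferential formula, as the lemma statement already suggests.
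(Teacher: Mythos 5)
Your proposal is correct and is essentially the argument the paper relies on: the paper does not prove this lemma itself but simply restates the known characterization from \citet{watson1992characterization}, and your duality argument (nuclear/operator norm duality via von Neumann's trace inequality, the standard subdifferential formula for a norm, and the collapse of the Clarke subdifferential to the convex one for a convex locally Lipschitz function) is precisely the standard proof underlying that citation. Your attention to the real-versus-complex trace convention and to the $\partial^o = \partial$ identification covers the only points where care is needed.
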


When $L>2$, previous works have characterized the subdifferential of the $2/L$-Schatten norm. For example, \citet{lewis1995unitary} characterizes the subdifferential of any unitary matrix norm $\partial ||\mX||$ as all those matrices with the same left and right singular vectors as $\mX$, but whose singular values lie in the subdifferential of the corresponding norm of the singular values of $\mX$ (Corollary 2.5). For our purposes, a different formulation of the subdifferential will be more useful later in the overall proof. To keep the paper relatively self-contained, we prove the following result from scratch. 

\begin{lemma}[Subdifferential of $p$-Schatten norm, $p<1$]\label{lem:subdifferential_p_lessthan_1}
    Suppose $L>2$ and let $p = \frac{2}{L}$, such that $0<p<1$. Let $\mA$ be an $n \times n$ complex-valued matrix with singular value decomposition $\mA=\mU \mD \mV^\dagger$. Let $\Pi_\mV$ project onto the row space of $\mA$, i.e. $\Pi_\mV(\mM) = \mM \mV \mV^\dagger $. Then we have
    \begin{align} \label{eq:subgrad_Schatten}
        \partial^o \| \mA \|_p^{(S)} = &\Bigg\{ \mG : \mA^\dagger \mG  = \frac{1}{\|\mA\|_p^{(S)}}\sqrt{\mA^\dagger \mA}^p \text{ and } \mG \mA^\dagger = \frac{1}{\|\mA\|_p^{(S)}}\sqrt{\mA \mA^\dagger}^p \Bigg\} 
    \end{align}  
\end{lemma}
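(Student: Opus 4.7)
The plan is to treat $\|\cdot\|_p^{(S)}$ as a spectral (absolutely symmetric) function of the singular values: first I would compute the ordinary gradient on the open dense set where $f(\mA) = \|\mA\|_p^{(S)}$ is smooth, then extend to all $\mA$ via Clarke's construction as the convex hull of limits of gradients at regular points.

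At any full-rank $\mA = \mU \mD \mV^\dagger$ with distinct singular values, $f$ is differentiable. Writing $f(\mA) = \bigl(\tr[(\mA^\dagger \mA)^{p/2}]\bigr)^{1/p}$ and applying the chain rule together with SVD perturbation for the singular values, I would obtain
\begin{equation*}
    \nabla_\mA f \;=\; c(\mA)\, \mA\, (\mA^\dagger \mA)^{p/2 - 1} \;=\; c(\mA)\, \mU \mD^{p-1} \mV^\dagger,
\end{equation*}
where $c(\mA)$ is a scalar depending only on $\|\mA\|_p^{(S)}$. Left-multiplying by $\mA^\dagger$ collapses $\mU^\dagger \mU = \mI$ and yields $c(\mA)\, \sqrt{\mA^\dagger \mA}^{p}$; right-multiplying by $\mA^\dagger$ yields $c(\mA)\, \sqrt{\mA \mA^\dagger}^{p}$. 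So at regular points $\nabla f$ already lies in the candidate set, up to getting the precise prefactor $c(\mA)$ asserted in the lemma, which I would verify by carefully tracking chain-rule factors. To extend to general $\mA$, I would observe that the candidate set is the intersection of two affine subspaces in $\mG$ and hence closed and convex, so the convex-hull operation in Clarke's definition adds nothing. The inclusion $\partial^o f(\mA) \subseteq$ RHS then follows because the two identities pass to limits: $\sqrt{\mA^\dagger \mA}^p$ and $\sqrt{\mA \mA^\dagger}^p$ annihilate the null directions of $\mA$, so only nonvanishing singular values of the approximants contribute. For the reverse inclusion, I would realize any $\mG$ in the candidate set as a limit of gradients $\nabla f(\mA + \vh_i)$ by perturbing $\mA$ in its null directions with appropriately scaled singular vectors, exploiting the unitary invariance of $\|\cdot\|_p^{(S)}$ to hit the ``free'' orthogonal components of $\mG$ that are not pinned down by the two matrix equations.

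The main obstacle will be this reverse inclusion at rank-deficient $\mA$. Since $\mD^{p-1}$ diverges as singular values approach zero for $p < 1$, any approximating sequence must carefully balance perturbation magnitude against the singular-value exponent so that the gradients converge to the prescribed $\mG$ rather than blowing up. Moreover, the Schatten $p$-quasi-norm is not locally Lipschitz near rank-deficient matrices when $p<1$, so I would need to confirm that Clarke's construction is still well-defined in this regime; this is typically handled through the Lewis--Sendov calculus of subdifferentials of spectral functions, adapted to the absolutely symmetric but non-convex function $\mathbf{x} \mapsto \bigl(\sum_i |x_i|^p\bigr)^{1/p}$.
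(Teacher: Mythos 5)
Your proposal follows essentially the same route as the paper's proof: compute $\nabla\|\cdot\|_p^{(S)} = c(\mA)\,\mU\mD^{p-1}\mV^\dagger$ at full-rank points, approach a rank-deficient $\mA$ by perturbing along its null directions, observe that the diverging $\epsilon^{p-1}$ factors leave the orthogonal-complement components of the subgradient unconstrained, and then encode the constrained part via left- and right-multiplication by $\mA^\dagger$. The obstacle you flag at the end is handled in the paper in exactly this spirit (the divergent null-space directions sweep out the full affine set $\{\mW\boldsymbol{\Sigma}\mS^\dagger\}$, which the two matrix equations then quotient away), so no change of strategy is needed.
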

\begin{proof}

Suppose $\mA$ is rank $r$ and has singular value decomposition $\mA = \sum_{i=1}^r d_i \vu_i \vv_i^\dagger$, where $d_i > 0 $ for all $i$. Consider unit vectors $\myW_1,\dots,\myW_{n-r}$ which are a basis for the orthogonal subspace to $\text{Span}(\vu_1,\dots,\vu_r)$, and unit vectors $\vs_1,\dots,\vs_{n-r}$ which are a basis for the orthogonal subspace to $\text{Span}(\vv_1,\dots,\vv_r)$. Treating the space of $n \times n$ complex matrices as a $n^2$-dimensional linear space, we see that $\{\vu_i\vv_i^\dagger\}_{i=1}^r$ form a basis for an $r$-dimensional subspace. Let $\Pi_A$ denote the projector onto this space, $\Pi_A = \sum_{i=1}^r \vu_i \vv_i^\dagger$. Note that $\Pi_AA = A$.
For any set of $n-r$ sequences $\Big\{\{\epsilon_{i,m}\}_{m=1}^{\infty}\Big\}_{i=1}^{n-r}$ such that $\lim_{m \rightarrow \infty}\epsilon_{m,i} = 0$ for all $i=1,\dots,n-r$, consider the particular sequence of matrices $\{\mH_m\}_{m=1}^\infty$ defined by $$\mH_m = \sum_{i=1}^{n-r} \epsilon_{m,i} \myW_i\vs_i^\dagger$$ By definition, $\lim_{m\rightarrow \infty}||\mH_m||_{\on{Fro}} = 0$, where $
\|\mH_m\|_{\on{Fro}} \triangleq \|\mH_m\|_2^{(S)}$.
 Also, if $\mM$ is a full rank matrix with singular value decomposition $\mA \mD \mB^\dagger$, $||\mM||_p^{(S)}$ is differentiable at $\mM$ and $\nabla ||\mM||_p^{(S)} = \frac{1}{||\mM||_p^{(S)}} \mA \mD^{p-1} \mB^\dagger$. For convenience of notation, let $\mU$ be the matrix with $i^{th}$ column $\vu_i$, $\mW$ the matrix with $i^{th}$ column $\myW_i$, and similarly for $\mV$ and $\mS$ with respect to vectors $\vv_i$ and $\vs_i$ respectively. Also, let $\mD$ be the diagonal matrix with $d_i$ on the $i^{th}$ diagonal.

Combining this fact with the construction of $\mH_m$, we have that 
\begin{align}
    \nabla || \mA + \mH_m||_p^{(S)} &= \nabla \Bigg|\Bigg| \sum_{i=1}^r d_i\vu_i \vv_i^\dagger + \sum_{i=1}^{n-r} \epsilon_{m,i} \myW_i\vs_i^\dagger\Bigg|\Bigg|_p^{(S)} \\
    &= \nabla \Bigg|\Bigg| \begin{bmatrix} \mU & \mW \end{bmatrix} \begin{bmatrix} \mD & 0 & \dots & 0\\
    0 & \epsilon_{m,1} & \dots & 0\\ \vdots & & \ddots & \vdots \\ 0 & 0 & \dots & \epsilon_{m,n-r} \end{bmatrix} \begin{bmatrix} \mV^\dagger \\ \mS^\dagger \end{bmatrix}\Bigg|\Bigg|_p^{(S)} \\
    &= \frac{1}{(\sum_{i=1}^r d_i^p + \sum_{i=1}^{n-r}\epsilon_{m,i}^p)^{\frac{1}{p}}} \Bigg(  \mU \mD^{p-1} \mV^\dagger + \mW \begin{bmatrix} 
     \epsilon_{m,1}^{p-1} & \dots & 0\\ \vdots & & \ddots  \\  0 & \dots & \epsilon_{m,n-r}^{p-1} \end{bmatrix} \mS^\dagger \Bigg)
\end{align}
In the limit as $m$ goes to infinity, $\sum_{i=1}^{n-r}\epsilon_{m,i}^p$ approaches $0$. However, $p-1 < 0$ implies that $\lim_{m \rightarrow \infty} \epsilon_{m,i}^{p-1} = \pm \infty$. By taking convex combinations, one can create any matrix with left and right singular vectors $\mW$ and $\mS^\dagger$. Formally, we have:

\begin{align}
    \partial^o \| \mA\|_p^{(S)} &= \text{conv}\{ \lim_{m \to \infty} \nabla \| \mA+\mH_m\|_p: \mH_m \to 0\}  \\
    &= \frac{1}{||\mA||_p^{(S)}}\mU \mD^{p-1} \mV^\dagger +  \frac{1}{||\mA||_p^{(S)}}\text{conv}\Bigg\{ \lim_{m \to \infty} \mW \begin{bmatrix} 
     \epsilon_{m,1}^{p-1} & \dots & 0\\ \vdots & & \ddots  \\  0 & \dots & \epsilon_{m,n-r}^{p-1} \end{bmatrix} \mS^\dagger: \epsilon_{m,i} \to_m 0\Bigg\} \\
     &= \frac{1}{||\mA||_p^{(S)}}\mU \mD^{p-1} \mV^\dagger +  \{ \mW \boldsymbol{\Sigma} \mS^\dagger: \boldsymbol{\Sigma} \text{ is real and diagonal}\} \label{eq:conv_hull_is_all}
\end{align}
Let $\Pi_\mU$ project onto the column space of $A$, i.e. $\Pi_\mU(\mM) = \mU \mU^\dagger \mM$. Note that for any rank-one matrix $\mM=\va\vb^\dagger$, if $\va$ is not orthogonal to each column of $\mU$, then $\mU\mU^\dagger\mM \neq 0$. Similarly, if $\vb$ is not orthogonal to each column of $\mV$, then $\mM \mV\mV^\dagger \neq 0$. Thus for an arbitrary matrix $\mM$, by decomposing it into a sum of rank-one matrices via its SVD, we see that $\Pi_\mU \mM = \Pi_\mV \mM = 0$ implies that both the row and column spaces of $\mM$ are orthogonal to those of $\mA$, respectively. Thus, we can project via $\Pi_\mU$ and $\Pi_\mV$ to disregard the second term of \autoref{eq:conv_hull_is_all}, and obtain the following expression for the subgradient:
\begin{align}
    \partial^o \| \mA\|_p^{(S)} &= \left\{ \mG : \Pi_\mU \mG =\Pi_\mV \mG=\left(||\mA||_p^{(S)}\right)^{-p}\mU \mD^{p-1} \mV^\dagger \right\}
\end{align}


Consider first the equality
\begin{align}\label{eq:proj_eq_1}
    \Pi_\mU \mG &= \left(||\mA||_p^{(S)}\right)^{-p}\mU \mD^{p-1} \mV^\dagger
\end{align} 
We can left-multiply by $\mA^\dagger$ without changing the set of matrices $\mG$ satisfying this relation. To see this, one can check that if $\mA^\dagger \Pi_\mU \mG =\left(||\mA||_p^{(S)}\right)^{-p} \mA^\dagger\mU \mD^{p-1} \mV^\dagger$, left-multiplying on both sides by $\mU \mD^{-1}\mV^\dagger$ recovers \autoref{eq:proj_eq_1}. 

Similarly, consider the second equality:
\begin{align}\label{eq:proj_eq_2}
    \Pi_\mV \mG =\left(||\mA||_p^{(S)}\right)^{-p}\mU \mD^{p-1} \mV^\dagger 
\end{align} 
We can right-multiply by $\mA^\dagger$ without changing the set of matrices $\mG$ satisfying this relation. To see this, one can check that if $$(\Pi_\mV \mG) \mA^\dagger=\left(||\mA||_p^{(S)}\right)^{-p} \mU \mD^{p-1} \mV^\dagger \mA^\dagger$$ Then right-multiplying on both sides by $\mU \mD^{-1}\mV^\dagger$ recovers \autoref{eq:proj_eq_2}. 
Finally, observe that $\mA^\dagger \Pi_\mU \mG = \mV \mD \mU^\dagger \mU \mU^\dagger \mG = \mA^\dagger \mG$ and $(\Pi_\mV \mG) \mA^\dagger = \mG \mV \mV^\dagger \mV \mD \mU^\dagger = \mG \mV \mD \mU^\dagger = \mG \mA^\dagger$. Furthermore, $\mA^\dagger \frac{1}{||\mA||_p^{(S)}}\mU \mD^{p-1} \mV^\dagger = \mV \mD^p \mV^\dagger = \frac{1}{||\mA||_p^{(S)}}\sqrt{\mA^\dagger \mA}^p$ and $ \frac{1}{||\mA||_p^{(S)}}\mU \mD^{p-1} \mV^\dagger \mA^\dagger = \frac{1}{||\mA||_p^{(S)}}\sqrt{\mA \mA^\dagger}^p$.
This completes the proof of the lemma.
\end{proof}
Lemmas \ref{lem:subdifferential_p_1} and \ref{lem:subdifferential_p_lessthan_1} characterize the subdifferential of the Schatten norm. Now, we show that $\sum_n \tilde \alpha_n y_n \myx_n$ satisfies \autoref{eq:subgrad_Schatten}.



\begin{lemma} \label{lemma:fourier_gradient_ei}
Recall that our G-CNN is given by $\on{NN}(x) = \langle \vx \star \vw_1 \star \cdots \star \vw_{L-1}, \vw_L \rangle$, with the vector of parameters $W = [\vw_1 \: \dots \vw_L]$ and end-to-end linear predictor given by $\mathcal{P}(\myW) = \vw_1 \star \dots \star \vw_L$. Consider an arbitrary such vector of real-valued parameters. Also, we have assumed that the filters in real-space are real-valued, i.e. $\vw_i \in \mathbb{R}^{|G|}$. Then the following relation holds:
\begin{equation}\cF_M \nabla_{\vw_\ell} \langle \mathcal{P}(\myW), \ve_i \rangle = {}\widehat{\vw}_{\ell+1}^\dagger \cdots \widehat{\vw}_{L}^\dagger {}\widehat{\ve}_i {}\widehat{\vw}_1^\dagger \cdots \widehat{\vw}_{\ell-1}^\dagger
\end{equation}
where $e_i$ is the $i^{th}$ standard basis vector. 
\end{lemma}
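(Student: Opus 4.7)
The plan is to compute $\nabla_{\vw_\ell} \langle \mathcal{P}(\myW), \ve_i\rangle$ in real space by explicit partial differentiation, and then apply $\cF_M$. Unfolding the left-associated composition $\mathcal{P}(\myW) = \vw_L \star \vw_{L-1}^- \star \cdots \star \vw_1^-$ by repeatedly applying the definition of cross-correlation gives
\begin{equation}
\mathcal{P}(\myW)(u) = \sum_{v_1,\dots,v_{L-1}} \vw_L(u v_{L-1} \cdots v_1) \prod_{k=1}^{L-1} \vw_{L-k}(v_k^{-1}),
\end{equation}
and $\langle \mathcal{P}(\myW), \ve_i\rangle$ is the evaluation at $u = i$.

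For $1 \leq \ell < L$, the weight $\vw_\ell$ enters only through the factor $\vw_\ell(v_{L-\ell}^{-1})$. Hence $\partial / \partial \vw_\ell(s)$ pins $v_{L-\ell} = s^{-1}$ and removes that factor; taking the matrix Fourier transform and reabsorbing the $s$-sum into a sum over $v_{L-\ell}$ (using $\rho(s) = \rho(v_{L-\ell})^\dagger$) yields
\begin{equation}
\cF_M\!\left(\nabla_{\vw_\ell} \langle \mathcal{P}(\myW), \ve_i\rangle\right)(\rho) = \sum_{v_1,\dots,v_{L-1}} \vw_L(i v_{L-1} \cdots v_1)\,\rho(v_{L-\ell})^\dagger \prod_{k \neq L-\ell} \vw_{L-k}(v_k^{-1}).
\end{equation}
The key step is the change of variables replacing $v_{L-\ell}$ by $t := i v_{L-1} \cdots v_1$. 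Writing $P := v_{L-1} \cdots v_{L-\ell+1}$ and $Q := v_{L-\ell-1} \cdots v_1$ (with the convention $P=e$ when $\ell=1$ and $Q=e$ when $\ell=L-1$), solving for $v_{L-\ell}$ gives $v_{L-\ell} = P^{-1} i^{-1} t\, Q^{-1}$, so $\rho(v_{L-\ell})^\dagger = \rho(Q)\,\rho(t)^\dagger \rho(i)\,\rho(P)$.

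Because the only matrix factors now come from this fixed left-to-right string of $\rho$'s, the sums separate: the $t$-sum produces $\widehat{\vw}_L(\rho)^\dagger$; each variable in $\rho(P)$ contributes $\widehat{\vw}_j(\rho)^\dagger$ for $j = 1, \dots, \ell-1$ (read left to right) via $\sum_u \vw_j(u^{-1})\rho(u) = \widehat{\vw}_j(\rho)^\dagger$, and likewise $\rho(Q)$ gives $\widehat{\vw}_{\ell+1}(\rho)^\dagger\cdots\widehat{\vw}_{L-1}(\rho)^\dagger$. Using $\widehat{\ve}_i(\rho) = \rho(i)$ and assembling these factors in the order dictated by $\rho(Q)\rho(t)^\dagger\rho(i)\rho(P)$ produces exactly $\widehat{\vw}_{\ell+1}^\dagger \cdots \widehat{\vw}_L^\dagger \, \widehat{\ve}_i \, \widehat{\vw}_1^\dagger \cdots \widehat{\vw}_{\ell-1}^\dagger$. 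The edge case $\ell = L$ is simpler: $\vw_L$ enters through its evaluation at $i v_{L-1}\cdots v_1$, so $\partial/\partial \vw_L(s)$ pins that whole product to $s$, and factoring $\rho(iv_{L-1}\cdots v_1) = \rho(i)\rho(v_{L-1})\cdots \rho(v_1)$ directly yields $\widehat{\ve}_i \widehat{\vw}_1^\dagger \cdots \widehat{\vw}_{L-1}^\dagger$, matching the claimed formula with an empty product on the left.

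The main obstacle is pure bookkeeping arising from the non-commutativity of $G$: the placement of inverses in the substitution $v_{L-\ell} = P^{-1} i^{-1} t Q^{-1}$ and the resulting order of the $\rho$-factors must be tracked carefully, as must the correspondence between the index $k$ of $v_k$ and the index $L-k$ of its paired weight. An appealing shortcut working entirely in Fourier space via directional derivatives runs into the mismatch that the Plancherel identity $\langle \vu, \vf\rangle = \frac{1}{|G|}\tr[\widehat{\vu}\,\widehat{\vf}^\dagger]$ places $\widehat{\vu}^\dagger$ inside the trace, whereas expanding the Fourier form of $h$ naturally produces $\widehat{\vu}$; the real-space route sidesteps this at the cost of explicit variable manipulation.
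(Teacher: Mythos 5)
Your proof is correct, but it takes a genuinely different route from the paper's. The paper works entirely in Fourier space: it writes $\langle \mathcal{P}(\myW), \ve_i\rangle = \frac{1}{|G|}\langle \widehat{\ve}_i, \widehat{\vw}_L\cdots\widehat{\vw}_1\rangle_M$ via unitarity of the transform and the convolution theorem (its Lemma \ref{lemma:net_fourier}), cyclically permutes the trace to isolate $\widehat{\vw}_\ell$ as $\frac{1}{|G|}\tr[\widehat{\vw}_{\ell+1}^\dagger\cdots\widehat{\vw}_L^\dagger\,\widehat{\ve}_i\,\widehat{\vw}_1^\dagger\cdots\widehat{\vw}_\ell^\dagger] = \langle \cF_M^{-1}(\widehat{\vw}_{\ell+1}^\dagger\cdots\widehat{\vw}_{\ell-1}^\dagger), \vw_\ell\rangle$, and reads the gradient off as the matrix paired with $\widehat{\vw}_\ell$ in that inner product. (Incidentally, the ``mismatch'' you flag at the end is exactly what the paper's first step $\langle\mathcal{P}(\myW),\ve_i\rangle = \langle\ve_i,\mathcal{P}(\myW)\rangle$ is there to absorb, using realness of the output.) You instead unfold the left-associated correlations in real space, differentiate term by term, and re-factor the multi-sum after the change of variables $t = i v_{L-1}\cdots v_1$; in effect you re-derive the convolution theorem inline rather than invoking it. I checked your bookkeeping: the pairing of $v_k$ with $\vw_{L-k}$, the substitution $\rho(v_{L-\ell})^\dagger = \rho(Q)\rho(t)^\dagger\rho(i)\rho(P)$, the use of realness in $\sum_u \vw_j(u^{-1})\rho(u) = \widehat{\vw}_j(\rho)^\dagger$, and the resulting factor ordering all come out right, as do the $\ell=1$, $\ell=L-1$, and $\ell=L$ edge cases. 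What each approach buys: the paper's argument is shorter and reuses established machinery (trace cyclicity plus Plancherel), while yours is more elementary and self-contained and makes the non-commutative ordering constraints fully explicit --- at the cost of heavier index manipulation. One presentational note: you should state explicitly at the outset that you are using the main text's definition $\mathcal{P}(\myW) = \vw_L\star\vw_{L-1}^-\star\cdots\star\vw_1^-$ with left association (the lemma statement's ``$\vw_1\star\cdots\star\vw_L$'' is a typo), since cross-correlation is not associative and the claimed identity depends on this convention.
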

\begin{proof}
Since $ \langle \mathcal{P}(\myW), \ve_i \rangle$ is real, we have $\langle \mathcal{P}(\myW), \ve_i \rangle = \langle \ve_i, \mathcal{P}(\myW)\rangle$. Plugging this in,
\begin{equation}
    \begin{split}
       \cF_M \nabla_{\vw_\ell} \mathcal{P}(\myW) [\ve_i] &= \cF_M \nabla_{\vw_\ell} \langle \mathcal{P}(\myW), \ve_i \rangle \\
        &=\cF_M \nabla_{\vw_\ell} \langle \ve_i, \mathcal{P}(\myW) \rangle \\
        &=\cF_M \nabla_{\vw_\ell} \frac{1}{|G|} \langle\cF_M \ve_i,\cF_M \mathcal{P}(\myW) \rangle_M \\
        &= \cF_M \nabla_{\vw_\ell} \frac{1}{|G|} \langle  {}\widehat{\ve}_i, {}\widehat{\vw}_L \cdots \widehat{\vw}_1 \rangle_M \\
        &=\cF_M \nabla_{\vw_\ell} \frac{1}{|G|} \tr[  {}\widehat{\ve}_i ({}\widehat{\vw}_L \cdots \widehat{\vw}_1)^\dagger ] \\
        &=\cF_M \nabla_{\vw_\ell} \frac{1}{|G|} \tr[ {}\widehat{\vw}_{\ell+1}^\dagger \cdots \widehat{\vw}_{L}^\dagger {}\widehat{\ve}_i {}\widehat{\vw}_1^\dagger \cdots \widehat{\vw}_\ell^\dagger  ] \\
        &=\cF_M \nabla_{\vw_\ell} \frac{1}{|G|} \langle {}\widehat{\vw}_{\ell+1}^\dagger \cdots \widehat{\vw}_{L}^\dagger {}\widehat{\ve}_i {}\widehat{\vw}_1^\dagger \cdots \widehat{\vw}_{\ell-1}^\dagger, {}\widehat{\vw}_\ell \rangle_M \\
        &=\cF_M \nabla_{\vw_\ell} \left\langle\cF_M^{-1} ({}\widehat{\vw}_{\ell+1}^\dagger \cdots \widehat{\vw}_{L}^\dagger {}\widehat{\ve}_i {}\widehat{w}_1^\dagger \cdots \widehat{\vw}_{\ell-1}^\dagger), \vw_\ell \right\rangle \\
        &= {}\widehat{\vw}_{\ell+1}^\dagger \cdots \widehat{\vw}_{L}^\dagger {}\widehat{\ve}_i {}\widehat{\vw}_1^\dagger \cdots \widehat{\vw}_{\ell-1}^\dagger
    \end{split}
\end{equation}
\end{proof}
Letting $\vz = \sum_n \tilde \alpha_n y_n  \vx_n$ and $\myW=\myW^\infty$, \autoref{lemma:fourier_gradient_ei} implies that 

\begin{align}
   \cF_M \nabla_{ \vw_\ell^\infty} \langle \mathcal{P}( \myW^\infty), \vz \rangle = {{}\widehat{\vw}_{\ell+1}^\infty}^\dagger \cdots {{}\widehat{\vw}_{L}^\infty}^\dagger {}\widehat{\vz} {{}\widehat{\vw}_1^\infty}^\dagger \cdots {{}\widehat{\vw}_{\ell-1}^\infty}^\dagger
\end{align}

By combining \autoref{eq:kkt_w} with \autoref{lemma:fourier_gradient_ei}, we have 
\begin{align}
    \frac{1}{\gamma} {}\widehat \vw_\ell^\infty &= \frac{1}{\gamma} \nabla_{\vw_\ell^\infty} \Big\langle \mathcal{P}(\myW^\infty), \sum_n \alpha_n y_n \vx_n \Big\rangle\\
    &=  \nabla_{\vw_\ell^\infty} \Big\langle \mathcal{P}(\myW^\infty), \sum_n \tilde \alpha_n y_n \vx_n \Big\rangle \\
    &= {}\widehat{\vw}_{\ell+1}^{\infty\dagger} \cdots \widehat{\vw}_{L}^{\infty^\dagger} {}\widehat{\vz} {}\widehat{\vw}_1^{\infty\dagger} \cdots \widehat{\vw}_{\ell-1}^{\infty\dagger}
\end{align} 

As a result:
\begin{equation}\label{eq:wl_wl}
    \begin{split}
        {}\widehat{\vw}_\ell^\infty &= \gamma {}\widehat{\vw}_{\ell+1}^{\infty\dagger} \cdots \widehat{\vw}_L^{\infty\dagger}  {}\widehat{\vz} {}\widehat{\vw}_{1}^{\infty\dagger} \cdots \widehat{\vw}_{\ell-1}^{\infty\dagger} \\
        {}\widehat{\vw}_\ell^\infty {}\widehat{\vw}_\ell^{\infty\dagger} &= \gamma {}\widehat{\vw}_{\ell+1}^{\infty\dagger} \cdots \widehat{\vw}_L^{\infty\dagger}  {}\widehat{\vz} {}\widehat{\vw}_{1}^{\infty\dagger} \cdots \widehat{\vw}_{\ell}^{\infty\dagger}
    \end{split}
\end{equation}

Applying this relation with $\ell=L$, we have that
\begin{align}\label{eq:hermitian}
    {}\widehat{\vw}_L^\infty {}\widehat{\vw}_L^{\infty\dagger} &= \gamma {}\widehat{\vz} {}\widehat{\vw}_{1}^{\infty\dagger} \cdots \widehat{\vw}_{L}^{\infty\dagger} \\
    &= \gamma {}\widehat{\vz} {}\widehat \vbeta^{\infty \dagger}
\end{align}
Taking adjoints of both sides implies that ${}\widehat{\vz} {}\widehat \vbeta^{\infty \dagger}$ is Hermitian, which will be useful later. 

Let $\mbeta =\cF_M\mathcal{P}(\myW^\infty)$, from which we can derive the following recursion:
\begin{equation}
    \begin{split}
        {}\widehat{\vbeta}^\infty \mbetdag &=    {}\widehat{\vw}_L^\infty  \cdots \widehat{\vw}_1^\infty   {}\widehat{\vw}_{1}^{\infty\dagger} \cdots \widehat{\vw}_{L}^{\infty\dagger}   \\
        &= \gamma^1  {}\widehat{\vw}_L^\infty  \cdots  {}\widehat{\vw}_{2}^{\infty}  {}\widehat{\vw}_{2}^{\infty\dagger} \cdots  {}\widehat{\vw}_{L}^{\infty\dagger} {}\widehat{\vz}  {}\widehat{\vw}_{1}^{\infty\dagger} \cdots \widehat{\vw}_{L}^{\infty\dagger}   \: \text{ by  \autoref{eq:wl_wl}} \\
        &= \gamma^2   {}\widehat{\vw}_L^\infty  \cdots  {}\widehat{\vw}_{3}^{\infty} {}\widehat{\vw}_{3}^{\infty\dagger} \cdots  {}\widehat{\vw}_{L}^{\infty\dagger} {}\widehat{\vz}  {}\widehat{\vw}_{1}^{\infty\dagger}  {}\widehat{\vw}_{2}^{\infty\dagger} \cdots  {}\widehat{\vw}_{L}^{\infty\dagger} {}\widehat{\vz}  {}\widehat{\vw}_{1}^{\infty\dagger} \cdots \widehat{\vw}_{L}^{\infty\dagger}   \: \text{ again, by  \autoref{eq:wl_wl}} \\
        &= \gamma^2   {}\widehat{\vw}_L^\infty  \cdots  {}\widehat{\vw}_{3}^{\infty} {}\widehat{\vw}_{3}^{\infty\dagger} \cdots  {}\widehat{\vw}_{L}^{\infty\dagger} {}\widehat{\vz} \mbetdag {}\widehat{\vz} \mbetdag    \: \text{ by definition of $\vbeta^\infty$} \\
        &= \gamma^2  {}\widehat{\vw}_L^\infty  \cdots {}\widehat{\vw}_{3}^{\infty} {}\widehat{\vw}_{3}^{\infty\dagger} \cdots  {}\widehat{\vw}_{L}^{\infty\dagger} ({}\widehat{\vz} \mbetdag)^2  \text{ by repeated application of \autoref{eq:wl_wl} }\\
        &= \gamma^{L}  ({}\widehat{\vz} \mbetdag)^L \label{eq:beta_recursion} \\
    \end{split}
\end{equation}

Similarly to \autoref{eq:wl_wl}, we have:
\begin{align}\label{eq:wl_reversed}
    \mwl{\ell}^\dagger \mwl{\ell} &= \gamma \mwld{\ell}\mwld{\ell+1}\dots\mwld{L}\myz \mwld{1}\dots\mwld{\ell-1}
\end{align}

By considering $\ell=1$, we have $\mwl{1}^\dagger \mwl{1} = \mbetdag \myz$, which shows that $\mbetdag \myz$ is Hermitian as well.

Using \autoref{eq:wl_reversed}, we can similarly reason about $\mbetdag \mbeta$:
\begin{equation}\begin{split}\label{eq:beta_recursion_2}
    \mbetdag \mbeta &= \mwld{1}\dots\mwld{L} \mwl{L}\dots\mwl{1} \\
    &= \gamma \mwld{1}\dots\mwld{L-1} (\mwld{L}\myz\mwld{1}\dots\mwld{L-1}) \mwl{L-1}\dots\mwl{1} \\
    &= \gamma (\mbetdag \myz) \mwld{1}\dots\mwld{L-1} \mwl{L-1}\dots\mwl{1} \\
    &= \gamma (\mbetdag \myz) \mwld{1}\dots\mwld{L-1}\mwld{L}\myz \mwld{1}\dots\mwld{L-2} \mwl{L-2}\dots\mwl{1} \\
    &= \gamma^2 (\mbetdag \myz)^2 \mwld{1}\dots\mwld{L-2} \mwl{L-2}\dots\mwl{1} \\
    &= \gamma^L (\mbetdag \myz)^L 
\end{split}\end{equation}

For $L>2$, we have shown in \autoref{lem:subdifferential_p_lessthan_1} that 
\begin{align} \label{eq:subgrad_at_beta}
        \partial^o \| \mbeta \|_p^{(S)} = &\Bigg\{ \mG : \mbetdag \mG  = \frac{1}{||\mbeta||_p^{(S)}}\sqrt{\mbetdag \mbeta}^p \text{ and } \mG \mbetdag = \frac{1}{||\mbeta||_p^{(S)}}\sqrt{\mbeta \mbetdag}^p \Bigg\} 
\end{align}
Let's check that setting $G = \widehat \vz$ satisfies this relation. Since $p=\frac{2}{L}$, $\frac{p}{2}=\frac{1}{L}$ and, by \autoref{eq:beta_recursion} and that $\myz \mbetdag$ is Hermitian: 
\begin{align}
    (\mbeta \mbetdag)^{p/2} = \gamma \myz \mbetdag
\end{align}
Similarly, by \autoref{eq:beta_recursion_2} and that $\mbetdag \myz$ is Hermitian:
\begin{align}
    (\mbeta \mbetdag)^{\frac{p}{2}} &= \gamma \mbetdag \myz
\end{align}
By choice of $\gamma$, $\gamma = \| \mbeta \|_{\frac{2}{L}}^{(S)}$. Thus, $\widehat \vz \in \partial^o \| \mbeta \|_p^{(S)}$ as desired for $L>2$.

For $L=2$, $p=1$ and by \autoref{lem:subdifferential_p_1} we had the following expression for the subgradient:

\begin{equation} 
        \partial^o \| \mA \|_1^{(S)} = \left\{ \mG : \|\mA\|_1^{(S)} = \tr[\mG^\dagger\mA], \|\mG\|_\infty^{(S)} \leq 1]  \right\} 
\end{equation}


As a technical condition, we had to assume in \autoref{assumption:L_2} that $\| \myz \|_\infty^{(S)} \leq 1 $. (We believe that with a refined analysis in future work, this assumption can be shown to be true given only our existing assumptions.) By the previous reasoning for $L=2$, we had that $\mbetdag \mbeta = \gamma^2 (\mbetdag \myz)^2$. Since $\mbetdag \mbeta = \mU \mD^2 \mU^\dagger$ is positive semi-definite and symmetric, and since $\mbetdag \myz$ is Hermitian, we can take the square root of both sides and obtain that $\mU \mD \mU^\dagger = \gamma \mbetdag \myz = \gamma  \myz^\dagger \mbeta$. Thus, $\tr[ \myz^\dagger \mbeta] = \tr[\mD \mU^\dagger \mU] = \tr[\mD] = \| \mbeta \| _1^{(S)}$, which is what was needed (since $\gamma=1$ for the $L=2$ case).

\subsubsection{Infinite groups with band-limited inputs}\label{subsubsec:infinite}

Here we consider the case of more general, infinite-dimensional compact Lie groups. Such groups admit a Fourier transform which is an operator between infinite-dimensional spaces, rather than a finite matrix as before, but which has the same key properties: a convolution theorem, and preservation of inner products. To be concrete, the Fourier transform is now defined as $\widehat{f}(\rho) = \int_{g \in G} \rho(g) f(g) \mu(g) $, where $\mu(g)$ denotes the Haar measure for the group. 


Since it is impossible to store a general function $x:G\rightarrow \R$, one must make simplifying assumptions on the input $x$. A common assumption is that $x$ is band-limited in Fourier space, i.e. $\widehat{x}$ is supported only on a small (and finite) subset of Fourier coefficients contained within the irreps $\rho \in S$. For many such groups, there is a natural hierarchy of irreps (analogous to low frequencies and high frequencies in classical Fourier analysis), and so practical architectures typically assume only those corresponding to low frequencies are non-zero.

For ease of analysis, we will assume that the input functions and all convolutional filters are real-valued in Fourier space. The architecture of our G-CNN is the same as in the finite-dimensional group setting except that we will apply functions entirely in the finite-dimensional Fourier space over the irreps in $S$. Given a function $\underline{\widehat{\vx}}$ supported only on $S$ in Fourier space, we run gradient descent over only the Fourier coefficients on $S$ of filters $\widehat{\vw_i}$, and assume they are zero elsewhere. Before we proceed, we define $\mathcal{F}_M$ in the natural way after restricting the irreps to the band-limiting space $S$:
\begin{align}
\underline{\widehat \vf} = \cF_M \vf  = \bigoplus_{\rho \in {}S} \widehat{f}(\rho)^{\oplus d_\rho} \;\in\on{GL}\left(\sum_{\rho \in S}d_\rho^2, \,\mathbb{C}\right).
\end{align}
In this band-limited space, for each filter, there are $\sum_{\rho \in S}d_\rho^2$ trainable parameters corresponding to the entries of the irreps in $S$. Note that these entries are also orthogonal to each other with respect to the inner product $\langle \cdot,\cdot \rangle_M$ and thus form a vector space of dimension $\sum_{\rho \in S}d_\rho^2$.

Recall that we previously applied \autoref{thm:homog} to the homogeneous polynomial parametrization from $(\vw_1,\dots,\vw_L) \rightarrow \langle \vx \ast \vw_1 \ast \dots \ast \vw_{L-1}, \vw_L\rangle$. Since we will operate in Fourier space, instead consider the homogeneous polynomial parametrization $\widehat{\myW} \in \R^p$ containing the parameters stored in the matrices $\{ \underline{\widehat{\vw_1}},\dots,\underline{\widehat{\vw_L}} \}$. In other words, there are $p = L(\sum_{\rho \in S}d_\rho^2)$ parameters stored in the matrices contained in the set $\widehat{\myW}$. 
\begin{align}
    \widehat{\myW}^{t+1} = \widehat{\myW}^{t} - \eta_t \nabla_{\widehat{\myW}} \mathcal{L}(\widehat{\myW}^t) = \widehat{\myW}^{t} - \eta_t \nabla_{\widehat{\myW}}\mathcal{P}(\widehat{\myW}^t) \nabla_{\widehat{\vbeta}} \mathcal{L}(\mathcal{P}(\widehat{\myW}^t) )
\end{align}
Note that here, iterates are only allowed to vary over the finite subset $S$ of Fourier coefficients and are assumed to be equal to $0$ elsewhere. If we assume further that the exponential loss converges to zero, that the gradients $\nabla_{\widehat{\vbeta}} \mathcal{L}(\widehat{\vbeta}^t)$ converge in direction, and that the iterates $\widehat{\myW^t}$ themselves converge in direction to yield a separator with positive margin, then the limit direction of the parameters $ \overline{\widehat{\myW^{\infty}}}= \lim_{t \to \infty}\frac{\widehat{\myW^{t}}}{\|\widehat{\myW^{t}}\|_2} $ is a positive scaling of a first order stationary point of the following optimization problem:
\begin{equation} \label{eq:infinite_w_opt}
    \min_{\widehat{\myW} \in \mathbb{R}^P} \|\widehat{\myW}\|_2^2 \; \; \; \text{s.t.} \;\;\; \forall n, y_n \langle \widehat{\vx_n}, \mathcal{P}(\widehat{\myW}) \rangle_M \geq 1.
\end{equation}
Again letting $\widehat{\myW^\infty} = \tau \overline{\widehat{\myW^{\infty}}}$ denote the first order stationary point itself, $\widehat{\myW^\infty}_i$ the individual layers (or parameter blocks) comprising $\widehat{\myW^\infty}$, and $\overline{\widehat{\myW^{\infty}}}_i$ the individual layers comprising $\overline{\widehat{\myW^{\infty}}}$, we again have via the KKT conditions that: 
\begin{equation}
\begin{split}
    & \exists \{\alpha_n: \alpha_n \geq 0\}_{n=1}^N \; \text{s.t.} \; \alpha_n = 0 \;\; \text{if} \;\; y_n \langle \underline{\widehat{\vx_n}}, \mathcal{P}(\widehat{\myW^{\infty}})\rangle_M > 1  \\
    & \widehat{\myW_i}^\infty = \nabla_{\widehat{\myW_i}} \mathcal{P}(\widehat{\myW^{\infty}}) \left[ \sum_n \alpha_n y_n \underline{\widehat{\vx_n}} \right] = \nabla_{\widehat{\myW_i}} \Big\langle \mathcal{P}(\widehat{\myW^{\infty}}), \sum_n \alpha_n y_n \underline{\widehat{\vx_n}} \Big\rangle_M
\end{split}
\end{equation}

Equivalently, writing the above in terms of the matrices $\underline{\widehat{\vw}}_\ell^\infty$, and defining $\underline{\widehat{\vz}} = \sum_n \alpha_n y_n \underline{\widehat{\vx_n}}$, we have an equivalence to \autoref{lemma:fourier_gradient_ei}.

\begin{equation}
    \begin{split}
       \nabla_{\vw_\ell} \mathcal{P}(\widehat{\myW^{\infty}}) [\underline{\widehat{\vz}}] &= \nabla_{\underline{\widehat{\vw}}^\infty_\ell} \langle \mathcal{P}(\widehat{\myW^{\infty}}), \underline{\widehat{\vz}} \rangle_M \\
        &=\nabla_{\underline{\widehat{\vw}}^\infty_\ell} \langle \underline{\widehat{\vz}}, \mathcal{P}(\widehat{\myW}^\infty) \rangle_M \\
        &= \nabla_{\underline{\widehat{\vw}}^\infty_\ell} \langle  {}\underline{\widehat{\vz}}, {}\underline{\widehat{\vw}}^\infty_L \cdots \underline{\widehat{\vw}}^\infty_1 \rangle_M \\
        &=\nabla_{\underline{\widehat{\vw}}^\infty_\ell} \tr[  {}\underline{\widehat{\vz}} ({}\underline{\widehat{\vw}}^\infty_L \cdots \underline{\widehat{\vw}}^{\infty ^\dagger}_1) ] \\
        &=\nabla_{\underline{\widehat{\vw}}^\infty_\ell} \tr[ {}\underline{\widehat{\vw}}^{\infty ^\dagger}_{\ell+1} \cdots \underline{\widehat{\vw}}^{\infty ^\dagger}_{L} {}\underline{\widehat{\vz}} {}\underline{\widehat{\vw}}^{\infty ^\dagger}_1 \cdots \underline{\widehat{\vw}}^{\infty ^\dagger}_\ell  ] \\
        &=\nabla_{\underline{\widehat{\vw}}^\infty_\ell} \langle {}\underline{\widehat{\vw}}^{\infty ^\dagger}_{\ell+1} \cdots \underline{\widehat{\vw}}^{\infty ^\dagger}_{L} {}\underline{\widehat{\vz}} {}\underline{\widehat{\vw}}^{\infty ^\dagger}_1 \cdots \underline{\widehat{\vw}}^{\infty ^\dagger}_{\ell-1}, {}\underline{\widehat{\vw}}^\infty_\ell \rangle_M \\
        &= {}\underline{\widehat{\vw}}^{\infty ^\dagger}_{\ell+1} \cdots \underline{\widehat{\vw}}^{\infty \dagger}_{L} {}\underline{\widehat{\vz}} {}\underline{\widehat{\vw}}^{\infty ^\dagger}_1 \cdots \underline{\widehat{\vw}}^{\infty ^\dagger}_{\ell-1}
    \end{split}
\end{equation}

Using the KKT conditions above and this fact, all of the manipulations demonstrating that a positive scaling of $\sum_n \alpha_n y_n \underline{\widehat{\vx_n}}$ is a first-order stationary point of the optimization problem below carry over exactly as they do in the previous part. This yields the following formal result:

\begin{theorem}\label{thm:infinite-formal}
Consider a classification task with ground-truth linear predictor $\underline{\widehat{\vbeta}}$, trained via a real-valued, Fourier-space, band-limited linear G-CNN architecture $\on{NN}(\underline{\widehat{\vx}}) = \langle \underline{\widehat{\vx}} , \underline{\widehat{\vw_L}}\dots\underline{\widehat{\vw_1}} \rangle$ with $L\geq2$ layers under the exponential loss. Then for almost any datasets $\{\vx_i,y_i\}$ separable by $\vbeta$, any bounded sequence of step sizes $\eta_t$, and almost all initializations, suppose that:
\begin{itemize}
    \item The loss converges to 0
    \item The gradients with respect to the end-to-end linear predictor $\underline{\widehat{\vbeta}}$ converge in direction
    \item The iterates $\underline{\widehat{\vw_i}}$ themselves converge in direction to a separator with positive margin
\end{itemize}
When $L=2$, we need an additional technical assumption, \autoref{assumption:L_2}.
Then, the resultant linear predictor $\underline{\mbeta}$ is a positive scaling of a first order stationary point of the optimization problem:
\begin{equation} 
    \min_{\underline{\widehat{\vbeta}}} \Big\| \underline{\widehat{\vbeta}} \Big\|^{(S)}_{2/L} \; \; \text{ s.t. } \; \; \forall n, y_n \left< \underline{\widehat{\vx_n}}, \underline{\widehat {\vbeta}} \right>_M \geq 1.
\end{equation}
\end{theorem}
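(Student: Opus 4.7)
The plan is to reduce the band-limited setting to a finite-dimensional problem and then run the proof of Theorem~\ref{thm:non-abelian} nearly verbatim. All inputs $\underline{\widehat{\vx}}_n$ and iterates $\underline{\widehat{\vw}}_i$ are supported only on the fixed finite set $S \subset \widehat{G}$, so every object in the argument lives in the finite-dimensional matrix space $\bigoplus_{\rho \in S} \mathbb{C}^{d_\rho \times d_\rho}$, which after imposing reality is a real vector space of dimension $L \sum_{\rho \in S} d_\rho^2$. The end-to-end map $\mathcal{P}(\widehat{\myW}) = \underline{\widehat{\vw}}_L \cdots \underline{\widehat{\vw}}_1$ is a homogeneous polynomial of degree $L$ in these parameters, so Theorem~\ref{thm:homog} applies directly to produce nonnegative dual multipliers $\{\alpha_n\}$ obeying complementary slackness, together with the KKT identity $\widehat{\myW}_i^\infty = \nabla_{\widehat{\myW}_i} \langle \mathcal{P}(\widehat{\myW}^\infty), \underline{\widehat{\vz}}\rangle_M$ where $\underline{\widehat{\vz}} = \sum_n \alpha_n y_n \underline{\widehat{\vx}}_n$.

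Next I would prove the band-limited analog of Lemma~\ref{lemma:fourier_gradient_ei}, by differentiating the trace form of $\langle \mathcal{P}(\widehat{\myW}^\infty), \underline{\widehat{\vz}}\rangle_M$ with respect to the free entries of $\underline{\widehat{\vw}}_\ell^\infty$ in the $S$-subspace. The calculation proceeds verbatim as in the finite case and produces
\begin{equation*}
\nabla_{\underline{\widehat{\vw}}_\ell^\infty} \langle \mathcal{P}(\widehat{\myW}^\infty), \underline{\widehat{\vz}}\rangle_M = \underline{\widehat{\vw}}_{\ell+1}^{\infty\dagger} \cdots \underline{\widehat{\vw}}_L^{\infty\dagger}\, \underline{\widehat{\vz}}\, \underline{\widehat{\vw}}_1^{\infty\dagger} \cdots \underline{\widehat{\vw}}_{\ell-1}^{\infty\dagger}.
\end{equation*}
Combining this with the KKT identity reproduces the two-sided recurrences \autoref{eq:wl_wl} and \autoref{eq:wl_reversed}, and the telescoping arguments of \autoref{eq:beta_recursion} and \autoref{eq:beta_recursion_2} then yield $\underline{\widehat{\vbeta}}^\infty \underline{\widehat{\vbeta}}^{\infty\dagger} = \gamma^L (\underline{\widehat{\vz}}\, \underline{\widehat{\vbeta}}^{\infty\dagger})^L$ and $\underline{\widehat{\vbeta}}^{\infty\dagger}\, \underline{\widehat{\vbeta}}^\infty = \gamma^L (\underline{\widehat{\vbeta}}^{\infty\dagger}\, \underline{\widehat{\vz}})^L$, with both $\underline{\widehat{\vz}}\, \underline{\widehat{\vbeta}}^{\infty\dagger}$ and $\underline{\widehat{\vbeta}}^{\infty\dagger}\, \underline{\widehat{\vz}}$ Hermitian.

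Finally I would verify the sufficient KKT conditions for the Schatten-norm problem in $\underline{\widehat{\vbeta}}$ by rescaling the dual multipliers to $\tilde{\alpha}_n = \alpha_n/\gamma$, with $\gamma = \| \underline{\widehat{\vbeta}}^\infty \|_{2/L}^{(S)}$ for $L>2$ and $\gamma=1$ for $L=2$. Feasibility and complementary slackness transfer directly. For subdifferential membership when $L>2$, taking $(p/2)$-th matrix powers of the telescoped identities and using the Hermitian property yields $\underline{\widehat{\vbeta}}^{\infty\dagger}\, \underline{\widehat{\vz}} = \frac{1}{\|\underline{\widehat{\vbeta}}^\infty\|_{2/L}^{(S)}} \sqrt{\underline{\widehat{\vbeta}}^{\infty\dagger}\, \underline{\widehat{\vbeta}}^\infty}^{\,p}$ and the symmetric identity, which is exactly the characterization in Lemma~\ref{lem:subdifferential_p_lessthan_1}; when $L=2$ I would invoke Assumption~\ref{assumption:L_2} and the trace identity of Lemma~\ref{lem:subdifferential_p_1}. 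The main obstacle, and essentially the only one, is checking that differentiation with respect to the free parameters in the $S$-subspace agrees with the restriction of the full Fourier-space gradient to $S$; this holds because the matrix Fourier basis vectors are mutually orthogonal under $\langle\cdot,\cdot\rangle_M$, so restricted gradients are simply orthogonal projections of unrestricted ones, and no further analytic input is required.
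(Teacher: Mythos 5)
Your proposal is correct and follows essentially the same route as the paper: work entirely in the finite-dimensional band-limited Fourier subspace, apply Theorem~\ref{thm:homog} to the degree-$L$ homogeneous parametrization $\widehat{\myW} \mapsto \underline{\widehat{\vw}}_L \cdots \underline{\widehat{\vw}}_1$, rederive the gradient identity of Lemma~\ref{lemma:fourier_gradient_ei} in that subspace, and then port the recurrence and subdifferential arguments of Theorem~\ref{thm:non-abelian} unchanged. Your closing observation that the $S$-restricted gradients are orthogonal projections of the full ones (by $\langle\cdot,\cdot\rangle_M$-orthogonality of the irrep entries) is exactly the point the paper relies on when it sets up gradient descent directly over the coefficients on $S$.
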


\section{Group Fourier Transforms}
\label{app:gfourier}

To aid the reader in understanding the notation and structure behind the group Fourier transform, the following exposition is given for reference and convenience. Here, we introduce important concepts from representation theory and from there, provide explicit constructions the group Fourier transform.

A representation of a group $G$ is a vector space $V$ together with a $G$-linear map $\rho: G \longrightarrow \on{GL}(V)$. Of particular interest is the \emph{regular representation} which we construct as follows. Let $G$ be a group of order $n$ and choose $V = \mathbb{C}^n$. Consider an element $u \in \mathbb{C}[G]$, so $u = a_1 g_1 + \cdots + a_n g_n$ where $a_i \in \mathbb{C}$ and $g_n \in G$, and the associated vector $\mathbf{u} \in \mathbb{C}^n$ such that $\mathbf{u} = \begin{bmatrix}a_1 & \cdots & a_n\end{bmatrix}$.

The action of left-multiplication of $u$ for any $h \in G$ yields $hu = a_1(hg_1) + \cdots + a_n(hg_n)$, which is equivalent to a permutation of the coefficients, so there is a unique matrix $H \in \on{GL}(\mathbb{C}^n)$ such that $H\mathbf{u}$ is equivalent to the associated vector for $hu$. The $G$-linear map $L_h: h \mapsto H$ is the (left) \emph{regular representation}.

The direct sum of two $G$-representations $(\rho_1, V_1)$ and $(\rho_2, V_2)$ can be constructed by
\begin{align}
    (\rho_1 \oplus \rho_2) (g) = \begin{bmatrix}\rho_1(g)&\\&\rho_2(g)\end{bmatrix} 
    \label{eq:reducible_representation}
\end{align}

The dimension $d_\rho$ of a representation $(\rho, V)$ is defined to be $\dim(V)$. A finite-dimensional representation is \emph{irreducible} if it cannot be written as the direct sum of two nontrivial representations. Denote $\widehat G$ to be the set of isomorphism classes of irreducible subrepresentations of $L_u$, and let $d_{\rho_i}$ denote the dimension of $\rho_i \in \widehat G$. As it turns out, there is an isomorphism $$L_u \cong \bigoplus_{\rho \in \widehat{G}} \rho^{\oplus d_\rho}$$
Where $\rho$ ranges over one representative from each of the isomorphism classes of $\widehat G$, repeated according to its multiplicity. In general, this decomposition is not uniquely determined as it depends on the choice of representatives. Throughout this paper, we choose representatives such that each $\rho$ is \emph{unitary}, meaning that every $\rho(g)$ is a unitary matrix.

Every function $f: G \longrightarrow \mathbb{C}$ can be considered as equivalent to an element $u_f \in \mathbb{C}[G]$ by setting $u_f = f(g_1)g_1 + \cdots + f(g_n) g_n$. And as we have already seen, every $u \in \mathbb{C}[G]$ can naturally be considered a subrepresentation of $L_u$. Then the \emph{Fourier transform} of $f$ at a representation $\rho$, denoted $\widehat{f}(\rho) \in \on{GL}(d_\rho, \mathbb{C})$ where $\widehat{f}(\rho) = \sum_{u \in G} f(u) \rho(u)$, can be considered as a projection of $L_u(u_f)$ onto the orthogonal subspace described by $\rho$. Throughout the paper we use slightly different notations and characterizations of the Fourier transform depending on the context, but all share this projection as the fundamental operation.\\

Recalling \autoref{eq:reducible_representation}, there is a representation isomorphic to $L_u$, which we will suggestively call $\cF_M$, that block-diagonalizes the image of $L_u$ into orthogonal subspaces of unitary irreducible representations (each $\rho \in \widehat{G}$ repeated with multiplicity $d_\rho$):
\begin{align}
    \cF_M(u) = \begin{bmatrix}\rho_1(u)&&\\&\ddots&&\\&&\rho_j(g)\end{bmatrix} \in \on{GL}(\mathbb{C}^n)
\end{align}
For the last piece of the puzzle, extend the domain of $\cF_M$ to all functions $f:G \longrightarrow \mathbb{C}$ (by considering $f$ as an element of $\mathbb{C}[G]$). Then we obtain
\begin{align}
    \cF_Mf = \begin{bmatrix}\widehat{f}(\rho_1)&&\\&\ddots&\\&&\widehat{f}(\rho_j)\end{bmatrix} \in \on{GL}(\mathbb{C}^n)
\end{align}
Which we call the \emph{matrix Fourier transform} of $f$.

We also make use of the \emph{Fourier basis matrix} $\mathcal{F}$, which depends only on the group $G$ and not the function $f$. To construct it, we first need the operation $\on{Flatten}$ which vertically stacks the columns of a matrix. Then define the transform
\begin{align}
    \phi(f) = \begin{bmatrix}\on{Flatten}\left(\widehat{f}(\rho_1)\right)\\
                    \vdots\\
                    \on{Flatten}\left(\widehat{f}(\rho_j)\right)
    \end{bmatrix}
\end{align}
Letting $e_i: G \longrightarrow \mathbb{C}$ be the indicator function $e_i(g_j) = \mathbbm{1}_{i = j}$ we can describe the unitary Fourier basis matrix $\mathcal{F}$ for a group $G$ as a row-scaling of
\begin{align}
    \mathcal{F} \propto \begin{bmatrix}\phi(e_1) & \phi(e_2) & \cdots & \phi(e_n)\end{bmatrix}
\end{align}

In other words, given a column vectorization $\mathbf{f}$ of a function $f$ such that $\mathbf{f}_i = f(g_i)$, then $\mathcal{F}$ is the matrix such that the `unflattening' of $\mathcal{F}\mathbf{f}$ is equal to $\cF_Mf$ up to the row-scaling constants. Thus we can treat the group Fourier transform either as an abstract isomorphism or as a concrete matrix-vector multiplication, depending on the application.

The matrix $\mathcal{F}$ can be explicitly constructed as described in \autoref{def:group_fourier_transform}. Denoting $\ve_{[\rho,i,j]}$ as the column-major vectorized basis for element $\rho_{ij}$ in the group Fourier transform, then we can form the matrix
\begin{equation}
    \cF = \sum_{u \in G} \sum_{\rho \in {}\widehat{G}} \frac{\sqrt{d_\rho}}{\sqrt{|G|}} \sum_{i,j=1}^{d_\rho} \rho(u)_{ij} \ve_{[\rho,i,j]} \ve_u^{T}.
\end{equation}

For visualization, consider the dihedral group $D_6 = \{1,\,r,\,r^2,\,a,\,ar,\,ar^2\}$, which has three irreducible representations $\rho_1, \rho_2,\rho_3$ (up to isomorphism) of dimensions $1$, $1$, and $2$ respectively, and let $f: D_6 \longrightarrow \mathbb{C}$. Using colors instead of values at first to avoid numerical clutter:

\begin{align*}
    \widehat{f}(\rho_1) = \left[\begin{array}{c}
    \aaa
  \end{array}\right]&& \widehat{f}(\rho_2) = \left[\begin{array}{c}
    \bbb
  \end{array}\right]&&\widehat{f}(\rho_3) = \left[\begin{array}{c|c}
    \ccc & \ccc \\
    \hline
    \ccc & \ccc
  \end{array}\right]
\end{align*}

Since $L_{D_6} \cong \rho_1\oplus\rho_2\oplus\rho_3^2$, we can get something like
\begin{align*}
    \cF_Mf = \left[\begin{array}[text width = 10mm]{cccccc}
    \aaa\,\,&&&&&\\
    &\bbb\,\,&&&&\\
    &&\ccc&\ccc\,\,\,&&\\
    &&\ccc&\ccc\,\,\,&&\\
    &&&&\ccc&\ccc\,\,\\
    &&&&\ccc\,\,&\ccc
  \end{array}\right] \in \on{GL}(\mathbb{C}^n)
\end{align*}
Whereas for the unitary Fourier basis matrix we have the form

\begin{align*}
&\begin{tabular}{p{1mm}p{0.2mm}p{0.2mm}p{0.1mm}p{0.1mm}p{0.1mm}}
     $\scriptstyle{1}$&$\scriptstyle{r}$&$\scriptstyle{r^2}$&$\scriptstyle{a}$&$\scriptstyle{ar}$&$\scriptstyle{ar^2}$
\end{tabular}\\
\mathcal{F} \propto \begin{array}{c}
\scriptstyle{\rho_1}\\
\scriptstyle{\rho_2}\\
\scriptstyle{[\rho_3]_{11}}\\
\scriptstyle{[\rho_3]_{21}}\\
\scriptstyle{[\rho_3]_{12}}\\
\scriptstyle{[\rho_3}]_{22}
\end{array}\hspace{-2.4mm}&\left[\begin{array}{cccccc}
    \rowcolor{cyan!90}&\,\,&\,\,\,&\,\,&\,\,&\,\,\\
    \rowcolor{orange!90}&\,\,&\,\,&\,\,&\,\,&\,\,\\
    \rowcolor{green!90}&\,\,&\,\,&\,\,&\,\,&\,\,\\
    \rowcolor{green!90}&\,\,&\,\,&\,\,&\,\,&\,\,\\
    \hline
    \rowcolor{green!90}&\,\,&\,\,&\,\,&\,\,&\,\,\\
    \rowcolor{green!90}&\,\,&\,\,&\,\,&\,\,&\,\,\\
  \end{array}\right]
\end{align*}

Note that we do not yet include the row-scaling constants. Now explicitly, choosing $\rho_1$ the trivial irrep, $\rho_2$ the sign irrep, and $\rho_3$ the representation sending
\begin{align}
    \rho_3(a) = \begin{bmatrix}0&1\\1&0\end{bmatrix} &&\rho_3(r) = \begin{bmatrix}\omega&0\\0&\omega^2\end{bmatrix}
\end{align}
Where $\omega = e^{2\pi i/3}$, then the matrix $\cF$ is exactly

\begin{align}\label{eq:dihedral_irreps}
&\begin{tabular}{p{10mm}p{10mm}p{6mm}p{6mm}p{6mm}p{6mm}}
     $\scriptstyle{1}$&$\scriptstyle{r}$&$\scriptstyle{r^2}$&$\scriptstyle{a}$&$\scriptstyle{ar}$&$\scriptstyle{ar^2}$
\end{tabular}\nonumber\\
\mathcal{F} = \frac{1}{\sqrt{6}} \begin{array}{c}
\scriptstyle{\rho_1}\\
\scriptstyle{\rho_2}\\
\scriptstyle{[\rho_3]_{11}}\\
\scriptstyle{[\rho_3]_{21}}\\
\scriptstyle{[\rho_3]_{12}}\\
\scriptstyle{[\rho_3}]_{22}
\end{array}\hspace{-2.4mm}&\left[\begin{array}{cccccc}
    \rowcolor{cyan!90}1&1&1&1&1&1\\
    \rowcolor{orange!90}1&1&1&-1&-1&-1\\
    \rowcolor{green!90}\sqrt{2}&\sqrt{2}\omega&\sqrt{2}\omega^2&0&0&0\\
    \rowcolor{green!90}0&0&0&\sqrt{2}&\sqrt{2}\omega&\sqrt{2}\omega^2\\
    \hline
    \rowcolor{green!90}0&0&0&\sqrt{2}\omega^2&\sqrt{2}\omega&\sqrt{2}\\
    \rowcolor{green!90}\sqrt{2}&\sqrt{2}\omega^{2}&\sqrt{2}\omega&0&0&0\\
  \end{array}\right]
\end{align}

Note that the above is only one possible way of writing $\cF$ since the $2$-dimensional irreducible representation of $D_6$ is unique only up to conjugation by a unitary matrix.

\section{Uncertainty principles for groups}
\label{app:uncertainty}
In mathematics, uncertainty principles categorize trade-offs of the ``amount of information" stored in a function between canonically conjugate regimes, \textit{e.g.,} position (real regime) and momentum (Fourier regime) of a physical particle. More generically, uncertainty principles show that a function and its Fourier transform cannot both be very localized or concentrated. In the context of group theory, uncertainty principles specifically show that sparse support in either the real or Fourier regime of a group necessarily implies dense support in the conjugate regime \citep{wigderson2021uncertainty}. Such results are directly relevant when interpreting implicit regularization of linear G-CNNs which bias gradient descent towards sparse solution in the Fourier basis of the group. In this section, we formally state and summarize these group theoretic uncertainty principles to provide intuition into the properties of functions which linear group convolutional networks are likely to learn. 

\paragraph{Abelian groups} 
For abelian groups, the fundamental uncertainty principle details a trade-off between the norms of a function in its real and Fourier bases.

\begin{theorem}[Generalization of Donoho-Stark Theorem \citep{wigderson2021uncertainty}] 
\label{thm:abelian_uncertainty}
Given a finite abelian group $G$, let ${}\widehat{G}$ be the set of irreducible representations of $G$ and $f:G \to \mathbb{C}$ be a function mapping group elements to complex numbers. Let $\vf$ be the vectorized function and $\cF$ be the unitary group Fourier transform (see \autoref{def:group_fourier_transform}), then
\begin{equation}
    \frac{\|\vf\|_1}{\|\vf\|_\infty} \frac{\| \mathcal{F} \vf \|_1}{\| \mathcal{F} \vf \|_\infty} \geq |G|
\end{equation}
\end{theorem}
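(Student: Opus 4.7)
The plan is to prove this via the classical two-step bound that underlies the Donoho--Stark uncertainty principle, adapted to the group setting. The core observation is that for a finite abelian group, every irreducible representation is one-dimensional, so every entry of the unitary group Fourier matrix $\cF$ has modulus exactly $1/\sqrt{|G|}$: each character $\rho(u)$ lies on the unit circle (it is a root of unity, since $G$ is finite), and the explicit formula from \autoref{def:group_fourier_transform} contributes the normalization $\sqrt{d_\rho}/\sqrt{|G|} = 1/\sqrt{|G|}$ since $d_\rho = 1$ for all $\rho \in \widehat{G}$.

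Given this entrywise bound, the strategy is to establish two mixed-norm inequalities and multiply them. First I would bound
\begin{equation}
\|\cF \vf\|_\infty \;\leq\; \max_{i,j}|\cF_{ij}| \cdot \|\vf\|_1 \;=\; \frac{1}{\sqrt{|G|}}\,\|\vf\|_1,
\end{equation}
which follows from the triangle inequality applied componentwise to $\cF \vf$. Second, because $\cF$ is unitary, $\vf = \cF^{-1}(\cF\vf) = \cF^\dagger(\cF\vf)$, and $\cF^\dagger$ has entries of the same modulus $1/\sqrt{|G|}$, so by the same argument
\begin{equation}
\|\vf\|_\infty \;\leq\; \frac{1}{\sqrt{|G|}}\,\|\cF \vf\|_1.
\end{equation}

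Multiplying these two inequalities and rearranging gives
\begin{equation}
\frac{\|\vf\|_1 \, \|\cF \vf\|_1}{\|\vf\|_\infty \, \|\cF \vf\|_\infty} \;\geq\; |G|,
\end{equation}
which is the desired bound (assuming $\vf \neq 0$, so neither denominator vanishes; the inequality is trivial otherwise).

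I do not expect any significant obstacle here: the only thing that must be verified carefully is the entrywise modulus claim $|\cF_{ij}| = 1/\sqrt{|G|}$. For a general abelian $G$, one can either invoke the fundamental theorem of finite abelian groups (\autoref{thm:fund_abelian_groups}) together with the Kronecker product decomposition in \autoref{eq:abelian_fourier_transform} of the Fourier matrix into standard DFT blocks (whose entries are $\omega_d^{jk}/\sqrt{d}$), or argue directly from the explicit formula in \autoref{def:group_fourier_transform} using $d_\rho = 1$ and $|\rho(u)| = 1$. Either route gives the entrywise bound cleanly, after which the rest is a two-line computation.
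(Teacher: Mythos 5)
Your proof is correct: the entrywise modulus claim $|\cF_{ij}| = 1/\sqrt{|G|}$ holds exactly as you argue (all irreps of a finite abelian group are one-dimensional characters taking root-of-unity values, and the normalization contributes $\sqrt{d_\rho}/\sqrt{|G|} = 1/\sqrt{|G|}$), and the two mixed-norm inequalities combined with unitarity of $\cF$ give the bound. The paper itself states this theorem as a cited result from \citet{wigderson2021uncertainty} and provides no proof of its own; your argument is the standard ``primary uncertainty principle'' derivation from that reference, so there is nothing to contrast.
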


\begin{remark}
Since the size of the support of a vector is bounded by $|\text{supp}(\vv)| \geq \frac{ \|\vv\|_1 }{ \|\vv\|_\infty }$, this directly implies that $|\text{supp}(\vf)| \; |\text{supp}(\mathcal{F} \vf)| \geq |G|$ recovering the Donoho-Stark theorem \citep{donoho1989uncertainty,matusiak2004donoho}.
\end{remark}

\paragraph{Non-abelian groups}
Since non-abelian groups have matrix valued irreducible representations, uncertainty theorems must account for norms and notions of support in the context of matrices. Here, we will provide two different uncertainty theorems for the non-abelian setting -- one via the rank of irreducible representations and another via the Schatten norm of irreducible representations. Uncertainty relations for non-abelian groups make use of the matrix group Fourier transform detailed in \autoref{def:group_fourier_transform}.

\begin{theorem}[Meshulam uncertainty theorem \citep{meshulam1992uncertainty}]
Given a finite non-abelian group $G$, let ${}\widehat{G}$ be the set of irreducible representations of $G$ and $f:G \to \mathbb{C}$ be a function mapping group elements to complex numbers. Let $\vf$ be the vectorized function and $\cF_M$ be the matrix group Fourier transform (see \autoref{def:group_fourier_transform}), then
\begin{equation}
    |\on{supp}(\vf)|\; \on{rank}(\mathcal{F}_M \vf) = |\on{supp}(\vf)|\; \left[ \sum_{\rho \in {}\widehat{G}} d_{\rho} \on{rank}\left({}\widehat{f}(\rho) \right) \right] \geq |G|
\end{equation}
\end{theorem}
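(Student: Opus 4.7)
The equality $\on{rank}(\mathcal{F}_M\vf) = \sum_\rho d_\rho \on{rank}(\widehat{f}(\rho))$ is immediate from \autoref{def:group_fourier_transform}: since $\mathcal{F}_M\vf = \bigoplus_\rho \widehat{f}(\rho)^{\oplus d_\rho}$ is block-diagonal, its rank is the sum of the ranks of the blocks, with each $\widehat{f}(\rho)$ contributing with multiplicity $d_\rho$. Write $r := \on{rank}(\mathcal{F}_M\vf)$ and $s := |\on{supp}(\vf)|$. The substantive content is the inequality $rs \geq |G|$, which I plan to establish by analyzing the image of the left convolution operator $L_f: g \mapsto f*g$ on $L^2(G)$.

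The first step is to show $\dim\on{Im}(L_f) = r$. By the convolution theorem (a direct change of variables from \autoref{def:group_fourier_transform}), $\widehat{f*g}(\rho) = \widehat{f}(\rho)\,\widehat{g}(\rho)$. As $g$ ranges over $L^2(G)$, the tuple $(\widehat{g}(\rho))_\rho$ ranges over all of $\bigoplus_\rho M_{d_\rho}(\mathbb{C})$ by Fourier inversion, so the image in block $\rho$ is exactly $\{\widehat{f}(\rho)M : M \in M_{d_\rho}(\mathbb{C})\}$---the matrices whose column space lies in the column space of $\widehat{f}(\rho)$---of dimension $d_\rho\,\on{rank}(\widehat{f}(\rho))$. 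Summing over $\rho$ gives $\dim\on{Im}(L_f) = r$.

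The second step is combinatorial. $\on{Im}(L_f)$ is spanned by the $|G|$ translates $\{L_f\delta_z\}_{z\in G}$, where $(L_f\delta_z)(x) = f(xz^{-1})$ is a shift of $f$ supported on $\on{supp}(f)\cdot z$, a set of size exactly $s$. I would then pick $r$ of these translates forming a basis of $\on{Im}(L_f)$, and let $T$ be the union of their supports, so that $|T| \leq rs$. Since this basis spans $\on{Im}(L_f)$, that whole subspace is contained in $\mathbb{C}^T$; in particular every $L_f\delta_z$ is supported in $T$, meaning $\on{supp}(f)\cdot z \subseteq T$ for every $z \in G$. Taking the union over $z$ yields $G = \bigcup_z \on{supp}(f)\cdot z \subseteq T$, hence $|G| \leq |T| \leq rs$, which is the desired inequality.

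The main obstacle is the dimension identification $\dim\on{Im}(L_f) = r$: this is where the non-abelian Fourier machinery does real work, since one must translate a statement about column ranks of matrix-valued Fourier coefficients into the dimension of a subspace of $L^2(G)$ via the surjectivity of $g \mapsto (\widehat{g}(\rho))_\rho$ onto $\bigoplus_\rho M_{d_\rho}(\mathbb{C})$. Everything after that is essentially pigeonhole on the supports of translates.
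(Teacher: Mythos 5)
Your proof is correct. Note first that the paper does not actually prove this statement: it is imported verbatim from \citet{meshulam1992uncertainty}, so there is no internal argument to compare against. Your argument is a complete and self-contained proof, and it is essentially the standard (indeed, essentially Meshulam's original) one: identify $r=\sum_\rho d_\rho \on{rank}(\widehat f(\rho))$ with the dimension of the ideal $f * \mathbb{C}[G]$ via the algebra isomorphism $\mathbb{C}[G]\cong\bigoplus_\rho M_{d_\rho}(\mathbb{C})$, then cover $G$ by the supports of the translates $L_f\delta_z$ using that a basis of the image can be extracted from the spanning set $\{L_f\delta_z\}_{z\in G}$. Both halves check out: the image of $\widehat f(\rho)\cdot M_{d_\rho}(\mathbb{C})$ is exactly the matrices with column space inside $\on{col}(\widehat f(\rho))$, of dimension $d_\rho\on{rank}(\widehat f(\rho))$; and $\on{supp}(L_f\delta_z)=\on{supp}(f)\cdot z$ under either convolution convention, so containment of every translate's support in the $rs$-element set $T$ forces $G\subseteq T$. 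The only caveat worth stating explicitly is that the final covering step, $\bigcup_{z}\on{supp}(f)\cdot z = G$, requires $f\neq 0$; this hypothesis is implicit in the theorem (the inequality is false for $f=0$) but should be recorded. Be aware also that you use convolution $*$ with $\widehat{f*g}(\rho)=\widehat f(\rho)\widehat g(\rho)$, whereas the paper's displayed operation is cross-correlation $\star$ with $\widehat{g\star h}(\rho)=\widehat g(\rho)\widehat h(\rho)^\dagger$; your choice is the right one for this argument and your change-of-variables claim is valid, but the two should not be conflated.
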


The above theorem shows that the rank of the matrix Fourier transformed function is the proper notion of support in the uncertainty theorem for a non-abelian group. A stronger uncertainty principle which is a more direct corollary to \autoref{thm:abelian_uncertainty} can be obtained via the Schatten norms of the irreducible representations as shown below.

\begin{theorem}[Kuperberg uncertainty theorem \citep{wigderson2021uncertainty}]
Given a finite non-abelian group $G$, let ${}\widehat{G}$ be the set of irreducible representations of $G$ and $f:G \to \mathbb{C}$ be a function mapping group elements to complex numbers. Let $\vf$ be the vectorized function and $\cF_M$ be the matrix group Fourier transform (see \autoref{def:group_fourier_transform}), then
\begin{equation}
    \frac{\|\vf\|_1}{\|\vf\|_\infty}  \frac{\|\mathcal{F}_M \vf\|_1^{(S)}}{\|\mathcal{F}_M \vf\|_\infty^{(S)}} = \frac{\|\vf\|_1}{\|\vf\|_\infty}  \frac{\sum_{\rho \in {}\widehat{G}} d_{\rho} \|{}\widehat{f}(\rho)\|_1^{(S)}}{\max_{\rho \in {}\widehat{G}} \|{}\widehat{f}(\rho)\|_{\infty}^{(S)}}  \geq |G|.
\end{equation}
\end{theorem}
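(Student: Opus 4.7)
The plan is to derive the Kuperberg bound as the product of two elementary norm inequalities, one in each direction of the Fourier transform, in direct analogy with the abelian Donoho--Stark argument but with the $L^1$/$L^\infty$ pair on the Fourier side replaced by the Schatten-$1$/Schatten-$\infty$ pair. Concretely, I plan to prove
\begin{equation}
\|\cF_M \vf\|_\infty^{(S)} \;\le\; \|\vf\|_1 \quad\text{and}\quad \|\vf\|_\infty \;\le\; \tfrac{1}{|G|}\,\|\cF_M \vf\|_1^{(S)},
\end{equation}
and then multiply the rearranged versions $\|\vf\|_1/\|\cF_M\vf\|_\infty^{(S)} \ge 1$ and $\|\cF_M\vf\|_1^{(S)}/\|\vf\|_\infty \ge |G|$ to obtain the claim.

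For the first inequality, I would fix any unitary irrep $\rho \in \widehat G$ and apply the triangle inequality for the operator norm to the defining sum $\widehat f(\rho) = \sum_{u\in G} f(u)\rho(u)$ from \autoref{def:group_fourier_transform}, getting $\|\widehat f(\rho)\|_\infty^{(S)} \le \sum_u |f(u)|\,\|\rho(u)\|_\infty^{(S)} = \|\vf\|_1$, since each $\rho(u)$ is unitary and hence has operator norm $1$. Because the matrix Fourier transform $\cF_M\vf$ is block-diagonal with blocks $\widehat f(\rho)$ (repeated $d_\rho$ times), its Schatten-$\infty$ norm is exactly $\max_\rho \|\widehat f(\rho)\|_\infty^{(S)}$, giving the first inequality.

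For the second inequality, I would invoke the Fourier inversion formula on a finite group,
\begin{equation}
f(u) \;=\; \frac{1}{|G|}\sum_{\rho \in \widehat G} d_\rho\,\tr\!\bigl(\widehat f(\rho)\,\rho(u^{-1})\bigr),
\end{equation}
and then apply the trace-duality bound $|\tr(AB)| \le \|A\|_1^{(S)}\,\|B\|_\infty^{(S)}$ with $B = \rho(u^{-1})$, which is unitary and therefore has Schatten-$\infty$ norm $1$. Taking $u$ to realize $\|\vf\|_\infty$ on the left and summing on the right yields $\|\vf\|_\infty \le \frac{1}{|G|}\sum_\rho d_\rho\|\widehat f(\rho)\|_1^{(S)} = \frac{1}{|G|}\|\cF_M\vf\|_1^{(S)}$. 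The second equality in the theorem statement (identifying the Schatten norms of $\cF_M\vf$ with sums/maxima over irreps) is then just the definition of Schatten norms on a block-diagonal matrix with $d_\rho$ copies of $\widehat f(\rho)$: the Schatten-$1$ norm picks up the $d_\rho$ multiplicities, while the Schatten-$\infty$ (operator) norm does not.

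The main subtlety, rather than any real obstacle, is pinning down the normalization: the $|G|$ on the right-hand side comes entirely from the $\frac{1}{|G|}$ in the Fourier inversion formula, and the $d_\rho$ weighting inside $\|\cF_M\vf\|_1^{(S)}$ is consistent with this because the inversion formula already carries a $d_\rho$. Once these constants are aligned, both inequalities are sharp (equality for $f = \delta_e$ gives $\|\cF_M\vf\|_\infty^{(S)} = \|\vf\|_1 = 1$; equality for $f \equiv 1$ gives $\|\vf\|_\infty = 1$ and $\|\cF_M\vf\|_1^{(S)} = |G|$, attained on the trivial irrep), so the bound cannot be improved without additional hypotheses, and multiplication completes the proof.
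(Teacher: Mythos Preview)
Your proof is correct. The two one-sided bounds $\|\cF_M\vf\|_\infty^{(S)} \le \|\vf\|_1$ (triangle inequality plus unitarity of $\rho(u)$) and $\|\vf\|_\infty \le \frac{1}{|G|}\|\cF_M\vf\|_1^{(S)}$ (Fourier inversion plus trace duality $|\tr(AB)| \le \|A\|_1^{(S)}\|B\|_\infty^{(S)}$) are exactly the right ingredients, and the block-diagonal structure of $\cF_M\vf$ with multiplicities $d_\rho$ gives the stated identification of $\|\cF_M\vf\|_1^{(S)}$ and $\|\cF_M\vf\|_\infty^{(S)}$ with the weighted sum and maximum over irreps. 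Your sharpness examples are also correct.

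As for comparison: the paper does not actually supply its own proof of this statement. The Kuperberg uncertainty theorem is quoted in the appendix as background from \citet{wigderson2021uncertainty}, alongside the Donoho--Stark and Meshulam uncertainty principles, to give intuition for what Fourier-space sparsity implies in real space. So there is no in-paper argument to compare against; your derivation is self-contained and matches the standard approach (and indeed the one in the cited reference).
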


\section{Visualizing the implicit bias}
\label{app:visualizing_bias}

Implicit biases induced by the G-CNN architectures studied here can be readily observed by analyzing coefficients of the linearized transformation in the Real or Fourier regimes. Here, we visualize the linearized outputs a 3-layer linear G-CNN over the Dihedral group $D_{60}$ which has $4$ scalar irreps and $14$ irreps of dimension $2$ (hence $2 \times 2$ matrices). \autoref{fig:d60_viz_linearization} shows these linearized coefficients of the G-CNN, CNN, and FC at intialization and training.

As evident in \autoref{fig:d60_viz_linearization}, the learned coefficients of the G-CNN are sparse in the Fourier regime of the group. This sparsity pattern appears over blocks of irreps of length four, corresponding to coefficients of the $2 \times 2$ irreps of $D_{60}$. Furthermore, the values of the coefficients within a block are roughly constant, highlighting the bias towards low rank irreps. The relative denseness of coefficients of the trained G-CNN in the real regime is inherent due to the uncertainty principles of group functions. Unlike the G-CNN, the fully connected network (FC) appears to have no bias towards sparseness in its coefficients in either the real or Fourier regime. On a related note, the cyclic group of CNNs share some of the same irreps as those of the G-CNN studied here. This may be one explanation for the partial sparsity patterns observed in the coefficients of the CNN in the Fourier regime.

\section{Computational details}
\label{appa:comp}
As described in \autoref{sec:experiments}, for all models we use three-layer networks over $\mathbb{R}^{|G|}$ unless otherwise stated and binary classification tasks trained via standard gradient descent with exponential loss. We often train networks on isotropic Gaussian data points which are random vectors whose entries are drawn i.i.d. from a standard Normal distribution. For the linear networks on simulated data (with a fully connected output layer) we use the groups $D_8$ and $\left(C_5 \times C_5\right) \rtimes Q_8$. For the linear and nonlinear networks on MNIST, we use $(C_{28} \times C_{28}) \times D_8$. For the networks with ReLU activations and a linear layer (instead of pooling) we use the groups $D_8$ and $D_{60}$. For the experiments on MNIST with non-linear networks, we have used the \texttt{e2cnn} package~\cite{cesa2019e2cnn}. The weights are initialized with the standard uniform initialization. We choose an appropriate learning rate for each task depending on the dimension and magnitude of the values. Due to the choice of the exponential loss as our loss function, we sometimes periodically increased the learning rate since gradients decay over time to speed up convergence. Since each problem is overparameterized the loss will almost surely converge to zero, so we choose enough training epochs to achieve satisfactory convergence. For each plot, we report $95\%$ confidence intervals over $3$ to $50$ runs, depending on the classification task. The computational resources used are modest - commodity hardware should suffice to fully reproduce our results. To replicate experiments, please visit our code repository.\footnote{Our code is available here: \url{https://github.com/kristian-georgiev/implicit-bias-of-linear-equivariant-networks}}

\subsection{Additional experiments - linear}

\begin{figure*}[h!]
 \captionsetup[subfigure]{aboveskip=-1pt,belowskip=-3pt}  
 \begin{subfigure}{0.49\textwidth}
    \includegraphics[width=\linewidth]{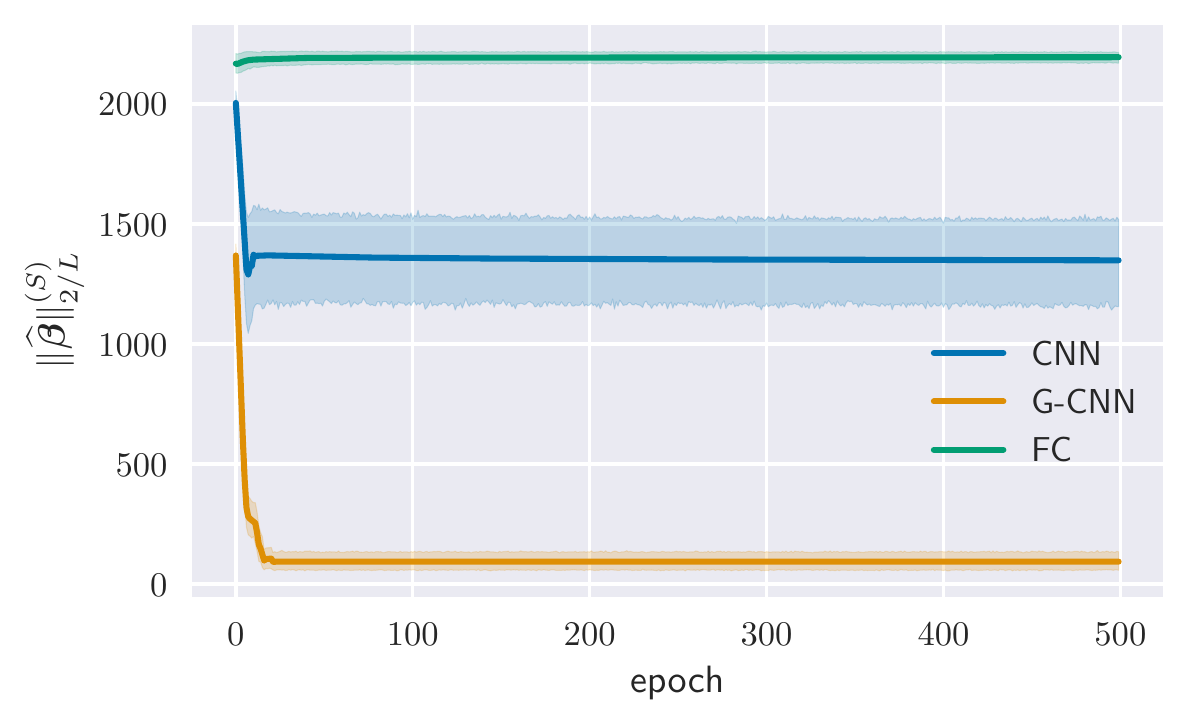}
    \caption{Fourier space norm of network linearization $\vbeta$} \label{fig:1a}
  \end{subfigure}%
  \hspace*{\fill}   
  \begin{subfigure}{0.49\textwidth}
    \includegraphics[width=\linewidth]{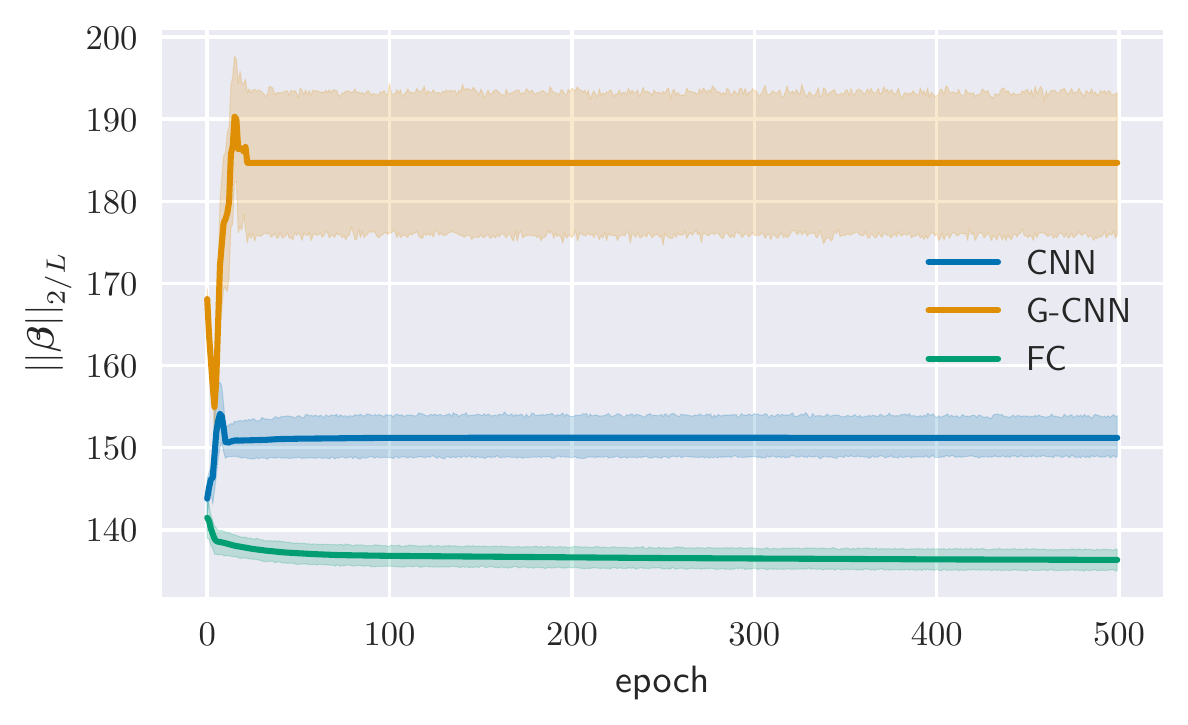}
    \caption{Real space norm of network linearization $\vbeta$} \label{fig:1b}
  \end{subfigure}%
  \hspace*{\fill}
\caption{Norms of the linearizations of three different linear architectures for the non-abelian group ${G=\left(C_5\times C_5\right) \rtimes Q_8}$ trained on a binary classification task with 10 isotropic Gaussian data points.} \label{fig:ccq}
\end{figure*}

We include linear architecture experiments on two additional groups. First, we include a G-CNN with the group $\left(C_5\times C_5\right) \rtimes Q_8$ (\autoref{fig:ccq}), which is a non-abelian group that has irreducible representations of up to dimension 8 and displays implicit regularization over a more elaborate group structure. Inputs are vectors with elements drawn i.i.d. from the standard Normal distribution. 

\subsection{Additional experiments - nonlinear}
We evaluate the implicit bias of an invariant ReLU G-CNN (with final pooling layer) with respect to translations, rotations, and flips on MNIST digits in \cref{fig:relu_MNIST}.

\begin{figure*}[t!]
    \captionsetup[subfigure]{aboveskip=-1pt,belowskip=-3pt}  
    \centering
    \includegraphics[width=0.6\linewidth]{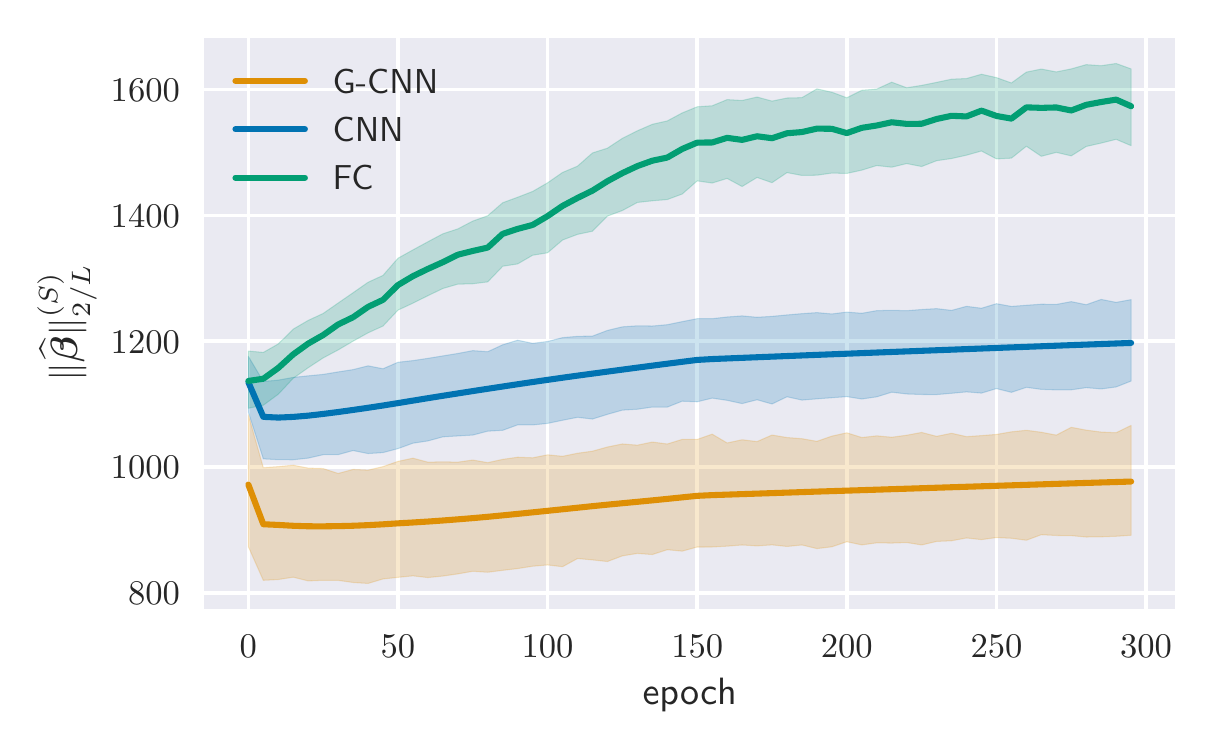}
\caption{Group Fourier norm for the non-abelian group $(C_{28}\times C_{28}) \times D_8$ for an additional \textit{nonlinear} architecture with ReLU activations shows that a nonlinear G-CNN seems to implicitly regularize locally. The figure tracks the mean norm of the per-sample local linearizations of each network. The network uses a pooling layer after the convolutional layers to maintain invariance. We evaluate a binary classification task using the MNIST digits $0$ and $5$.} \label{fig:relu_MNIST} \end{figure*}

\subsection{Spherical CNN Experimental Details}\label{app:spherical_details}

For the spherical CNN experiments, our architecture is drawn directly from \citet{cohen2018spherical}. Given an input image defined on a sphere, we apply to it a fixed $S_2$-convolution, followed by two learnable $SO(3)$-convolutions and a learnable fully-connected final layer, with ReLUs in-between. As in \citet{cohen2018spherical}, convolutions are done in Fourier space. We bandlimit the signals up to $\ell=5$, and then $\ell=3$. The first $SO(3)$-convolution has $20$ channels, and the second one has $100$. The use of varying bandlimits is adapted directly from \citet{cohen2018spherical}, but departs from our theory in the linear case of infinite groups. We compute all Schatten norms with respect to the bandwidth $\ell=5$. Since we use multiple ($20$) channels for the first convolution, we compute the linearization by computing the linearization for each channel separately, and then reporting the average Schatten norms over all channels. We note that this also departs from our theory towards a more realistic scenario, and we only provide these empirical observations for multi-channel networks.

The architecture is trained via stochastic gradient descent on the \emph{cross-entropy loss}, as the exponential loss did not result in learning the training data. Although this further departs from the theory, other implicit regularization works have studied the cross-entropy and exponential losses simultaneously, suggesting that the same results may carry over. Indeed, it is promising that the Schatten norm regularization occurs with a different loss function in the nonlinear case. For comparison architectures, we use a fully connected architecture and a 2D $CNN$ (applied to each discretization index of $SO(3)$). All architectures have as input a fixed, shared $S_2$ convolution and ReLU layer, which we conceptualize as a fixed, non-linear feature embedding. The reason for this is as follows: to apply the representation theory of $SO(3)$, we need inputs and linearization which are defined on $SO(3)$, not the homogeneous space $S_2$ (the sphere). As the output of an $S_2$ convolution is defined on $SO(3)$, the fixed feature embedding serves this purpose. 

\subsection{Omitted real-space plots}


\begin{figure}[h!]
    \centering
    \includegraphics[width=0.65\linewidth]{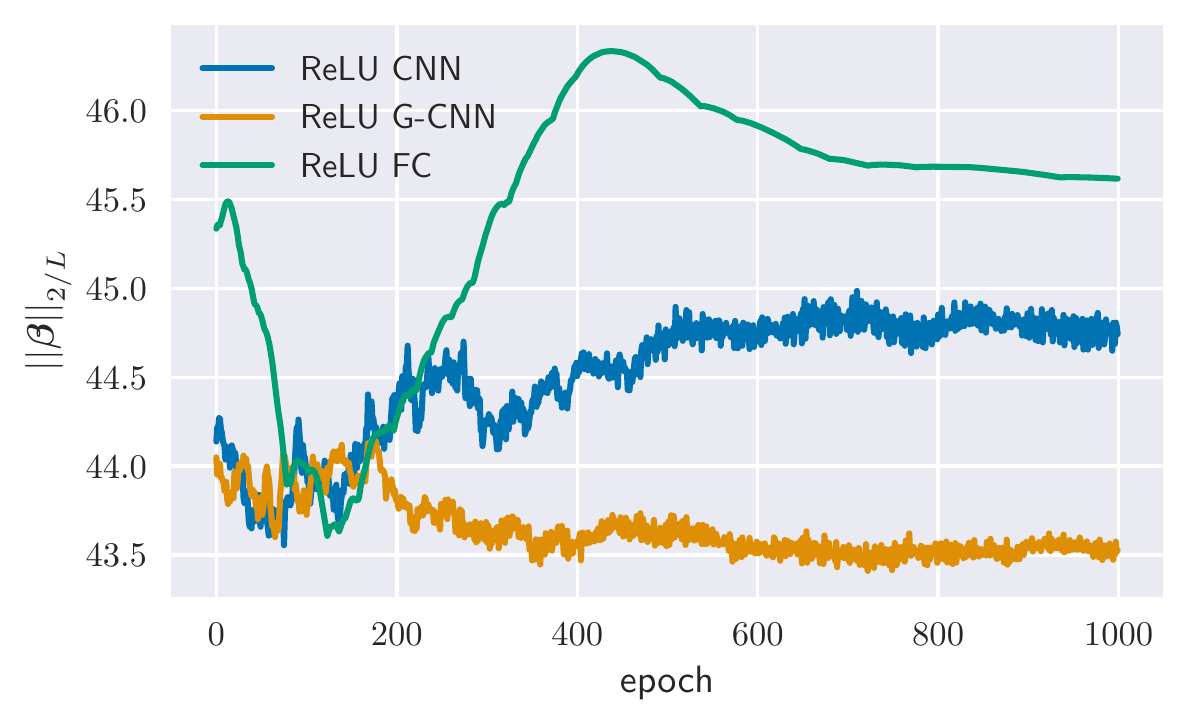}
    \caption{Real-space norm for $D_{60}$ with ReLU networks, 10 Gaussian training points. See \autoref{fig:relu_figb_d60} for comparable plot of norms in Fourier space.}
    \label{fig:loss_relu_d60_gaussian_10_sep_real_space}
\end{figure}

\subsection{Loss plots}
\begin{figure}[h!]
    \centering
    \includegraphics[width=0.65\linewidth]{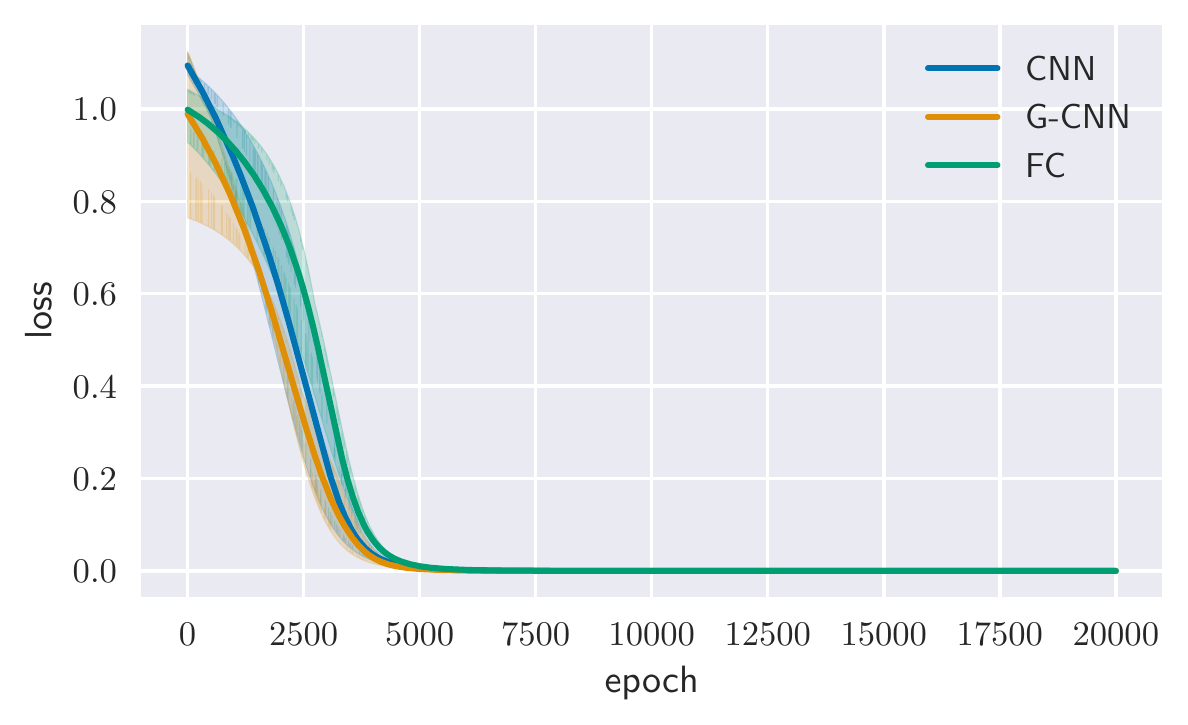}
    \caption{Loss trajectory for the setting of $D_8$ (see \autoref{fig:d8_gaussian_2}). Networks are trained on 2 Fourier i.i.d. data points.} 
    \label{fig:loss_d8_gaussian}
\end{figure}

\begin{figure}[h!]
    \centering
    \includegraphics[width=0.65\linewidth]{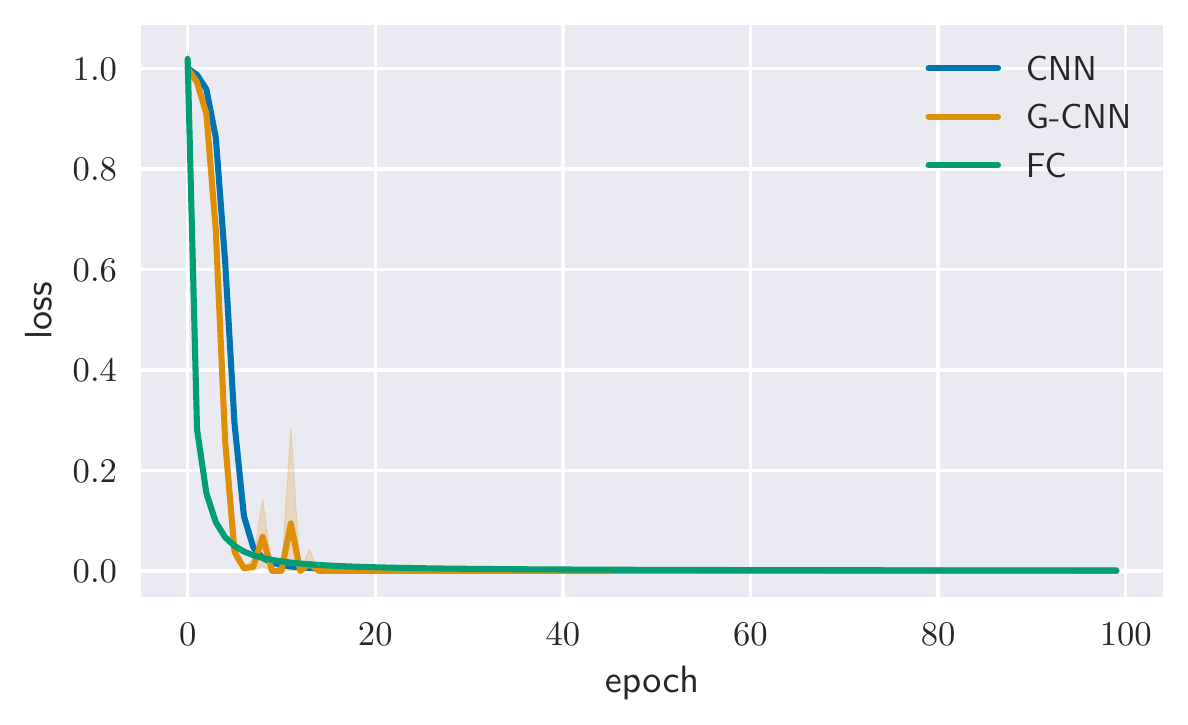}
    \caption{Loss trajectory for the setting of $C_{10}\times C_{10} \times C_2$ (see \autoref{fig:c10c10c2}). Networks are trained on 6 Gaussian i.i.d. data points.}
    \label{fig:loss_c10c10c2}
\end{figure}

\begin{figure}[h!]
    \centering
    \includegraphics[width=0.65\linewidth]{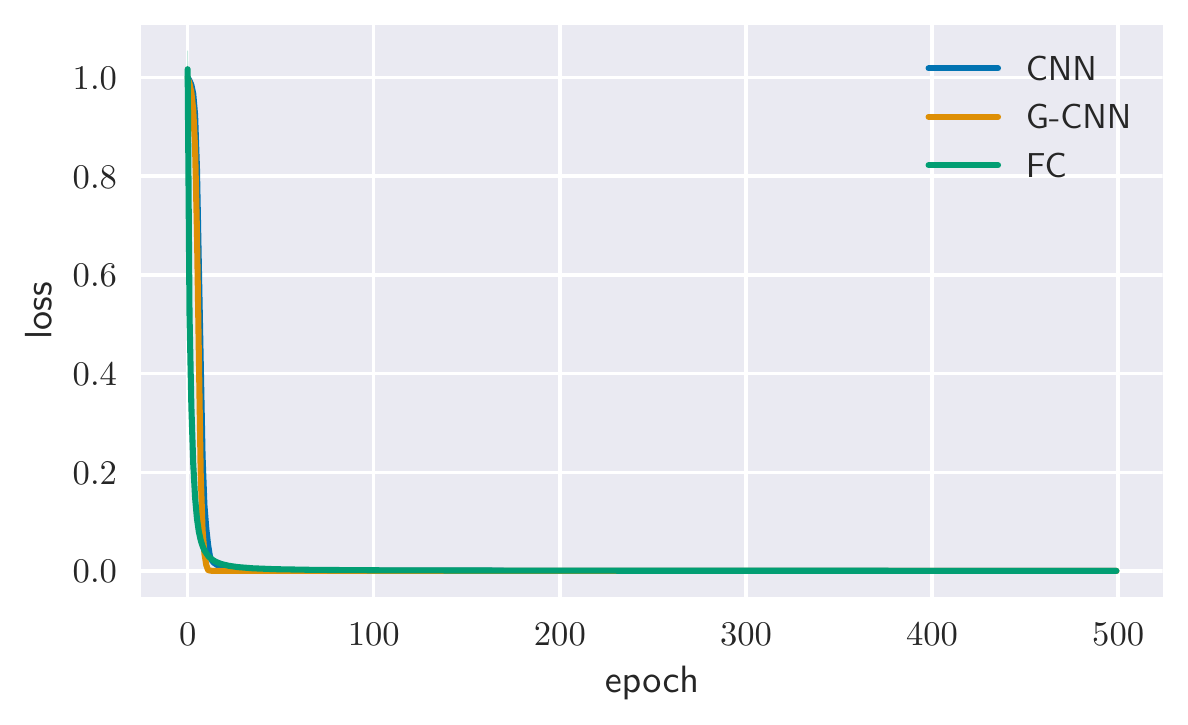}
    \caption{Loss trajectory for the setting of $\left(C_5\times C_5\right) \rtimes Q_8$ (see \autoref{fig:ccq}). Networks are trained on 10 Gaussian i.i.d. data points.}
    \label{fig:loss_small_boi_1}
\end{figure}

\begin{figure}[h!]
    \centering
    \includegraphics[width=0.65\linewidth]{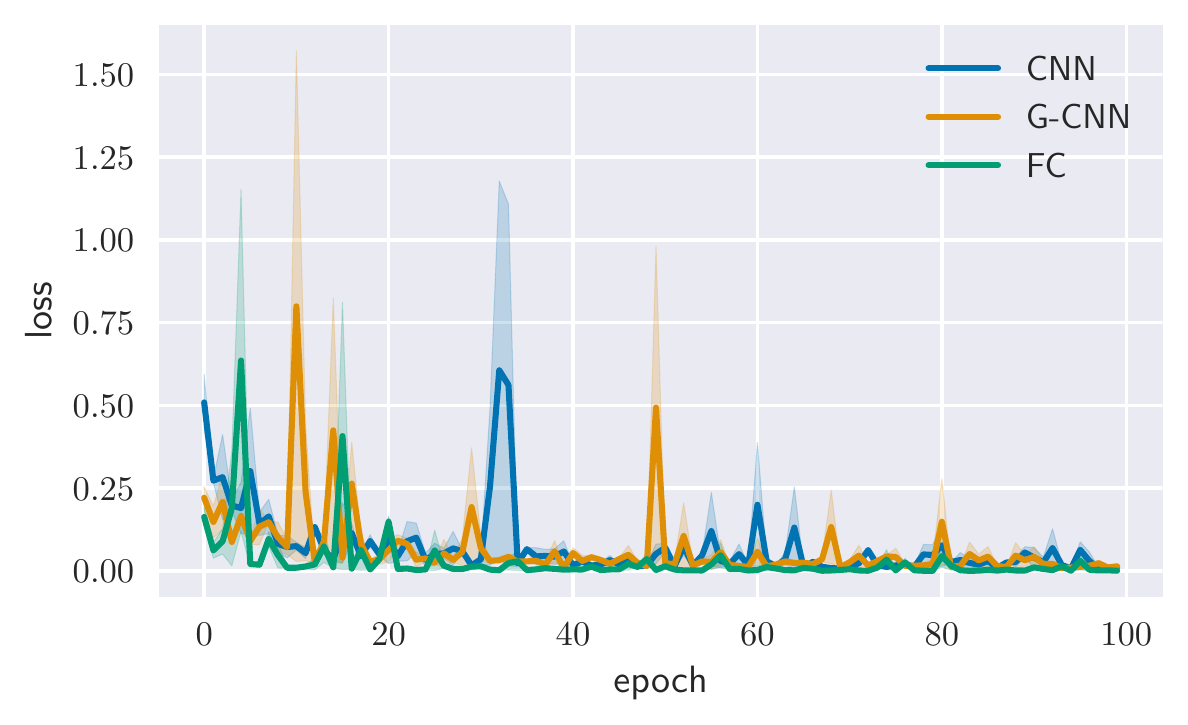}
    \caption{Loss trajectory for the setting of $\left(C_{28}\times C_{28}\right) \times D_8$ (see \autoref{fig:MNIST_linear}). Networks are trained on the digits 1 and 5 from MNIST.}
    \label{fig:loss_MNIST_linear}
\end{figure}


\begin{figure}[h!]
    \centering
    \includegraphics[width=0.65\linewidth]{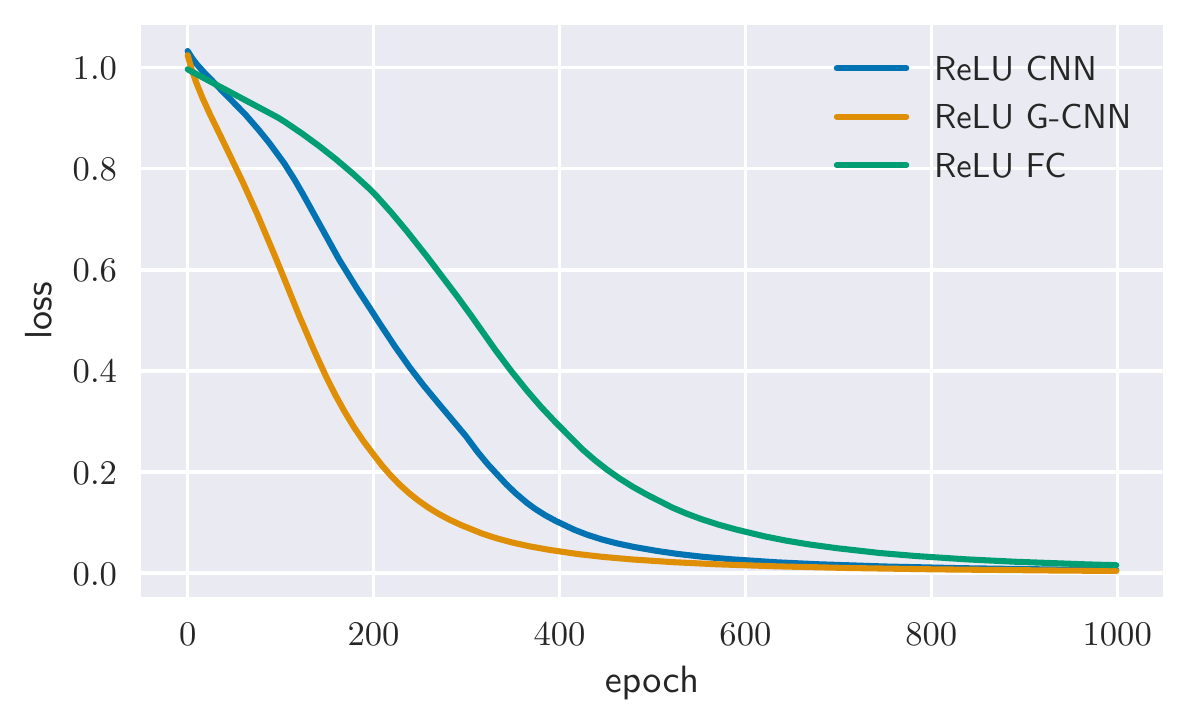}
    \caption{Loss trajectory for the setting of $D_{60}$ (see \autoref{fig:relu_figb_d60}). Networks are nonlinear with ReLU activations and trained on 10 distinct frequencies as data points.}
    \label{fig:loss_small_boi_3}
\end{figure}

\begin{figure}[h!]
    \centering
    \includegraphics[width=0.65\linewidth]{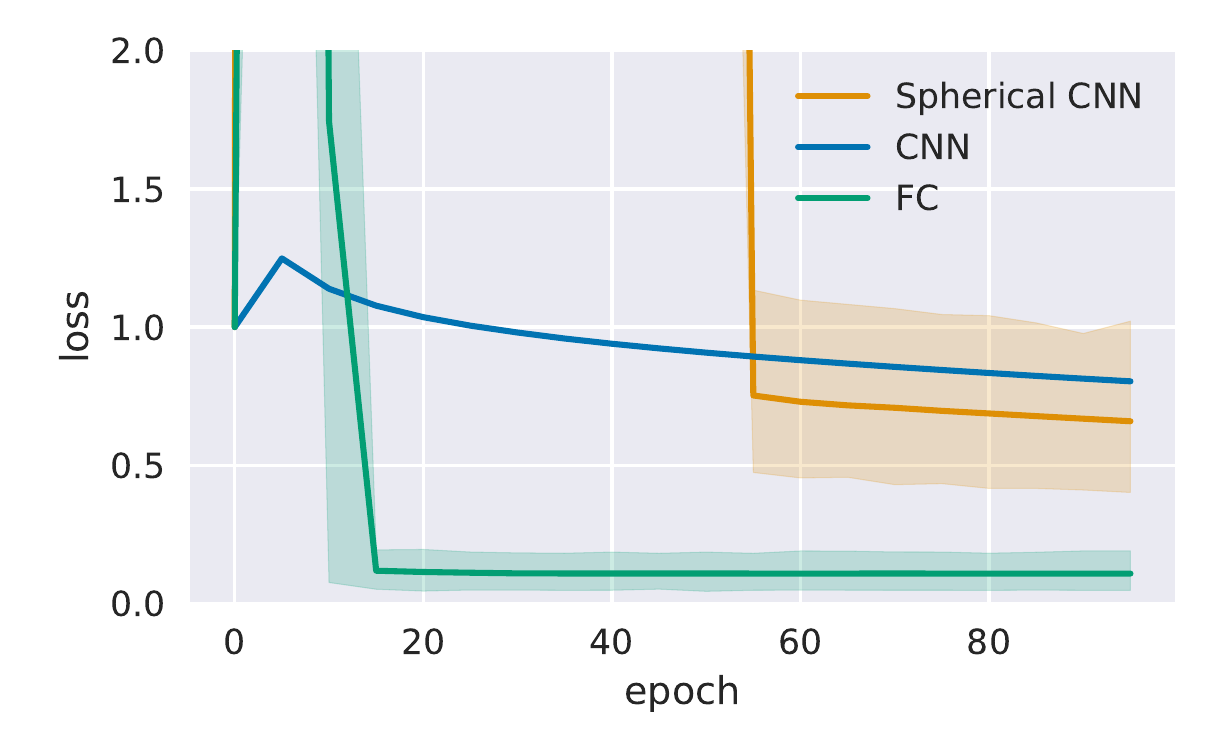}
    \caption{Loss trajectory for the setting of bandlimited $SO(3)$. Networks are nonlinear with ReLU activations and trained on a subset of spherical MNIST (digits $0$ and $5$).}
    \label{fig:loss_spherical}
\end{figure}

\end{document}